\newcommand{\cmark}{\ding{51}}%
\newcommand{\xmark}{\ding{55}}%
\definecolor{lightgray}{gray}{0.4}
\definecolor{verylightgray}{gray}{0.7}
\definecolor{veryverylightgray}{gray}{0.9}
\definecolor{darkgreen}{rgb}{0, 0.5, 0}
\def\eqref#1{equation~\ref{#1}}
\def\1{\bm{1}}
\def\vs{{\bm{s}}}
\def\vx{{\bm{x}}}
\def\vz{{\bm{z}}}
\DeclareMathAlphabet{\mathsfit}{\encodingdefault}{\sfdefault}{m}{sl}
\SetMathAlphabet{\mathsfit}{bold}{\encodingdefault}{\sfdefault}{bx}{n}
\newcommand{\bvec}[1]{\displaystyle {#1}}
\newtheorem{thm}{Theorem}
\newtheorem{pro}{Proposition}
\newtheorem{lma}{Lemma}
\newcommand{\beginsupplement}{%
	\setcounter{table}{0}
	\renewcommand{\thetable}{S\arabic{table}}%
	\setcounter{figure}{0}
	\renewcommand{\thefigure}{S\arabic{figure}}%
	\setcounter{section}{0}
	\renewcommand{\thepage}{S\arabic{page}} 
	\renewcommand{\thesection}{S\arabic{section}}  
	\setcounter{equation}{0}
	\renewcommand{\theequation}{S\arabic{equation}}
}
\title{Progressive Augmentation of GANs}
\author{%
	Dan~Zhang\\
	Bosch Center for Artificial Intelligence\\
	\texttt{dan.zhang2@bosch.com} \\
	\And
	Anna~Khoreva \\
	Bosch Center for Artificial Intelligence\\
	\texttt{anna.khoreva@bosch.com} \\
}
\begin{document}

\maketitle

\begin{abstract}
	Training of Generative Adversarial Networks (GANs) is notoriously fragile, 
requiring to maintain a careful balance between the generator and the discriminator in order to perform well. %
To mitigate this issue we introduce a new regularization technique -  \emph{progressive augmentation of GANs (PA-GAN)}. %
The key idea is to gradually increase the task difficulty of the discriminator by progressively augmenting its input or feature space, thus enabling continuous learning of the generator. %
We show that the proposed progressive augmentation preserves the original GAN objective, does not compromise the discriminator's optimality and encourages a healthy competition between the generator and discriminator, leading to the better-performing generator. We experimentally demonstrate the effectiveness of PA-GAN across different architectures and 
on multiple benchmarks for the image synthesis task, on average achieving $\sim 3$ point improvement of the FID score.

\end{abstract}

\section{\label{sec:Introduction}Introduction}
Generative Adversarial Networks (GANs)~\cite{goodfellow2014generative} are a recent development in the field of deep learning, that have attracted a lot of attention in the research community~\cite{Radford2016UnsupervisedRL,SalimansNIPS2016,Arjovsky2017WGAN,karras2018progressive}. The GAN framework can be formulated as a competing game between the generator and the discriminator. Since both the generator and the discriminator are typically parameterized as deep convolutional neural networks with millions of parameters, optimization is notoriously difficult in practice~\cite{Arjovsky2017WGAN,gulrajani_NIPS2017,miyato2018spectral}.

The difficulty lies in maintaining a healthy competition between the generator and discriminator. A commonly occurring problem arises when the discriminator overshoots, leading to escalated gradients and oscillatory GAN behaviour~\cite{MeschederICML2018, Brock2019}. %
Moreover, the supports of the data and model distributions typically lie on low dimensional manifolds and are often disjoint~\cite{Arjovsky2017TowardsPM}. Consequently, there exists a nearly trivial discriminator that can perfectly distinguish real data samples from synthetic ones. Once such a discriminator is produced, its loss quickly converges to zero and the gradients used for updating parameters of the generator become useless. For improving the training stability of GANs regularization techniques~\cite{Roth_NIPS2017,gulrajani_NIPS2017} can be used to constrain the learning of the discriminator. But as shown in~\cite{Brock2019,Kurach2018GANlandscape} they also impair the generator and lead to the performance degradation.

In this work we introduce a new regularization technique to alleviate this problem  - \emph{progressive augmentation of GANs (PA-GAN)} - that helps to control the behaviour of the discriminator and thus improve the overall training.\footnote{\url{https://github.com/boschresearch/PA-GAN}} The key idea is to progressively augment the input of the discriminator network or its intermediate feature layers with auxiliary random bits in order to gradually increase the discrimination task difficulty (see Fig.~\ref{fig:Data_augmentation}). In doing so, the discriminator can be prevented from becoming over-confident, enabling continuous learning of the generator. %
As opposed to standard augmentation techniques (e.g. rotation, cropping, resizing), the proposed progressive augmentation does not directly modify the data samples or their features, but rather structurally appends to them. Moreover, it can also alter the input class. For instance, in the single-level augmentation the data sample or its features $\bvec{\vx}$ are combined with a random bit $s$ and both are provided to the discriminator. The class of the augmented sample $(\bvec{\vx},s)$ is then set based on the combination $\bvec{\vx}$ with $s$, resulting in real and synthetic samples contained in both classes, see Fig.~\ref{fig:Data_augmentation}-(a). %
This presents a more challenging task for the discriminator, as it needs to tell the real and synthetic samples apart plus additionally learn how to separate $(\bvec{\vx},s)$ back into $\bvec{\vx}$ and $s$ and understand the association rule. We can further increase the task difficulty of the discriminator by progressively augmenting its input or feature space, gradually increasing the number of random bits during the course of training as depicted in Fig.~\ref{fig:Data_augmentation}-(b).

We prove that PA-GAN preserves the original GAN objective and, in contrast to prior work~\cite{Arjovsky2017TowardsPM,Sonderby2016AmortisedMI,SalimansNIPS2016}, does not bias the optimality of the discriminator (see Sec.~\ref{sec:PAGAN}). Aiming at minimum changes we further propose an integration of PA-GAN into existing GAN architectures (see Sec.~\ref{subsec:implement}) and experimentally showcase its benefits (see Sec.~\ref{sec:PA-diff}). %
Structurally augmenting the input or its features and mapping them to higher dimensions not only challenges the discrimination task, but, in addition, with each realization of the random bits alters the loss function landscape, potentially providing a different path for the generator to approach the data distribution.

Our technique is orthogonal to existing work, it can be successfully employed with other regularization strategies~\cite{Roth_NIPS2017,gulrajani_NIPS2017,SalimansNIPS2016,JMLR:v15:srivastava14a,ChenSS2019} and different network architectures \cite{miyato2018spectral,Zhang_SAGAN18}, which we demonstrate in Sec.~\ref{subsec:Exp-Regulariz}. We experimentally show the effectiveness of PA-GAN for unsupervised image generation tasks on multiple benchmarks (Fashion-MNIST~\cite{xiao2017}, CIFAR10 \cite{Cifar10_Krizhevsky09learningmultiple}, CELEBA-HQ~\cite{karras2018progressive}, and Tiny-ImageNet~\cite{imagenet_cvpr09}), on average improving the FID score around $3$ points. For PA combination with SS-GAN~\cite{ChenSS2019} we achieve the best FID of $14.7$ for the unsupervised setting on CIFAR10, which is on par with the results achieved by large scale BigGAN training~\cite{Brock2019} using label supervision.

\begin{figure}[t!]%
	\vspace{-1em}
	\centering
	\begin{subfigure}{0.4\textwidth}
			\centering
\includegraphics[width=\linewidth]{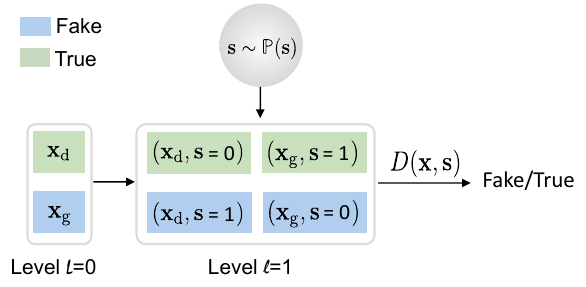} \caption{Discriminator task}
	\end{subfigure}
\begin{subfigure}{0.45\textwidth}
		\centering
\includegraphics[width=\linewidth]{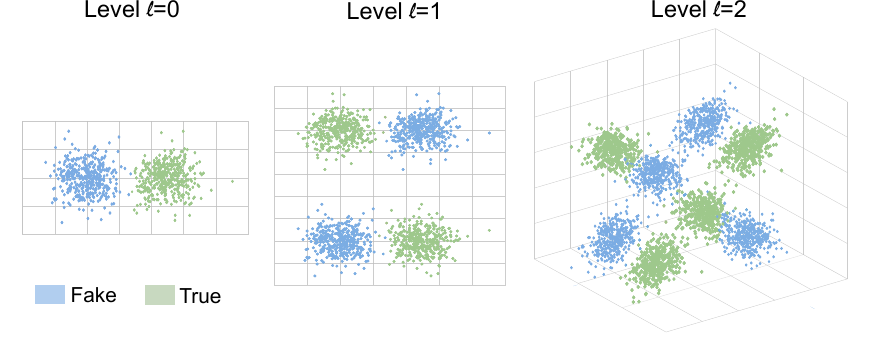} \caption{Input space augmentation}
\end{subfigure}
	\caption{\label{fig:Data_augmentation}
		Visualization of progressive augmentation.
		At level $l=0$ (no augmentation) the discriminator $D$ aims at classifying the samples $\bvec{\vx}_{\mathrm{d}}$ and  $\bvec{\vx}_{\mathrm{g}}$, respectively drawn from the data $\mathbb{P}_{\mathrm{d}}$ and generative model $\mathbb{P}_{\mathrm{g}}$ distributions, into true (green) and fake (blue). At single-level augmentation ($l=1$) the class of the augmented sample is set based on the combination $\bvec{\vx}_{\mathrm{d}}$ and $\bvec{\vx}_{\mathrm{g}}$ with $s$, resulting in real and synthetic samples contained in both classes and leading to a harder task for $D$.
		With each extra augmentation level ($l \rightarrow l+1$) the decision boundary between two classes becomes more complex and the discrimination task difficulty gradually increases. This prevents the discriminator from easily solving the task and thus leads to meaningful gradients for the generator updates.}
		\vspace{-1em}
\end{figure}

\section{\label{sec:Related}Related Work}	

Many recent works have focused on improving the stability of GAN training and the overall visual quality of generated samples \cite{Roth_NIPS2017,miyato2018spectral,Zhang_SAGAN18, Brock2019}.
The unstable behaviour of GANs is partly attributed to a dimensional mismatch or non-overlapping support between the real data and the generative model distributions \cite{Arjovsky2017TowardsPM}, 
resulting in an almost trivial task for the discriminator. Once the performance of the discriminator is maxed out, it provides a non-informative signal to train the generator. To avoid vanishing gradients, the original GAN paper \cite{goodfellow2014generative} proposed to modify the min-max based GAN objective to a non-saturating loss.
However, even with such a re-formulation the generator updates tend to get worse over the course of training and optimization becomes massively unstable \cite{Arjovsky2017TowardsPM}.

Prior approaches tried to mitigate this issue by using heuristics to weaken the discriminator, e.g. decreasing its learning rate, adding label noise or directly modifying the data samples.
\cite{SalimansNIPS2016} proposed a one-sided label smoothing to smoothen the classification boundary of the discriminator, thereby preventing it from being overly confident, but at the same time biasing its optimality. %
\cite{Arjovsky2017TowardsPM,Sonderby2016AmortisedMI} tried to ensure a joint support of the data and model distributions to make the job of the discriminator harder by adding Gaussian noise to both generated and real samples. %
However, adding high-dimensional noise introduces significant variance in the parameter estimation, slowing down the training and requiring multiple samples for counteraction \cite{Roth_NIPS2017}. %
Similarly, \cite{SajParMehSch18} proposed to blur the input samples and gradually remove the blurring effect during the course of training. These techniques perform direct modifications on the data samples.

Alternatively, several works focused on regularizing the discriminator.
\cite{gulrajani_NIPS2017} proposed to add a soft penalty on the gradient norm which ensures a 1-Lipschitz discriminator. %
Similarly, \cite{Roth_NIPS2017} added a zero-centered penalty on the weighted gradient-norm of the discriminator, showing its equivalence to adding input noise.
On the downside, regularizing the discriminator with the gradient penalty depends on the model distribution, which changes during training, and results in increased runtime due to additional gradient norm computation \cite{Kurach2018GANlandscape}. Most recently, \cite{Brock2019} also experimentally showed that the gradient penalty may lead to the performance degradation, which corresponds to our observations as well (see Sec. \ref{subsec:Exp-Regulariz}) 
In addition to the gradient penalty, \cite{Brock2019} also exploited the dropout regularization \cite{JMLR:v15:srivastava14a} on the final layer of the discriminator and reported its similar stabilizing effect.
\cite{miyato2018spectral} proposed another way to stabilize the discriminator by normalizing its weights and limiting the spectral norm of each layer to constrain the Lipschitz constant. This normalization technique does not require intensive tuning of hyper-parameters and is computationally light.
Moreover, \cite{Zhang_SAGAN18} showed that spectral normalization is also beneficial for the generator, preventing the escalation of parameter magnitudes and avoiding unusual gradients. 

Several methods have proposed to modify the GAN training methodology in order to further improve stability, e.g. by considering multiple discriminators \cite{Durugkar2016GenerativeMN}, growing both the generator and discriminator networks progressively \cite{karras2018progressive} or exploiting different learning rates for the discriminator and generator \cite{heuselttur2017}. %
Another line of work resorts to objective function reformulation, e.g. by using the Pearson $\chi^2$ divergence~\cite{MaoLXLW16}, the Wasserstein distance \cite{Arjovsky2017WGAN}, or f-divergence \cite{Nowozin2016fGANTG}.

In this work we introduce a novel and orthogonal way of regularizing GANs by progressively increasing the discriminator task difficulty. %
In contrast to other techniques, our method does not bias the optimality of the discriminator or alter the training samples.
Furthermore, the proposed augmentation is complementary to prior work. It can be employed with different GAN architectures and combined with other regularization techniques (see Sec. \ref{sec:Experiments}).

\section{Progressive Augmentation of GANs} \label{sec:PAGAN-all}
\subsection{Theoretical Framework of PA-GAN} \label{sec:PAGAN}
The core idea behind the GAN training~\cite{goodfellow2014generative} is to set up a competing game between two players, commonly termed discriminator and generator. The discriminator aims at distinguishing the samples $\bvec{\vx}\in\mathcal{X}$ respectively drawn from the data distribution $\mathbb{P}_{\mathrm{d}}$ and generative model distribution $\mathbb{P}_{\mathrm{g}}$, i.e. performing binary classification $D:\mathcal{X}\mapsto [0,1]$. 
\footnote{$D(x)$ aims to learn the probability of $\bvec{\vx}$ being true or fake, however, it can also be regarded as the sigmoid response of classification with cross entropy loss.} 
The aim of the generator, on the other hand, is to make synthetic samples into data samples, challenging the discriminator. In this work, $\mathcal{X}$ represents a compact metric space such as the image space $[-1,1]^N$ of dimension $N$. Both $\mathbb{P}_{\mathrm{d}}$ and $\mathbb{P}_{\mathrm{g}}$ are defined on $\mathcal{X}$. The model distribution $\mathbb{P}_{\mathrm{g}}$ is induced by a function $G$ that maps a random vector $\bvec{\vz}\sim\mathbb{P}_{\mathrm{z}}$ to a synthetic data sample, i.e. $\bvec{\vx}_{\mathrm{g}}=G(\bvec{\vz})\in\mathcal{X}$. Mathematically, the two-player game is formulated as
\begin{align}
\min_G \max_D 
\mathbb{E}_{\mathbb{P}_{\mathrm{d}}}\left\{\log \left[D(\bvec{\vx})\right]\right\}   + \mathbb{E}_{\mathbb{P}_{\mathrm{g}}}\left\{\log\left[1-D(\bvec{\vx})\right]\right\}.\label{Mmin-max}
\end{align}
As being proved by~\cite{goodfellow2014generative}, the inner maximum equals the Jensen-Shannon (JS) divergence between $\mathbb{P}_{\mathrm{d}}$ and $\mathbb{P}_{\mathrm{g}}$, i.e., $D_{\mathrm{JS}}\left(\mathbb{P}_{\mathrm{d}}\Vert\mathbb{P}_{\mathrm{g}} \right)$. Therefore, the GAN training attempts to minimize the JS divergence between the model and data distributions.

\begin{lma}\label{Mlma1}
Let $s\in\{0,1\}$ denote a random bit with uniform distribution $\mathbb{P}_{\mathrm{s}}(s)=\frac{\delta[s]+\delta[s-1]}{2}$, where $\delta[s]$ is the Kronecker delta. Associating $s$ with $\bvec{\vx}$, two joint distributions of $(\bvec{\vx}, s)$ are constructed as 
\begin{align}
\mathbb{P}_{\mathrm{x},\mathrm{s}}(\bvec{\vx},s) \stackrel{\Delta}{=} \frac{\mathbb{P}_{\mathrm{d}}(\bvec{\vx})\delta[s]+\mathbb{P}_{\mathrm{g}}(\bvec{\vx})\delta[s-1]}{2} ,\quad \mathbb{Q}_{\mathrm{x},\mathrm{s}}(\bvec{\vx},s) \stackrel{\Delta}{=} \frac{\mathbb{P}_{\mathrm{g}}(\bvec{\vx})\delta[s]+\mathbb{P}_{\mathrm{d}}(\bvec{\vx})\delta[s-1]}{2}.\label{MPQ0}
\end{align}
Their JS divergence is equal to 
\begin{align}
D_{\mathrm{JS}}\left(\mathbb{P}_{\mathrm{x},\mathrm{s}}\Vert\mathbb{Q}_{\mathrm{x},\mathrm{s}} \right)=D_{\mathrm{JS}}\left(\mathbb{P}_{\mathrm{d}}\Vert\mathbb{P}_{\mathrm{g}} \right).\label{eqjoint}
\end{align}
Taking (\ref{MPQ0}) as the starting point and with $\bvec{\vs}_{l}$ being a sequence of i.i.d. random bits of length $l$, the recursion of constructing the paired joint distributions of $(\bvec{\vx},\bvec{\vs}_{l})$ 
\begin{align}
\begin{array}{ll}
\mathbb{P}_{\mathrm{x},\mathbf{s}_{l}}(\bvec{\vx},\bvec{\vs}_{l})  \stackrel{\Delta}{=} {\mathbb{P}_{\mathrm{x},\mathbf{s}_{l-1}}(\bvec{\vx},\bvec{\vs}_{l-1})\delta[s_l]/2+\mathbb{Q}_{\mathrm{x},\mathbf{s}_{l-1}}(\bvec{\vx},\bvec{\vs}_{l-1})\delta[s_l-1]/2}\\
\mathbb{Q}_{\mathrm{x},\mathbf{s}_{l}}(\bvec{\vx},\bvec{\vs}_{l}) \stackrel{\Delta}{=} {\mathbb{Q}_{\mathrm{x},\mathbf{s}_{l-1}}(\bvec{\vx},\bvec{\vs}_{l-1})\delta[s_l]/2+\mathbb{P}_{\mathrm{x},\mathbf{s}_{l-1}}(\bvec{\vx},\bvec{\vs}_{l-1})\delta[s_l-1]/2}
\end{array}\label{MPQL}
\end{align}
results into a series of JS divergence equalities for $l=1,2,\dots,L$, i.e.,
\begin{align}
 D_{\mathrm{JS}}\left(\mathbb{P}_{\mathrm{d}}\Vert\mathbb{P}_{\mathrm{g}} \right)
=D_{\mathrm{JS}}\left(\mathbb{P}_{\mathrm{x},\mathbf{s}_1}\Vert \mathbb{Q}_{\mathrm{x},\mathbf{s}_1} \right)=\cdots=
D_{\mathrm{JS}}\left(\mathbb{P}_{\mathrm{x},\mathbf{s}_L}\Vert \mathbb{Q}_{\mathrm{x},\mathbf{s}_L} \right)
\label{eqjoint2}.
\end{align}
\end{lma}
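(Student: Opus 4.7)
The plan is to first establish the base case \eqref{eqjoint} by a direct expansion of the JS divergence and then leverage the self-similar form of the recursion \eqref{MPQL} to iterate the same argument.

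For the base case, I would write the JS divergence as $D_{\mathrm{JS}}(\mathbb{P}_{\mathrm{x},\mathrm{s}} \Vert \mathbb{Q}_{\mathrm{x},\mathrm{s}}) = \tfrac{1}{2}D_{\mathrm{KL}}(\mathbb{P}_{\mathrm{x},\mathrm{s}}\Vert M_{\mathrm{x},\mathrm{s}}) + \tfrac{1}{2}D_{\mathrm{KL}}(\mathbb{Q}_{\mathrm{x},\mathrm{s}}\Vert M_{\mathrm{x},\mathrm{s}})$, where $M_{\mathrm{x},\mathrm{s}}=(\mathbb{P}_{\mathrm{x},\mathrm{s}}+\mathbb{Q}_{\mathrm{x},\mathrm{s}})/2$. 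Substituting the definitions in \eqref{MPQ0}, the mixture factorises as $M_{\mathrm{x},\mathrm{s}}(\bvec{\vx},s)=\tfrac{1}{2}(\mathbb{P}_{\mathrm{d}}(\bvec{\vx})+\mathbb{P}_{\mathrm{g}}(\bvec{\vx}))\cdot \tfrac{1}{2}(\delta[s]+\delta[s-1]) = M(\bvec{\vx})\,\mathbb{P}_{\mathrm{s}}(s)$ with $M=(\mathbb{P}_{\mathrm{d}}+\mathbb{P}_{\mathrm{g}})/2$. Because the Kronecker deltas in $\mathbb{P}_{\mathrm{x},\mathrm{s}}$ pick out the two values of $s$ independently, the sum over $s$ splits into two integrals: one involving $\log[\mathbb{P}_{\mathrm{d}}/M]$ with mass $\tfrac{1}{2}$ and one involving $\log[\mathbb{P}_{\mathrm{g}}/M]$ with mass $\tfrac{1}{2}$. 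These combine to give $D_{\mathrm{KL}}(\mathbb{P}_{\mathrm{x},\mathrm{s}}\Vert M_{\mathrm{x},\mathrm{s}}) = D_{\mathrm{JS}}(\mathbb{P}_{\mathrm{d}}\Vert\mathbb{P}_{\mathrm{g}})$, and by the symmetry $\mathbb{P}_{\mathrm{x},\mathrm{s}}\leftrightarrow\mathbb{Q}_{\mathrm{x},\mathrm{s}}$ the same holds for the second KL term. Averaging yields \eqref{eqjoint}.

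For the recursion, I would argue by induction on $l$. The crucial observation is that the map $(\mathbb{P}_{\mathrm{x},\mathbf{s}_{l-1}},\mathbb{Q}_{\mathrm{x},\mathbf{s}_{l-1}})\mapsto(\mathbb{P}_{\mathrm{x},\mathbf{s}_{l}},\mathbb{Q}_{\mathrm{x},\mathbf{s}_{l}})$ defined by \eqref{MPQL} has exactly the same algebraic form as the map $(\mathbb{P}_{\mathrm{d}},\mathbb{P}_{\mathrm{g}})\mapsto(\mathbb{P}_{\mathrm{x},\mathrm{s}},\mathbb{Q}_{\mathrm{x},\mathrm{s}})$ in \eqref{MPQ0}, merely on a larger product space. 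Therefore the base case computation applies verbatim (with $\bvec{\vx}$ replaced by $(\bvec{\vx},\bvec{\vs}_{l-1})$ and the new bit $s_l$ playing the role of $s$) and gives $D_{\mathrm{JS}}(\mathbb{P}_{\mathrm{x},\mathbf{s}_l}\Vert\mathbb{Q}_{\mathrm{x},\mathbf{s}_l}) = D_{\mathrm{JS}}(\mathbb{P}_{\mathrm{x},\mathbf{s}_{l-1}}\Vert\mathbb{Q}_{\mathrm{x},\mathbf{s}_{l-1}})$. Chaining these equalities down to $l=0$ delivers \eqref{eqjoint2}.

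I expect no serious obstacle; the only real care is bookkeeping with the Kronecker deltas so that the KL integrand $\mathbb{P}_{\mathrm{x},\mathrm{s}}\log(\mathbb{P}_{\mathrm{x},\mathrm{s}}/M_{\mathrm{x},\mathrm{s}})$ is interpreted correctly (the ratio is taken on the support, and the $\tfrac{1}{2}$ factors from the mixture cancel between numerator and denominator, leaving $\log(\mathbb{P}_{\mathrm{d}}/M)$ or $\log(\mathbb{P}_{\mathrm{g}}/M)$). Recognising the self-similarity of the recursion is what allows the full statement \eqref{eqjoint2} to reduce to a single computation followed by induction rather than a separate analysis at each level $l$.
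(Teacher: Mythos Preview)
Your proposal is correct and matches the paper's own proof essentially step for step: the paper likewise observes that the mixture $(\mathbb{P}_{\mathrm{x},\mathrm{s}}+\mathbb{Q}_{\mathrm{x},\mathrm{s}})/2$ factorises as $\mathbb{P}_{\mathrm{m}}(\bvec{\vx})\,\mathbb{P}_{\mathrm{s}}(s)$, shows each KL term equals $D_{\mathrm{JS}}(\mathbb{P}_{\mathrm{d}}\Vert\mathbb{P}_{\mathrm{g}})$ (phrasing this as the mutual information $I(\bvec{\vx};s)$, which is exactly your $D_{\mathrm{KL}}(\mathbb{P}_{\mathrm{x},\mathrm{s}}\Vert M_{\mathrm{x},\mathrm{s}})$), and then iterates via the self-similar recursion. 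The only cosmetic difference is that the paper routes the base case through the mutual-information identity before recognising it as the JS definition, whereas you go directly; the computations are identical.
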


\begin{thm}\label{Mthm}
The min-max optimization problem of GANs~\cite{goodfellow2014generative} as given in (\ref{Mmin-max}) is equivalent to
\begin{align}
\min_G \max_{D} 
\mathbb{E}_{\mathbb{P}_{\mathrm{x},\mathbf{s}_{l}}}\left\{\log \left[D(\bvec{\vx},\bvec{\vs}_l)\right]\right\}   + \mathbb{E}_{\mathbb{Q}_{\mathrm{x},\mathbf{s}_{l}}}\left\{\log\left[1-D(\bvec{\vx},\bvec{\vs}_l)\right]\right\} \quad\forall l\in\{1,2,\dots,L\},\label{Mmin-max2}
\end{align}
where the two joint distributions, i.e., $\mathbb{P}_{\mathrm{x},\mathbf{s}_{l}}$ and $\mathbb{Q}_{\mathrm{x},\mathbf{s}_{l}}$, are defined in (\ref{MPQL}) and the function $D$ maps $(\bvec{\vx},\bvec{\vs}_l)\in\mathcal{X}\times \{0,1\}^l$ onto $[0,1]$. For a fixed $G$, the optimal $D$ is
\begin{align}
D^*(\bvec{\vx},\bvec{\vs}_l)=\frac{\mathbb{P}_{\mathrm{x},\mathbf{s}_{l}}(\bvec{\vx},\bvec{\vs}_l)}{\mathbb{P}_{\mathrm{x},\mathbf{s}_{l}}(\bvec{\vx},\bvec{\vs}_l)+\mathbb{Q}_{\mathrm{x},\mathbf{s}_{l}}(\bvec{\vx},\bvec{\vs}_l)}=\frac{\mathbb{P}_{\mathrm{d}}(\bvec{\vx})}{\mathbb{P}_{\mathrm{d}}(\bvec{\vx})+\mathbb{Q}_{\mathrm{d}}(\bvec{\vx})},
\end{align}
whereas the attained inner maximum equals $D_{\mathrm{JS}}\left(\mathbb{P}_{\mathrm{x},\mathbf{s}_l}\Vert \mathbb{Q}_{\mathrm{x},\mathbf{s}_l}\right)= D_{\mathrm{JS}}\left(\mathbb{P}_{\mathrm{d}}\Vert\mathbb{P}_{\mathrm{g}} \right)$ for $l=1,2,\dots,L$.
\end{thm}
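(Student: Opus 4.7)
The plan is to reduce Theorem~\ref{Mthm} to the classical inner-maximization argument of~\cite{goodfellow2014generative} applied on the product space $\mathcal{X}\times\{0,1\}^l$, with Lemma~\ref{Mlma1} supplying the final bridge back to $D_{\mathrm{JS}}(\mathbb{P}_{\mathrm{d}}\Vert\mathbb{P}_{\mathrm{g}})$.

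First, for a fixed generator $G$, the inner objective in (\ref{Mmin-max2}) has exactly the binary cross-entropy form $\mathbb{E}_{\mathbb{P}}\log D + \mathbb{E}_{\mathbb{Q}}\log(1-D)$ between two probability distributions $\mathbb{P}_{\mathrm{x},\mathbf{s}_l}$ and $\mathbb{Q}_{\mathrm{x},\mathbf{s}_l}$ on the enlarged measurable space $\mathcal{X}\times\{0,1\}^l$. Since $\{0,1\}^l$ is finite, measurability of $D:\mathcal{X}\times\{0,1\}^l\to[0,1]$ reduces to measurability in $\bvec{\vx}$ for each fixed $\bvec{\vs}_l$, so the standard pointwise maximization of $p\log y + q\log(1-y)$ on $y\in[0,1]$ carries over verbatim and yields the optimal discriminator
\[
D^*(\bvec{\vx},\bvec{\vs}_l) \;=\; \frac{\mathbb{P}_{\mathrm{x},\mathbf{s}_l}(\bvec{\vx},\bvec{\vs}_l)}{\mathbb{P}_{\mathrm{x},\mathbf{s}_l}(\bvec{\vx},\bvec{\vs}_l)+\mathbb{Q}_{\mathrm{x},\mathbf{s}_l}(\bvec{\vx},\bvec{\vs}_l)}.
\]

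Second, I would substitute $D^*$ back into the augmented objective and perform the same algebraic manipulation as in the original GAN proof to rewrite the attained inner maximum as $-\log 4 + 2\,D_{\mathrm{JS}}(\mathbb{P}_{\mathrm{x},\mathbf{s}_l}\Vert\mathbb{Q}_{\mathrm{x},\mathbf{s}_l})$. At this point the identity (\ref{eqjoint2}) from Lemma~\ref{Mlma1} kicks in and collapses this quantity to $-\log 4 + 2\,D_{\mathrm{JS}}(\mathbb{P}_{\mathrm{d}}\Vert\mathbb{P}_{\mathrm{g}})$, which coincides with the attained inner maximum of the original problem (\ref{Mmin-max}). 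The outer $\min_G$ problems therefore agree for every $l\in\{1,\dots,L\}$, proving the equivalence.

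The main subtlety is conceptual rather than technical: it is tempting, but incorrect, to infer from $D_{\mathrm{JS}}(\mathbb{P}_{\mathrm{x},\mathbf{s}_l}\Vert\mathbb{Q}_{\mathrm{x},\mathbf{s}_l})=D_{\mathrm{JS}}(\mathbb{P}_{\mathrm{d}}\Vert\mathbb{P}_{\mathrm{g}})$ that $D^*$ factors through $\bvec{\vx}$ alone. The densities $\mathbb{P}_{\mathrm{x},\mathbf{s}_l}$ and $\mathbb{Q}_{\mathrm{x},\mathbf{s}_l}$ interleave real and synthetic components across the $2^l$ bit patterns via the recursion (\ref{MPQL}), so the optimal discriminator genuinely couples $\bvec{\vx}$ with $\bvec{\vs}_l$ and must additionally learn the association rule encoded by the recursion; this is precisely why the augmented task is strictly harder even though the value of the game is invariant. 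Keeping the distinction between \emph{equal optimal value} and \emph{equal optimal discriminator} in sight is the only real conceptual point to watch.
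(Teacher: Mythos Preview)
Your overall approach is the same as the paper's: run the standard pointwise inner-maximization of \cite{goodfellow2014generative} on the augmented space $\mathcal{X}\times\{0,1\}^l$ to obtain $D^*=\mathbb{P}_{\mathrm{x},\mathbf{s}_l}/(\mathbb{P}_{\mathrm{x},\mathbf{s}_l}+\mathbb{Q}_{\mathrm{x},\mathbf{s}_l})$ and the JS-divergence form of the inner maximum, then invoke Lemma~\ref{Mlma1} to collapse back to $D_{\mathrm{JS}}(\mathbb{P}_{\mathrm{d}}\Vert\mathbb{P}_{\mathrm{g}})$. That is exactly the paper's two-line proof.

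The one place you diverge is the second equality in the $D^*$ formula, which you do not prove and in fact argue against. Here you are half right. Unrolling the recursion (\ref{MPQL}) shows $\mathbb{P}_{\mathrm{x},\mathbf{s}_l}(\bvec{\vx},\bvec{\vs}_l)=2^{-l}\mathbb{P}_{\mathrm{d}}(\bvec{\vx})$ when the checksum of $\bvec{\vs}_l$ is $0$ and $2^{-l}\mathbb{P}_{\mathrm{g}}(\bvec{\vx})$ when it is $1$, with $\mathbb{Q}_{\mathrm{x},\mathbf{s}_l}$ swapping the two roles; the sum $\mathbb{P}_{\mathrm{x},\mathbf{s}_l}+\mathbb{Q}_{\mathrm{x},\mathbf{s}_l}=2^{-l}(\mathbb{P}_{\mathrm{d}}+\mathbb{P}_{\mathrm{g}})$ is indeed independent of $\bvec{\vs}_l$. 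Consequently $D^*(\bvec{\vx},\bvec{\vs}_l)=\mathbb{P}_{\mathrm{d}}(\bvec{\vx})/(\mathbb{P}_{\mathrm{d}}(\bvec{\vx})+\mathbb{P}_{\mathrm{g}}(\bvec{\vx}))$ on the checksum-$0$ slice and $1$ minus that on the checksum-$1$ slice. So the theorem's second equality, taken literally (and reading the stray $\mathbb{Q}_{\mathrm{d}}$ as $\mathbb{P}_{\mathrm{g}}$), holds only on half of the bit patterns; the paper's proof simply asserts it ``is based on the recursive construction'' without writing this out. Your instinct that $D^*$ genuinely depends on $\bvec{\vs}_l$ is therefore correct, but the dependence is \emph{only} through the checksum---this is precisely the ``checksum principle'' of Sec.~\ref{subsec:implement}. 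The practical hardness you allude to in your last paragraph comes from the network having to \emph{discover} this rule, not from $D^*$ itself being complicated.
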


According to Theorem~\ref{Mthm}, solving (\ref{Mmin-max}) is interchangeable with solving (\ref{Mmin-max2}). In fact, the former can be regarded as a corner case of the latter by taking $l=0$ as the absence of the auxiliary bit vector $\bvec{\vs}$. As the length $l$ of $\bvec{\vs}$ increases, the input dimension of the discriminator grows accordingly. Furthermore, two classes to be classified consist of both the data and synthetic samples as illustrated in Fig.~\ref{fig:Data_augmentation}-(a).  Note that, the mixture strategy of the distributions of two independent random variables in Lemma~\ref{Mlma1} can be extended for any generic random variables (see Sec.~\ref{Ssub:lemma1_gen} in the supp. material). 

When solving (\ref{Mmin-max}), $G$ and $D$ are parameterized as deep neural networks and SGD (or its variants) is typically used for the optimization, updating their weights in an alternating or simultaneous manner, with no guarantees on global convergence. 
Theorem~\ref{Mthm} provides a series of JS divergence estimation proxies by means of the auxiliary bit vector $\bvec{\vs}$ that in practice can be exploited as a regularizer to improve the GAN training (see Sec. \ref{sec_toy_example} for empirical evaluation). First, the number of possible combinations of the data samples with $\bvec{\vs}_l$ grows exponentially with $l$, thus helping to prevent the discriminator from overfitting to the training set. %
Second, the task of the discriminator gradually becomes harder with the length $l$. The input dimensionality of $D$ becomes larger and as the label of $(\bvec{\vx},\bvec{\vs}_{l-1})$ is altered based on the new random bit $s_l$ the decision boundary becomes more complicated (Fig.~\ref{fig:Data_augmentation}-b). Given that, progressively increasing $l$ can be exploited during training to balance the game between the discriminator and generator whenever the former becomes too strong. %
Third, when the GAN training performance saturates at the current augmentation level, adding one random bit changes the landscape of the loss function and may further boost the learning.

\subsection{Implementation of PA-GAN} \label{subsec:implement}

\begin{figure}
	\vspace{-1em}
	\begin{center}
		\includegraphics[width=.7\textwidth]{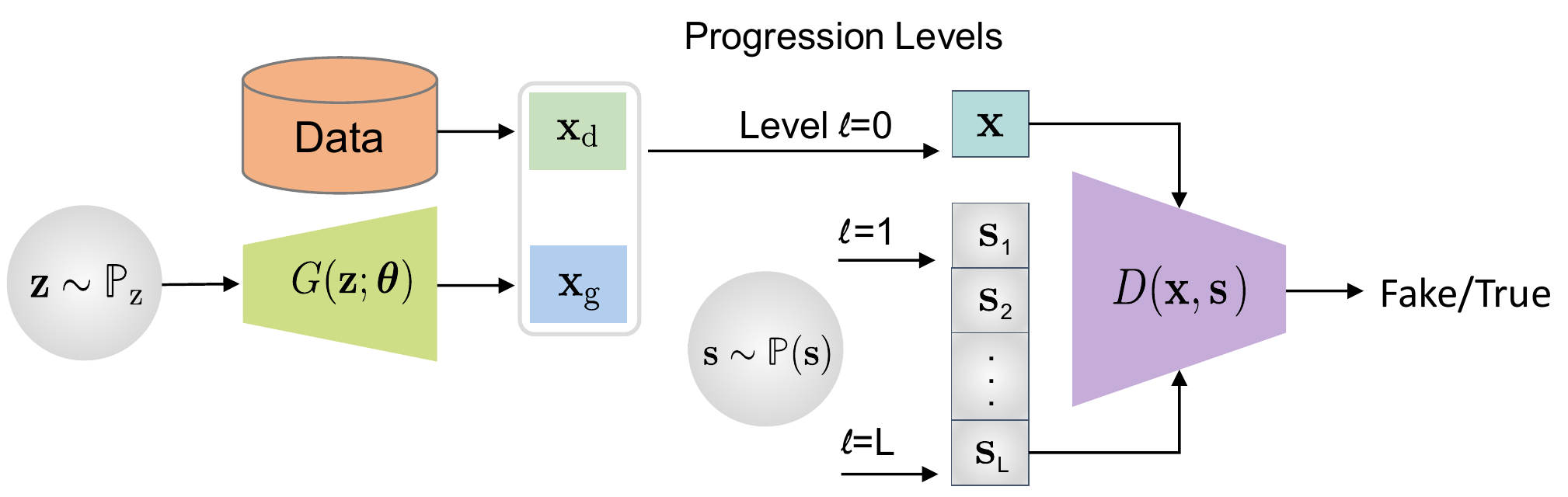}
	\end{center}
	\vspace{-0.5em}
	\caption{\label{fig:approach} PA-GAN overview. With each level of progressive augmentation $l$ the dimensionality of $\bvec{\vs}$ is enlarged from 1 to $L$, $\bvec{\vs}=\{s_1,s_2,\dots,s_L\}$. The task difficulty of the discriminator gradually increases as the length of $\bvec{\vs}$ grows.}
	\vspace{-1em}
\end{figure}
The min-max problem in (\ref{Mmin-max2}) shares the same structure as the original one in (\ref{Mmin-max}), thus we can exploit the standard GAN training for PA-GAN, see Fig.~\ref{fig:approach}. The necessary change only concerns the discriminator. It involves 1) using checksum principle as a new classification criterion, 2) incorporating $\bvec{\vs}$ in addition to $\bvec{\vx}$ as the network input and 3) enabling the progression of $\bvec{\vs}$ during training.

\textbf{Checksum principle.}
The conventional GAN discriminator assigns TRUE (0) / FAKE (1) class label based on $\bvec{\vx}$ being either data or synthetic samples. In contrast, the discriminator $D$ in (\ref{Mmin-max2}) requires $\bvec{\vs}_l$ along with $\bvec{\vx}$ to make the decision about the class label. Starting from $l=1$, the two class distributions in (\ref{MPQ0}) imply the label-$0$ for $(\bvec{\vx}_{\mathrm{d}},s=0)$, $(\bvec{\vx}_{\mathrm{g}},s=1)$ and label-$1$ for $(\bvec{\vx}_{\mathrm{d}},s=1)$, $(\bvec{\vx}_{\mathrm{g}},s=0)$. The real samples are no longer always in the TRUE class, and the synthetic samples are no longer always in the FAKE class, see Fig.~\ref{fig:Data_augmentation}-(a). 
To detect the correct class we can use a simple checksum principle.
Namely, let the data and synthetic samples respectively encode bit $0$ and $1$ followed by associating the checksum $0(1)$ of the pair $(\bvec{\vx},s)$ with TRUE(FAKE). \footnote{By checksum we mean the XOR operation over a bit sequence.} %
For more than one bit, $\mathbb{P}_{\mathrm{x},\mathbf{s}_{l}}$ and $\mathbb{Q}_{\mathrm{x},\mathbf{s}_{l}}$ are recursively constructed according to (\ref{MPQL}). Based on the checksum principle for the single bit case, we can recursively show its consistency for any bit sequence length $\bvec{\vs}_l$, $l>1$. This is a desirable property for progression. %
With the identified checksum principle, we further discuss a way to integrate a sequence of random bits $\bvec{\vs}_l$ into the discriminator network in a progressive manner.

\textbf{Progressive augmentation.}\label{subsec:network-implement}
With the aim of maximally reusing existing GAN architectures we propose two augmentation options. The first one is \emph{input space augmentation}, where $\bvec{\vs}$ is directly concatenated with the sample $\bvec{\vx}$ and both are fed as input to the discriminator network. %
The second option is \emph{feature space augmentation},  where $\bvec{\vs}$ is concatenated with the learned feature representations of $\bvec{\vx}$ attained at intermediate hidden layers. For both cases, the way to concatenate $\bvec{\vs}$ with $\bvec{\vx}$ or its feature maps is identical.
Each entry $s_l$ creates one augmentation channel, which is replicated to match the spatial dimension of $\bvec{\vx}$ or its feature maps. %
Depending on the augmentation space, either the input layer or the hidden layer that further processes the feature maps will additionally take care of the augmentation channels along with the original input. In both cases, the original layer configuration (kernel size, stride and padding type) remains the same except for its channel size being increased by $l$. All the other layers of the discriminator remain unchanged. %
When a new augmentation level is reached, one extra input channel of the filter is instantiated to process the bit $l+1$. 

These two ways of augmentation are beneficial as they make the checksum computation more challenging for the discriminator, i.e., making the discriminator unaware about the need of separating $\bvec{\vx}$ and $\bvec{\vs}$ from the concatenated input. 
We note that in order to take full advantage of the regularization effect of progressive augmentation, $\bvec{\vs}$ needs to be involved in the decision making process of the discriminator either through input or feature space augmentation. Augmenting $\bvec{\vs}$ with the output $D(\bvec{\vx})$ makes the task trivial, thereby disabling the regularization effect of the progressive augmentation.
In this work we only exploit $\bvec{\vs}$ by concatenating it with either the input or the hidden layers of the network. However, it is also possible to combine it with other image augmentation strategies, e.g. using $s$ as an indicator for the rotation angle, as in \cite{ChenSS2019}, or the type of color augmentation that is imposed on the input $\bvec{\vx}$ and encouraging $D$ to learn the type through the checksum principle.

\textbf{Progression scheduling.} \label{subsec:PA-schedule}
To schedule the progression we rely on the kernel inception distance (KID) introduced by~\cite{Binkowski2016MMDGAN} %
to decide if the performance of $G$ at the current augmentation level saturates or even starts degrading (typically happens when $D$ starts overfitting or becomes too powerful). Specifically, after $t$ discriminator iterations, %
we evaluate KID between synthetic samples and data samples drawn from the training set. %
If the current KID score is less than $5\%$ of the average of the two previous evaluations attained at the same augmentation level, the augmentation is leveled up, i.e. $l\rightarrow l+1$. To validate the effectiveness of this scheduling mechanism we exploit it for the learning rate adaptation as in~\cite{Binkowski2016MMDGAN} and compare it with progressive augmentation in the next section. %
\section{\label{sec:Experiments}Experiments}

\emph{\textbf{Datasets:}}  We consider four datasets: Fashion-MNIST~\cite{xiao2017}, CIFAR10~\cite{Cifar10_Krizhevsky09learningmultiple}, CELEBA-HQ $(128\times128)$~\cite{karras2018progressive} and Tiny-ImageNet (a simplified version of ImageNet~\cite{imagenet_cvpr09}), with the training set sizes equal to $60$\unit{k}, $50$\unit{k}, $27$\unit{k} and $100$\unit{k} plus the test set sizes equal to $10$\unit{k}, $10$\unit{k}, $3$\unit{k}, and $10$\unit{k}, respectively. Note that we focus on unsupervised image generation and do not use class label information.%

\emph{\textbf{Networks:}} 
We employ $\mathtt{SN\text{ }DCGAN}$~\cite{miyato2018spectral} and $\mathtt{SA\text{ }GAN}$~\cite{Zhang_SAGAN18}, both using spectral normalization (SN)~\cite{miyato2018spectral} in the discriminator for regularization. %
$\mathtt{SA\text{ }GAN}$ exploits the ResNet architecture with a self-attention (SA) layer~\cite{Zhang_SAGAN18}. Its generator additionally adopts self-modulation BN (sBN)~\cite{chen2018on} together with SN. We exploit the implementations provided by~\cite{Kurach2018GANlandscape,Zhang_SAGAN18}. %
Following~\cite{miyato2018spectral,Zhang_SAGAN18}, we train $\mathtt{SN\text{ }DCGAN}$ and $\mathtt{SA\text{ }GAN}$~\cite{Zhang_SAGAN18} with the non-saturation (NS) and hinge loss, respectively.

\emph{\textbf{Evaluation metrics:}} 
We use Fr{\'e}chet inception distance (FID)~\cite{FID} as the main evaluation metric. %
Additionally, we also report inception score (IS)~\cite{Theis2016a} and kernel inception distance (KID)~\cite{Binkowski2016MMDGAN} in Sec.~\ref{kid-is}. All measures are computed based on the same number of the test data samples and synthetic samples, following the evaluation framework of~\cite{LucicEqualGANs,Kurach2018GANlandscape}. By default all reported numbers correspond to the median of five independent runs with $300$\unit{k}, $500$\unit{k}, $400$\unit{k} and $500$\unit{k} training iterations for Fashion-MNIST, CIFAR10, CELEBA-HQ, and Tiny-ImageNet, respectively. 

\emph{\textbf{Training details:}} 
We use uniformly distributed noise vector $\bvec{\vz}\in[-1,1]^{128}$, the mini-batch size of $64$, and Adam optimizer~\cite{adamopt}. The two time-scale update rule (TTUR)~\cite{heuselttur2017} is considered when choosing the learning rates for $D$ and $G$. %
For progression scheduling KID\footnote{FID is used as the primary metric, KID is chosen for scheduling to avoid over-optimizing towards FID.} is evaluated using samples from the training set every $t=10$\unit{k} iterations, except for Tiny-ImageNet with $t=20$\unit{k} given its approximately $2\times$ larger training set. %
More details are provided in Sec.~\ref{sec:networks}. %

\subsection{PA Across Different Architectures and Datasets}\label{sec:PA-diff}

\begin{table}[t!]
	\vspace{-1em}
	\setlength{\tabcolsep}{0.2em} 
	\renewcommand{\arraystretch}{1.1}
	\centering
	\caption{FID improvement of $\mathtt{PA}$ across different datasets and network architectures. 
		We experiment with augmenting the input and feature spaces, see Sec.\ref{sec:PA-diff} for details.} \label{table:fid_overview} 
	\begin{tabular}{l|c|cccc|c} 
		\rowcolor{verylightgray}
		\footnotesize{}{\text{$\mathtt{Method}$}}	& \footnotesize{}{\text{$\mathtt{PA}$}}& 
		\footnotesize{}{\text{$\mathtt{F\text{-}MNIST}$}} & \footnotesize{}{\text{$\mathtt{CIFAR10}$}} & \footnotesize{}{\text{$\mathtt{CELEBA\text{-}HQ}$}} & \footnotesize{}{\text{$\mathtt{T\text{-}ImageNet}$}}&\footnotesize{}{\text{$\overline{\Delta\mathtt{PA}}$}} \tabularnewline

		\multirow{3}{*}{\text{$\mathtt{SN\text{ }DCGAN}$~\cite{miyato2018spectral}}} & \footnotesize{}{\xmark}& \text{\footnotesize$10.6$} & \text{\footnotesize$26.0$} & \text{\footnotesize$24.3$} & \text{\footnotesize-} & \multirow{3}{*}{\text{\footnotesize$4.2$}} \tabularnewline 	
		& \text{$\mathtt{input}$} & \text{\footnotesize$\mathbf{6.2}$} &  \text{\footnotesize$\mathbf{22.2}$} & \text{\footnotesize${20.8}$} & \text{\footnotesize-} \tabularnewline 
		& \text{$\mathtt{feat}$} & \text{\footnotesize$\mathbf{6.2}$} &  \text{\footnotesize${22.6}$} & \text{\footnotesize$\mathbf{18.8}$} & \text{\footnotesize-} \tabularnewline 
		
		\arrayrulecolor{gray}	\hline \arrayrulecolor{verylightgray}
		
		\multirow{3}{*}{\text{$\mathtt{SA\text{ }GAN\text{ }(sBN)}$~\cite{Zhang_SAGAN18}}}	& \footnotesize{}{\xmark}& \text{\footnotesize-} & \text{\footnotesize$18.8$} & \footnotesize{}{$17.8$} &\footnotesize{}{$47.6$} &\multirow{3}{*}{\text{\footnotesize$2.6$}}   \tabularnewline 
		& \text{$\mathtt{input}$} & \text{\footnotesize-}  & \text{\footnotesize$\mathbf{16.1}$} & \footnotesize{}{$\mathbf{15.4}$} & \text{\footnotesize$44.8$}  \tabularnewline 
		& \text{$\mathtt{feat}$} & \text{\footnotesize -}  & \text{\footnotesize$16.3$} & \footnotesize{}{$15.8$} & \text{\footnotesize$\mathbf{44.7}$} 
	\end{tabular}	
	\vspace{-0.5em}
\end{table}

Table~\ref{table:fid_overview} gives an overview of the FID performance achieved with and without applying the proposed progressive augmentation ($\mathtt{PA}$) across different datasets and networks. 
We observe consistent improvement of the FID score achieved by $\mathtt{PA}$ with both the input $\mathtt{PA}\text{ }(\mathtt{input})$ and feature $\mathtt{PA}\text{ }(\mathtt{feat})$ space augmentation (see Sec.~\ref{Ssubsec:abl aug level} for augmentation details and ablation study on the augmentation space). From $\mathtt{SN\text{ }DCGAN}$ to the ResNet-based $\mathtt{SA\text{ }GAN}$ the FID reduction preserves approximately around $3$ points, showing that the gain achieved by $\mathtt{PA}$ is complementary to the improvement on the architecture side. In comparison to input space augmentation, augmenting intermediate level features does not overly simplify the discriminator task, paralysing $\mathtt{PA}$. In the case of $\mathtt{SN\text{ }DCGAN}$ on CELEBA-HQ, it actually outperforms the input space augmentation. %
Overall, a stable performance gain of $\mathtt{PA}$, independent of the augmentation space choice, showcases high generalization quality of $\mathtt{PA}$ and its easy adaptation into different network designs.\footnote{We also experiment with using PA for WGAN-GP~\cite{Arjovsky2017WGAN}, improving FID from $25.0$ to $23.9$ on CIFAR10, see Sec.~\ref{Ssubsec:dropout} in the supp. material.} 

Lower FID values achieved by $\mathtt{PA}$ can be attributed mostly to %
the improved sample diversity. By looking at generated images in Fig.~\ref{fig:images} (and Fig.~\ref{Sfig:images} in the supp. material), we observe that $\mathtt{PA}$ increases the variation of samples while maintaining the same image fidelity. This is expected as $\mathtt{PA}$ being a regularizer does not modify the GAN architecture, as in PG-GAN~\cite{karras2018progressive} or BigGAN~\cite{Brock2019}, to directly improve the visual quality. Specifically, Fig.~\ref{fig:images} shows synthetic images produced by $\mathtt{SN\text{ }DCGAN}$ and $\mathtt{SA\text{ }GAN}$ with and without $\mathtt{PA}$, on Fashion-MNIST and CELEBA-HQ. By polar interpolation between two samples $\bvec{\vz}_{1}$ and $\bvec{\vz}_{2}$, from left to right we observe the clothes/gender change. $\mathtt{PA}$ improves sample variation, maintaining representative clothes/gender attributes and achieving smooth transition between samples (e.g. hair styles and facial expressions). For further evaluation, we also measure the diversity of generated samples with the MS-SSIM score \cite{odena17a}. We use $10$\unit{k} synthetic images generated with $\mathtt{SA\text{ }GAN}$ on CELEBA-HQ. Employing $\mathtt{PA}$ reduces MS-SSIM from $0.283$ to $0.266$, while  PG-GAN~\cite{karras2018progressive} achieves $0.283$, and MS-SSIM of $10$\unit{k} real samples is $0.263$.

\begin{figure*}
	\captionsetup[subfigure]{labelformat=empty}
	\vspace{0em}
	\centering
	\begin{subfigure}{0.48\textwidth}
		\includegraphics[width = \linewidth]{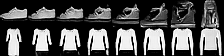}
		\caption{\text{$\mathtt{F\text{-}MNIST}$}: $\mathtt{SN\text{ }DCGAN}$ without $\mathtt{PA}$}
	\end{subfigure}
	\hfill
	\begin{subfigure}{0.48\textwidth}
		\includegraphics[width = \linewidth]{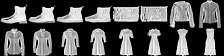} 
		\caption{\text{$\mathtt{F\text{-}MNIST}$}: $\mathtt{SN\text{ }DCGAN}$ with $\mathtt{PA}$}
	\end{subfigure}	
	
	\begin{subfigure}{0.8\textwidth}
		\vspace{1.0em}
		\includegraphics[width = \linewidth]{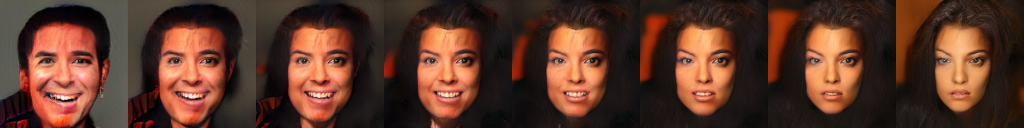}\\  
		\includegraphics[width = \linewidth]{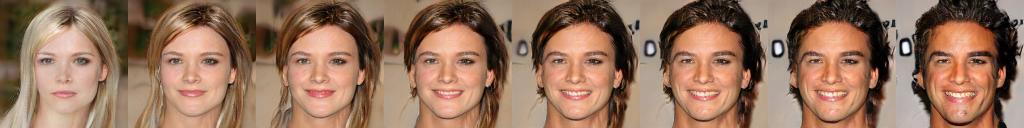} 
		\caption{\text{$\mathtt{CELEBA\text{-}HQ}$}: $\mathtt{SA\text{ }GAN}$ without $\mathtt{PA}$}
	\end{subfigure}
	\begin{subfigure}{0.8\textwidth}
		\includegraphics[width = \linewidth]{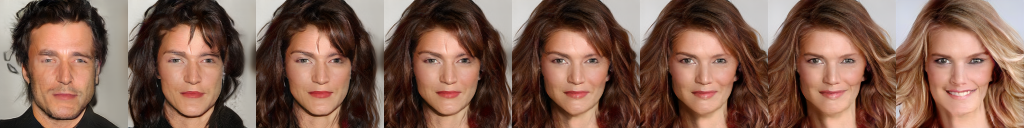}\\
		\includegraphics[width = \linewidth]{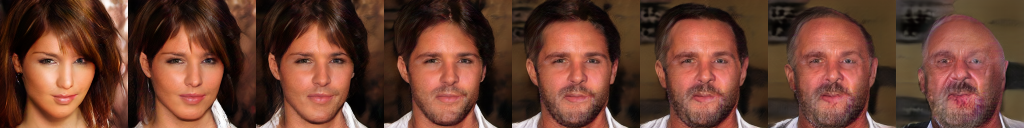} 
		\caption{\text{$\mathtt{CELEBA\text{-}HQ}$}: $\mathtt{SA\text{ }GAN}$ with $\mathtt{PA}$}
	\end{subfigure}	
	
	\caption{Synthetic images generated through latent space interpolation with and without using $\mathtt{PA}$. $\mathtt{PA}$ helps to improve variation across interpolated samples, i.e., no close-by images looks alike.}	\label{fig:images}
	\vspace{-1.6em}
\end{figure*}

\textbf{Comparison with SotA on Human Face Synthesis.} Deviating from from low- to high-resolution human face synthesis, the recent work COCO-GAN~\cite{lin2019cocogan} outperformed PG-GAN~\cite{karras2018progressive} on the CELEBA dataset~\cite{Liu_Celeba} via conditional coordinating. At the resolution $64$ of CELEBA, $\mathtt{PA}$ improves the $\mathtt{SA\text{ }GAN}$ FID from $4.11$ to $3.35$, being better than COCO-GAN, which achieves FID of $4.0$ and outperforms PG-GAN at the resolution $128$ (FID of $5.74$ vs. $7.30$).
Thus we conclude that the quality of samples generated by $\mathtt{PA}$ is comparable to the quality of samples generated by the recent state-of-the-art models~\cite{lin2019cocogan,karras2018progressive} on human face synthesis.

\textbf{Ablation Study.}
In Fig.~\ref{fig:fig_abl_pa_a} and Table~\ref{table:abla_pa_b} we present an ablation study on $\mathtt{PA}$, comparing single-level augmentation (without progression) with progressive multi-level $\mathtt{PA}$, showing the benefit of progression. From no augmentation to the first level augmentation, the required number of iterations varies over the datasets and architectures ($30$\unit{k}$\sim70$\unit{k}). %
Generally the number of reached augmentation levels is less than $15$. %
Fig.~\ref{fig:fig_abl_pa_a} also shows that single-level augmentation already improves the performance over the baseline $\mathtt{SN\text{ }DCGAN}$. However, the standard deviation of its FIDs across five independent runs starts increasing at later iterations. By means of progression, we can counteract this instability, while reaching a better FID result. Table~\ref{table:abla_pa_b} further compares augmentation at different levels with and without continuing with progression. Both augmentation and progression are beneficial, while progression alleviates the need of case dependent tuning of the augmentation level.

As a generic mechanism to monitor the GAN training, progression scheduling is usable not only for augmentation level-up, but also for other hyperparameter adaptations over iterations. Analogous to \cite{Binkowski2016MMDGAN} here we test it for the learning rate adaptation. %
From Fig.~\ref{fig:fig_abl_pa_a}, progression scheduling shows its effectiveness in assisting both the learning rate adaptation and $\mathtt{PA}$ for an improved FID performance. $\mathtt{PA}$ outperforms learning rate adaptation, i.e. median FID $22.2$ vs. $24.0$ across five independent runs.

\textbf{Regularization Effect of PA.}\label{sec_toy_example}
Fig.~\ref{fig:toy_d_g_fid_loss} depicts the discriminator loss ($D$ loss) and the generator loss ($G$ loss) behaviour as well as the FID curves over iterations. It shows that the discriminator of $\mathtt{SN\text{ }DCGAN}$ very quickly becomes over-confident, providing a non-informative backpropagation signal to train the generator and thus leading to the increase of the $G$ loss. %
$\mathtt{PA}$ has a long lasting regularization effect on $\mathtt{SN\text{ }DCGAN}$ by means of progression and helps to maintain a healthy competition between its discriminator and generator. 
Each rise of the $D$ loss and drop of the $G$ loss coincides with an iteration at which the augmentation level increases, and then gradually reduces after the discriminator timely adapts to the new bit.
Observing the behaviour of the $D$ and $G$ losses, we conclude that both $\mathtt{PA}\text{ }(\mathtt{input})$ and $\mathtt{PA}\text{ }(\mathtt{feat})$ can effectively prevent the $\mathtt{SN\text{ }DCGAN}$ discriminator from overfitting, alleviating
the vanishing gradient issue and thus enabling continuous learning of the generator.
At the level one augmentation, both $\mathtt{PA}\text{ }(\mathtt{feat})$ and $\mathtt{PA}\text{ }(\mathtt{input})$ start from the similar overfitting stage, i.e., $(a)$ and $(b)$ respectively at the iteration $60$\unit{k} and $70$\unit{k}. %
Combining the bit $s$ directly with high-level features eases the checksum computation. As a result, the $D$ loss of $\mathtt{PA}\text{ }(\mathtt{feat_{N/8}})$ reduces faster, but making its future task more difficult due to overfitting to the previous augmentation level. On the other hand, $\mathtt{PA}\text{ }(\mathtt{input})$ let the bits pass through all layers, and thus its adaptation to augmentation progression improves over iterations. In the end, both $\mathtt{PA}\text{ }(\mathtt{feat})$ and $\mathtt{PA}\text{ }(\mathtt{input})$ lead to similar regularization effect and result in the improved FID scores.

\begin{figure}[t!]
	\centering
	\begin{minipage}{0.5\textwidth}
		\centering
		\includegraphics[width=\linewidth]{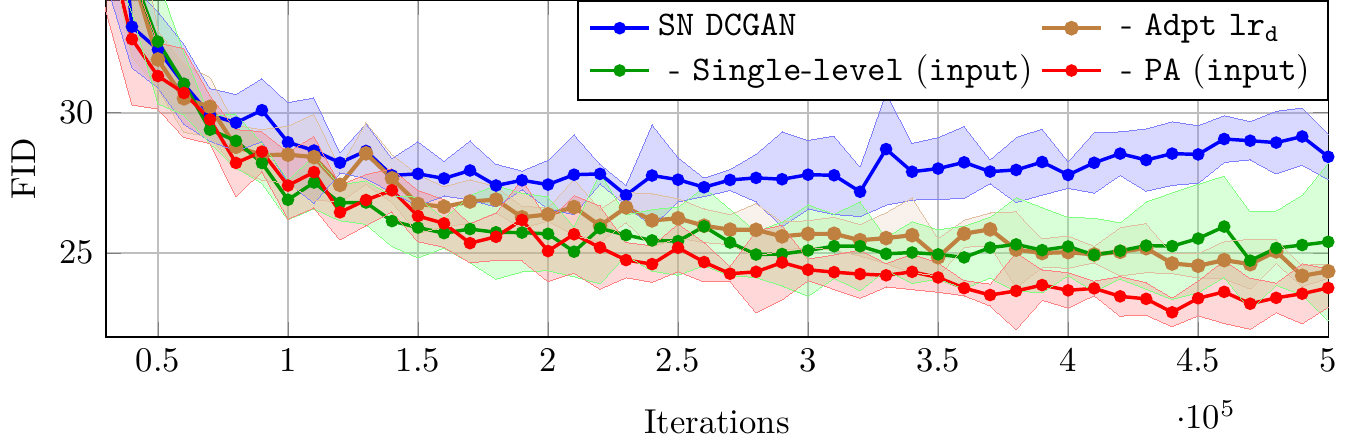}
		\caption{FID learning curves on $\mathtt{SN\text{ }DCGAN}$ CIFAR10. The curves show the mean FID with one standard deviation across five random runs.}\label{fig:fig_abl_pa_a}
	\end{minipage}\hfill
	\begin{minipage}{0.45\textwidth}
		\setlength{\tabcolsep}{0.36em} 
		\renewcommand{\arraystretch}{1.0}
		\centering
		\captionof{table}{Median FIDs of input space augmentation starting from the level $l$ with and without progression on CIFAR10 with $\mathtt{SN\text{ }DCGAN}$.} \label{table:abla_pa_b}
		\begin{tabular}{c|cc|c} 
			\rowcolor{verylightgray}
			\footnotesize{}{\text{$\mathtt{Augment.\text{ }level}$}} & \multicolumn{2}{c}{\footnotesize{}{\text{$\mathtt{Progression}$}}}   & \tabularnewline 
			\rowcolor{verylightgray}
			\text{\footnotesize$l$}   & 	\footnotesize{}{\text{\xmark }}	& \footnotesize{}{\text{\cmark }} & \multirow{-2}{*}{\footnotesize{}{\text{$\Delta\mathtt{PA}$}}} \tabularnewline 
			\text{\footnotesize$0$} & \text{\footnotesize$26.0$} & \text{\footnotesize$\mathbf{22.2}$} & \text{\footnotesize$3.8$} \tabularnewline 
			\arrayrulecolor{verylightgray}	\hline 
			\text{\footnotesize$1$}&\text{\footnotesize$23.8$}	& \text{\footnotesize$22.3$} & \text{\footnotesize$1.5$} \tabularnewline 
			\arrayrulecolor{verylightgray}	\hline 
			\text{\footnotesize$2$} &\text{\footnotesize$23.6$} & \text{\footnotesize$22.9$}&  \text{\footnotesize$0.7$} \tabularnewline 
			\arrayrulecolor{verylightgray}	\hline 
			
			\text{\footnotesize$3$} &\text{\footnotesize$23.5$} & \text{\footnotesize$22.9$}&  \text{\footnotesize$0.6$} \tabularnewline 
			\arrayrulecolor{verylightgray}	\hline 
			
			\text{\footnotesize$4$} &  \text{\footnotesize$23.5$} & \text{\footnotesize$23.2$} 		&	\text{\footnotesize$0.3$}
		\end{tabular}			
	\end{minipage}	
\end{figure}

In Fig.~\ref{fig:toy_d_g_fid_loss} we also evaluate the $\mathtt{Dropout}$~\cite{JMLR:v15:srivastava14a} regularization applied on the fourth convolutional layer with the keep rate $0.7$ (the best performing setting in our experiments). %
Both $\mathtt{Dropout}$ and $\mathtt{PA}$ resort to random variables for regularization. The former randomly removes features, while the latter augments them with additional random bits and adjusts accordingly the class label. %
In contrast to $\mathtt{Dropout}$, $\mathtt{PA}$ has a stronger regularization effect and leads to faster convergence (more rapid reduction of FID scores). %
In addition, we compare $\mathtt{PA}$ with the $\mathtt{Reinit.}$ baseline, where at each scheduled progression all weights are reinitialized with Xavier initialization \cite{GlorotAISTATS2010}. Compared to $\mathtt{PA}$, using $\mathtt{Reinit.}$ strategy leads to longer adaptation time (the $D$ loss decay is much slower) and oscillatory GAN behaviour, thus resulting in dramatic fluctuations of FID scores over iterations.

\begin{figure}
	\centering
	\vspace{0em}
		\begin{minipage}{\textwidth}
	\begin{subfigure}{\textwidth}
		\centering
		\includegraphics[width=\linewidth]{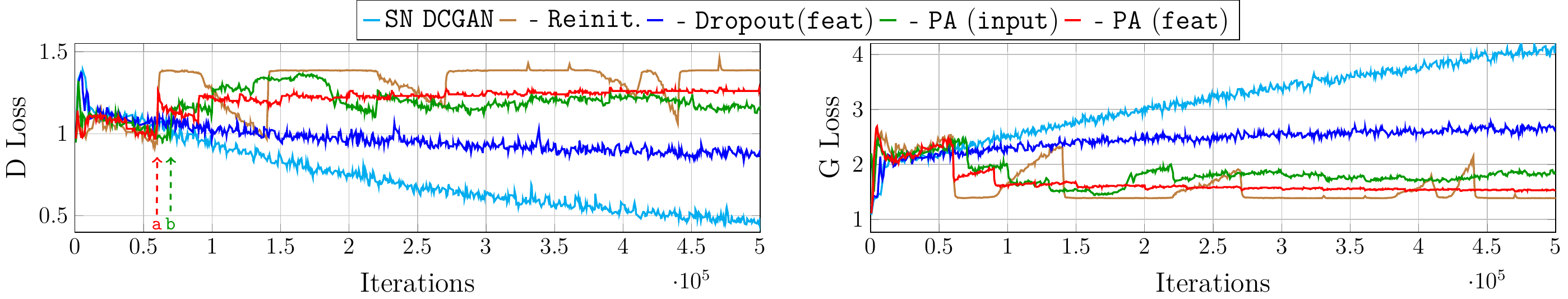}
		\caption{Discriminator ($D$) and generator ($G$) loss over iterations}\label{fig:fig_dg_loss}
	\end{subfigure}
	\end{minipage}
	\begin{minipage}{.5\textwidth}
	\begin{subfigure}{\textwidth}	
		\vspace{0.5em}
		\centering
		\includegraphics[width=\linewidth]{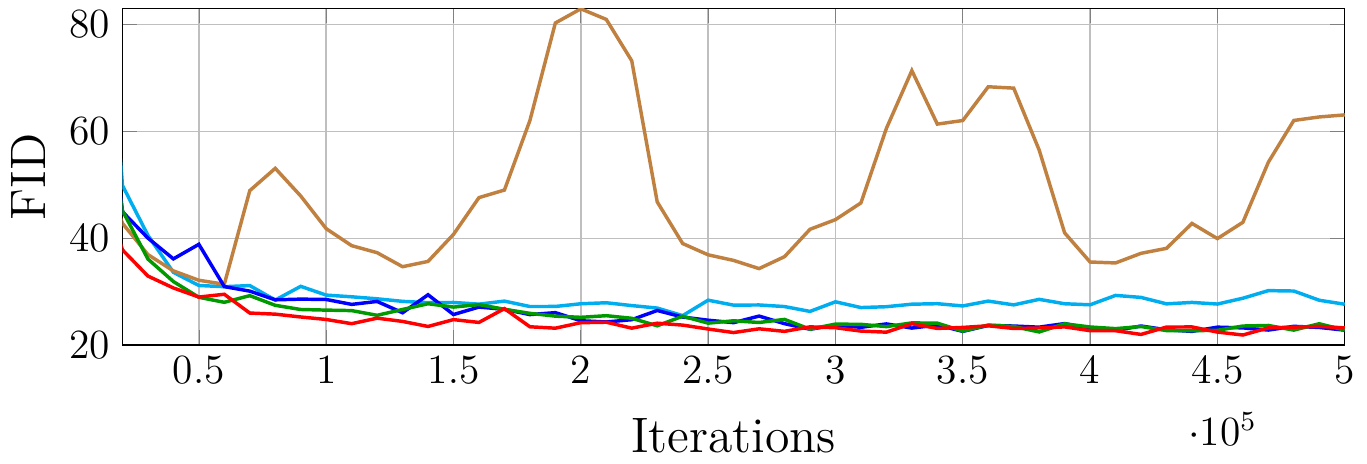}
		\caption{FID over iterations}\label{fig:fig_fid}
	\end{subfigure}
	\end{minipage}
	\hspace{.6em}
	\begin{minipage}{.45\textwidth}
	\caption{\label{fig:toy_d_g_fid_loss} Behaviour of the discriminator loss ($D$ loss) and the generator loss ($G$ loss) as well as FID changes over iterations, using SN DCGAN on CIFAR10. $\mathtt{PA}$ acts as a stochastic regularizer, preventing the discriminator from becoming overconfident.}
	\end{minipage}	
\end{figure}

\subsection{\label{subsec:Exp-Regulariz}Comparison and Combination with Other Regularizers}
We further compare and combine $\mathtt{PA}$ with other regularization techniques, i.e., one-sided label smoothing~\cite{SalimansNIPS2016}, $\mathtt{GP}$ from~\cite{gulrajani_NIPS2017}, its zero-centered alternative $\mathtt{GP_{zero\text{-}cent}}$ from~\cite{Roth_NIPS2017}, $\mathtt{Dropout}$~\cite{JMLR:v15:srivastava14a}, and self-supervised GAN training via auxiliary rotation loss ($\mathtt{SS}$)~\cite{ChenSS2019}. 

One-sided label smoothing ($\mathtt{Label\text{ }smooth.}$) weakens the discriminator by smoothing its decision boundary, i.e., changing the positive labels from one to a smaller value. This is analogous to introducing label noise for the data samples, whereas $\mathtt{PA}$ alters the target labels based on the deterministic checksum principle. %
Benefiting from a smoothed decision boundary, $\mathtt{Label\text{ }smooth.}$ slightly improves the performance of $\mathtt{SN\text{ }DCGAN}$ ($26.0$ vs. $25.8$), but underperforms in comparison to $\mathtt{PA}\text{ }(\mathtt{input})$ ($22.2$) and $\mathtt{PA}\text{ }(\mathtt{feat})$ ($22.6$). By applying $\mathtt{PA}\text{ }$ on top of $\mathtt{Label\text{ }smooth.}$ we observe a similar reduction of the FID score ($23.1$ and $22.3$ for input and feature space augmentation, respectively).

Both $\mathtt{GP}$ and $\mathtt{GP_{zero\text{-}cent}}$ regularize the norms of gradients to stabilize the GAN training. The former aims at a 1-Lipschitz discriminator, and the latter is a closed-form approximation of adding input noise. %
Table~\ref{table:pa_gp_dropout} shows that both of them are compatible with $\mathtt{PA}$ but degrade the performance of $\mathtt{SN\text{ }DCGAN}$ alone and its combination with $\mathtt{PA}$. This effect has been also observed in~\cite{Kurach2018GANlandscape,Brock2019}, constraining the learning of the discriminator improves the GAN training stability but at the cost of performance degradation. Note that, however, with $\mathtt{PA}$ performance degradation is smaller.

$\mathtt{Dropout}$ shares a common stochastic nature with $\mathtt{PA}$ as illustrated in Fig.~\ref{fig:toy_d_g_fid_loss} and in the supp. material. %
We observe from Table~\ref{table:pa_gp_dropout} that $\mathtt{Dropout}$ and $\mathtt{PA}$ can be both exploited as effective regularizers. $\mathtt{Dropout}$ acts locally on the layer. The layer outputs are randomly and independently subsampled, thinning the network. In contrast, $\mathtt{PA}$ augments the input or the layer with extra channels containing random bits, these bits also change the class label of the input and thus alter the network decision process. $\mathtt{Dropout}$ helps to break-up situations where the layer co-adapts to correct errors from prior layers and enables the network to timely re-learn features of constantly changing synthetic samples. $\mathtt{PA}$ regularizes the decision process of $D$, forcing $D$ to comprehend the input together with the random bits for correct classification and has stronger regularization effect than $\mathtt{Dropout}$, see Fig.~\ref{fig:toy_d_g_fid_loss} and the supp. material. Hence, they have different roles. Their combination further improves FID by $\sim 0.8$ point on average, showing the complementarity of both approaches. It is worth noting that $\mathtt{Dropout}$ is sensitive to the selection of the layer at which it is applied. In our experiments (see the supp. material) it performs best when applied at the fourth convolutional layer.%

\begin{table}[t!]
	\setlength{\tabcolsep}{0.2em} 
	\renewcommand{\arraystretch}{1.1}
	\centering
	\caption{FID performance of $\mathtt{PA}$, different regularization techniques and their combinations on CIFAR10, see Sec.~\ref{subsec:Exp-Regulariz} for details.} \label{table:pa_gp_dropout}
	\begin{tabular}{l|c|c|ccccc|c} 
		\rowcolor{verylightgray}
		& & & \footnotesize{}{\text{-$\mathtt{Label\text{ }smooth.}$}}  &\footnotesize{}{\text{-$\mathtt{GP}$}} & \footnotesize{}{\text{-$\mathtt{GP_{zero\text{-}cent}}$}} & \footnotesize{}{\text{-$\mathtt{Dropout}$}} &\footnotesize{}{\text{-$\mathtt{SS}$}}  & \tabularnewline 
		\rowcolor{verylightgray}
		\multirow{-2}{*}{	\footnotesize{}{\text{$\mathtt{Method}$}}} & \multirow{-2}{*}{\footnotesize{}{\text{$\mathtt{PA}$}}} & \multirow{-2}{*}{\footnotesize{}{\text{$\mathtt{GAN}$}}} & 	\footnotesize{}{\text{\cite{SalimansNIPS2016}}} &  \footnotesize{}{\cite{gulrajani_NIPS2017}} & \footnotesize{}{ \cite{Roth_NIPS2017}} &
		\footnotesize{}{\cite{JMLR:v15:srivastava14a}} & \footnotesize{}{\cite{ChenSS2019}} &  \multirow{-2}{*}{\footnotesize{}{\text{$\overline{\Delta\mathtt{PA}}$}}} \tabularnewline  
		\multirow{2}{*}{\footnotesize{}{\text{$\mathtt{SN\text{ }DCGAN}$~\cite{miyato2018spectral}}}} & 	   	\footnotesize{}{\text{\xmark }} & \text{\footnotesize${26.0}$}  & \text{\footnotesize${25.8}$} & \text{\footnotesize${26.7}$} &\text{\footnotesize${26.5}$}  & \text{\footnotesize${22.1}$} &  \text{\footnotesize${-}$}  &	\tabularnewline  
			& \footnotesize{}{\text{$\mathtt{input}$}} & \text{\footnotesize${22.2}$}  & \text{\footnotesize$23.1$}  & \text{\footnotesize${21.8}$} & \text{\footnotesize${22.3}$} & \text{\footnotesize${21.9}$} & \text{\footnotesize${-}$} & \text{\footnotesize${3.0}$} \tabularnewline 
		& \footnotesize{}{\text{$\mathtt{feat}$}} & \text{\footnotesize${22.6}$}  & \text{\footnotesize${22.3}$}  & \text{\footnotesize${22.7}$} & \text{\footnotesize${23.0}$} & \text{\footnotesize${\mathbf{20.6}}$} &\text{\footnotesize${-}$}  &
	\footnotesize${{3.1}}$\tabularnewline 
		\arrayrulecolor{verylightgray}	\hline 
		\multirow{2}{*}{\footnotesize{}{\text{$\mathtt{SA\text{ }GAN\text{ }(sBN)}$~\cite{Zhang_SAGAN18}}}}	 & 	   	\footnotesize{}{\text{\xmark }} & \text{\footnotesize${18.8}$}  & \text{\footnotesize$-$} & \text{\footnotesize${17.8}$} &\text{\footnotesize${17.8}$} & \text{\footnotesize${16.2}$} &\text{\footnotesize${15.7}$} & \tabularnewline  
		& \footnotesize{}{\text{$\mathtt{input}$}} & \text{\footnotesize${16.1}$}  & \text{\footnotesize${-}$}  & \text{\footnotesize${15.8}$} & \text{\footnotesize${16.1}$} & \text{\footnotesize${15.5}$} &\text{\footnotesize$\mathbf{14.7}$}  & \text{\footnotesize${1.3}$}	\tabularnewline
		& \footnotesize{}{\text{$\mathtt{feat}$}} & \text{\footnotesize${16.3}$}  & \text{\footnotesize${-}$}  & \text{\footnotesize${16.1}$} & \text{\footnotesize${15.9}$} &  \text{\footnotesize${15.6}$} &\text{\footnotesize$14.9$} &
		\footnotesize$1.3$\tabularnewline 
		\arrayrulecolor{verylightgray}\hline 
		& \footnotesize{}{\text{$\overline{\Delta\mathtt{PA}}$}} & \text{\footnotesize${3.1}$} & \text{\footnotesize${3.1}$}& \text{\footnotesize${3.2}$} & \text{\footnotesize${2.8}$} & \text{\footnotesize${0.8}$} &  \text{\footnotesize${0.9}$} &\text{\footnotesize${2.3}$}\tabularnewline 
	\end{tabular}	
	\vspace{-1em}
\end{table}

Self-supervised training (SS-GAN) in~\cite{ChenSS2019} regularizes the discriminator by encouraging it to solve an auxiliary image rotation prediction task. From the perspective of self-supervision, $\mathtt{PA}$ presents the discriminator a checksum computation task, whereas telling apart the data and synthetic samples becomes a sub-task. Rotation prediction task was initially proposed and found useful in~\cite{Gidaris2018} to improve feature learning of convolutional networks. The checksum principle is derived from Theorem~\ref{thm}. Their combination is beneficial and achieves the best FID of $14.7$ for the unsupervised setting on CIFAR10, which is the same score as in the supervised case with large scale BigGAN training~\cite{Brock2019}. 

Overall, we observe that $\mathtt{PA}$ is consistently beneficial when combining with other regularization techniques, independent of input or feature space augmentation. Additional improvement of the FID score can come along with fine selection of the augmentation space type.

\section{\label{sec:Conclusion}Conclusion}
In this work we have proposed progressive augmentation (PA) -  a novel regularization method for GANs. %
Different to standard data augmentation our approach does not modify the training samples, instead it progressively augments them or their feature maps with auxiliary random bits and casts the discrimination task into the checksum computation. %
PA helps to entangle the discriminator and thus to avoid its early performance saturation. %
We experimentally have shown consistent performance improvements of employing PA-GAN across multiple benchmarks and demonstrated that PA generalizes well across different network architectures and is complementary to other regularization techniques. Apart from generative modelling, as a future work we are interested in exploiting PA for semi-supervised learning, generative latent modelling and transfer learning.


\bibliography{references} 
\bibliographystyle{../../iclr2019_template/iclr2019_conference}
\newpage
\begin{center}
	\setlength{\parindent}{0pt}
	\setlength{\parskip}{0pt}
	\rule{\linewidth}{2pt} 
	\\ [4pt]
	\textbf{\Large Supplemental Materials}: \\[4pt]
	\Large	Progressive Augmentation of GANs
	\rule{\linewidth}{1pt}
\end{center}
\setcounter{page}{1}
\makeatletter


\beginsupplement
\section{Content}
This document completes the presentation of PA-GAN in the main paper with the following:
\begin{itemize}
	\item Theoretical proofs for Lemma 1 and Theorem 1 in Sec.~\ref{sec:sup_theory};
	\item Implementation details of PA-GAN in Sec.~\ref{sec:sup_implement};
	\item Additional ablation studies in Sec.~\ref{Ssec:experiments}; 
	\item Analysis of PA effectiveness as regularizer on the toy example in Sec.~\ref{sup_sec_toy_example};
	\item Exemplar synthetic images in Sec.~\ref{sec:syn samples};
	\item Results for the IS~\cite{Theis2016a} and KID~\cite{Binkowski2016MMDGAN} metrics in Sec.~\ref{kid-is};
	\item Network architectures and hyperparameter settings in Sec.~\ref{sec:networks}.
\end{itemize}
\section{Theoretical Framework of PA-GAN}\label{sec:sup_theory}

\subsection{Information Theory Viewpoint on the JS Divergence}\label{subsec:background}

Apart from quantifying distributions' similarity, the JS divergence has an information theory interpretation that inspires our approach. In accordance with the binary classification task of the discriminator, we introduce a binary random variable $s$ with a uniform distribution $\mathbb{P}_{\mathrm{s}}$. 
Associating $s=0$ and $s=1$ respectively with $\bvec{\vx}\sim \mathbb{P}_{\mathrm{d}}$ and $\bvec{\vx}\sim \mathbb{P}_{\mathrm{g}}$, we obtain a joint distribution function
\begin{align}
\mathbb{P}_{\mathrm{x},\mathrm{s}}(\bvec{\vx},s) \stackrel{\Delta}{=} \frac{\mathbb{P}_{\mathrm{d}}(\bvec{\vx})\delta[s]+\mathbb{P}_{\mathrm{g}}(\bvec{\vx})\delta[s-1]}{2},\label{def_Pxs}
\end{align}
where $\delta[\cdot]$ stands for the Kronecker delta function. The marginal distribution of $\mathbb{P}_{\mathrm{x},\mathrm{s}}$ with respect to $\bvec{\vx}$ (a.k.a. the mixture distribution) is equal to 
\begin{align}
\mathbb{P}_{\mathrm{m}}\stackrel{\Delta}{=}\mathbb{P}_{\mathrm{s}}(s=0)\mathbb{P}_{\mathrm{d}}+\mathbb{P}_{\mathrm{s}}(s=1)\mathbb{P}_{\mathrm{g}}= \frac{\mathbb{P}_{\mathrm{d}}+\mathbb{P}_{\mathrm{g}}}{2}.\label{Pm}
\end{align}
Computing the mutual information of the two random variables $s$ and $\bvec{\vx}$ based on $\mathbb{P}_{\mathrm{x},\mathrm{s}}$ is identical to computing the JS divergence between $\mathbb{P}_{\mathrm{d}}$ and $\mathbb{P}_{\mathrm{g}}$, i.e.,
\begin{align} 
I(\bvec{\vx}; s)  &= \frac{E_{\mathbb{P}_{\mathrm{m}}}\left[ p_{\mathrm{d}}(\bvec{\vx})\log p_{\mathrm{d}}(\bvec{\vx})\right] + E_{\mathbb{P}_{\mathrm{m}}}\left[ p_{\mathrm{g}}(\bvec{\vx})\log p_{\mathrm{g}}(\bvec{\vx})\right]}{2}=D_{\mathrm{JS}}\left(\mathbb{P}_{\mathrm{d}}\Vert\mathbb{P}_{\mathrm{g}} \right),\label{mi_js}
\end{align}

where $p_{\mathrm{d}}(\bvec{\vx})$ and $p_{\mathrm{g}}(\bvec{\vx})$ are density functions of $\mathbb{P}_{\mathrm{d}}$ and $\mathbb{P}_{\mathrm{g}}$ with respect to $\mathbb{P}_{\mathrm{m}}$.\footnote{Both $\mathbb{P}_{\mathrm{d}}$ and $\mathbb{P}_{\mathrm{g}}$ are absolutely continuous with respect to $\mathbb{P}_{\mathrm{m}}$. Therefore, their densities exist.} The minimum of the JS divergence $D_{\mathrm{JS}}\left(\mathbb{P}_{\mathrm{d}}\Vert\mathbb{P}_{\mathrm{g}} \right)$ equal to zero is attainable iff $\mathbb{P}_{\mathrm{d}}=\mathbb{P}_{\mathrm{g}}$, while zero mutual information indicates the independence between $\bvec{\vx}$ and $s$, yielding $\mathbb{P}_{\mathrm{x},\mathrm{s}}(\bvec{\vx},s)=\mathbb{P}_{\mathrm{m}}(\bvec{\vx})\mathbb{P}_{\mathrm{s}}(s)$. 

Exploiting the equality presented in (\ref{mi_js}), we proceed with proving Lemma 1, i.e., a series of JS divergence equalities. 

\subsection{Proof for Lemma 1}

\begin{lma}\label{lma1}
	Let $s\in\{0,1\}$ denote a random bit with uniform distribution $\mathbb{P}_{\mathrm{s}}(s)=\frac{\delta[s]+\delta[s-1]}{2}$, where $\delta[s]$ is the Kronecker delta. Associating $s$ with $\bvec{\vx}$, two joint distributions of $(\bvec{\vx}, s)$ are constructed as 
	\begin{align}
	\mathbb{P}_{\mathrm{x},\mathrm{s}}(\bvec{\vx},s) \stackrel{\Delta}{=} \frac{\mathbb{P}_{\mathrm{d}}(\bvec{\vx})\delta[s]+\mathbb{P}_{\mathrm{g}}(\bvec{\vx})\delta[s-1]}{2} ,\quad \mathbb{Q}_{\mathrm{x},\mathrm{s}}(\bvec{\vx},s) \stackrel{\Delta}{=} \frac{\mathbb{P}_{\mathrm{g}}(\bvec{\vx})\delta[s]+\mathbb{P}_{\mathrm{d}}(\bvec{\vx})\delta[s-1]}{2}.\label{PQ0}
	\end{align}
	Their JS divergence is equal to 
	\begin{align}
	D_{\mathrm{JS}}\left(\mathbb{P}_{\mathrm{x},\mathrm{s}}\Vert\mathbb{Q}_{\mathrm{x},\mathrm{s}} \right)=D_{\mathrm{JS}}\left(\mathbb{P}_{\mathrm{d}}\Vert\mathbb{P}_{\mathrm{g}} \right).\label{eqjoint}
	\end{align}
	Taking (\ref{PQ0}) as the starting point and with $\bvec{\vs}_{l}$ being a sequence of i.i.d. random bits of length $l$, the recursion of constructing the paired joint distributions of $(\bvec{\vx},\bvec{\vs}_{l})$ 
	\begin{align}
	\begin{array}{ll}
	\mathbb{P}_{\mathrm{x},\mathbf{s}_{l}}(\bvec{\vx},\bvec{\vs}_{l})  \stackrel{\Delta}{=} {\mathbb{P}_{\mathrm{x},\mathbf{s}_{l-1}}(\bvec{\vx},\bvec{\vs}_{l-1})\delta[s_l]/2+\mathbb{Q}_{\mathrm{x},\mathbf{s}_{l-1}}(\bvec{\vx},\bvec{\vs}_{l-1})\delta[s_l-1]/2}\\
	\mathbb{Q}_{\mathrm{x},\mathbf{s}_{l}}(\bvec{\vx},\bvec{\vs}_{l}) \stackrel{\Delta}{=} {\mathbb{Q}_{\mathrm{x},\mathbf{s}_{l-1}}(\bvec{\vx},\bvec{\vs}_{l-1})\delta[s_l]/2+\mathbb{P}_{\mathrm{x},\mathbf{s}_{l-1}}(\bvec{\vx},\bvec{\vs}_{l-1})\delta[s_l-1]/2}
	\end{array}\label{PQL}
	\end{align}
	results into a series of JS divergence equalities for $l=1,2,\dots,L$, i.e.,
	\begin{align}
	D_{\mathrm{JS}}\left(\mathbb{P}_{\mathrm{d}}\Vert\mathbb{P}_{\mathrm{g}} \right)
	=D_{\mathrm{JS}}\left(\mathbb{P}_{\mathrm{x},\mathbf{s}_1}\Vert \mathbb{Q}_{\mathrm{x},\mathbf{s}_1} \right)=\cdots=
	D_{\mathrm{JS}}\left(\mathbb{P}_{\mathrm{x},\mathbf{s}_L}\Vert \mathbb{Q}_{\mathrm{x},\mathbf{s}_L} \right)
	\label{eqjoint2}.
	\end{align}
\end{lma}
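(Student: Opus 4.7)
My plan is to first establish the base case equality in \eqref{eqjoint} and then obtain the full chain in \eqref{eqjoint2} by a short induction, exploiting the fact that the recursion in \eqref{PQL} has exactly the same structural form as the construction in \eqref{PQ0}, just with $(\mathbb{P}_{\mathrm{d}},\mathbb{P}_{\mathrm{g}})$ replaced by $(\mathbb{P}_{\mathrm{x},\mathbf{s}_{l-1}},\mathbb{Q}_{\mathrm{x},\mathbf{s}_{l-1}})$.

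For the base case, I would first compute the midpoint $M = (\mathbb{P}_{\mathrm{x},\mathrm{s}}+\mathbb{Q}_{\mathrm{x},\mathrm{s}})/2$ directly from \eqref{PQ0}. A quick calculation gives
\begin{equation*}
M(\bvec{\vx},s) \;=\; \frac{\mathbb{P}_{\mathrm{d}}(\bvec{\vx})+\mathbb{P}_{\mathrm{g}}(\bvec{\vx})}{2}\cdot\frac{\delta[s]+\delta[s-1]}{2} \;=\; \mathbb{P}_{\mathrm{m}}(\bvec{\vx})\,\mathbb{P}_{\mathrm{s}}(s),
\end{equation*}
so the midpoint factorises into the product of the marginal $\mathbb{P}_{\mathrm{m}}$ and the uniform bit distribution $\mathbb{P}_{\mathrm{s}}$. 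This is exactly the product form needed to invoke the mutual-information identity \eqref{mi_js} from Section~\ref{subsec:background}: $D_{\mathrm{KL}}(\mathbb{P}_{\mathrm{x},\mathrm{s}}\Vert M)=I(\bvec{\vx};s)=D_{\mathrm{JS}}(\mathbb{P}_{\mathrm{d}}\Vert\mathbb{P}_{\mathrm{g}})$. By the obvious $(\mathbb{P}_{\mathrm{d}}\leftrightarrow\mathbb{P}_{\mathrm{g}})$ symmetry of the construction, $\mathbb{Q}_{\mathrm{x},\mathrm{s}}$ plays the role of $\mathbb{P}_{\mathrm{x},\mathrm{s}}$ with the two class-conditionals swapped, which leaves its midpoint with the mixture unchanged and hence gives the same KL value. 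Averaging the two KL terms yields \eqref{eqjoint}.

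For the inductive step at level $l\ge 2$, I would observe that \eqref{PQL} is literally the recipe of \eqref{PQ0} applied to the pair $(\mathbb{P}_{\mathrm{x},\mathbf{s}_{l-1}},\mathbb{Q}_{\mathrm{x},\mathbf{s}_{l-1}})$ on the enlarged sample space $\mathcal{X}\times\{0,1\}^{l-1}$, with the fresh independent bit $s_l$ playing the role of $s$. The base-case argument therefore applies verbatim (the only properties used were uniformity of the new bit and the labelling convention), giving
\begin{equation*}
D_{\mathrm{JS}}\bigl(\mathbb{P}_{\mathrm{x},\mathbf{s}_{l}}\Vert \mathbb{Q}_{\mathrm{x},\mathbf{s}_{l}}\bigr) \;=\; D_{\mathrm{JS}}\bigl(\mathbb{P}_{\mathrm{x},\mathbf{s}_{l-1}}\Vert \mathbb{Q}_{\mathrm{x},\mathbf{s}_{l-1}}\bigr).
\end{equation*}
Chaining these equalities from $l=L$ down to $l=1$ and then invoking \eqref{eqjoint} yields the full chain \eqref{eqjoint2}.

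The only mildly delicate point is verifying that the midpoint factorisation step truly carries over to the inductive stage — i.e.\ that $(\mathbb{P}_{\mathrm{x},\mathbf{s}_{l}}+\mathbb{Q}_{\mathrm{x},\mathbf{s}_{l}})/2$ remains invariant under the swap $\mathbb{P}_{\mathrm{x},\mathbf{s}_{l-1}}\leftrightarrow\mathbb{Q}_{\mathrm{x},\mathbf{s}_{l-1}}$. This follows immediately from the symmetric way the two distributions enter \eqref{PQL}, so the induction closes cleanly. I expect no further obstacle; the entire proof reduces to a midpoint computation and a structural pattern-match between \eqref{PQ0} and \eqref{PQL}.
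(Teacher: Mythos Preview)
Your proposal is correct and follows essentially the same approach as the paper's proof: both compute the midpoint $(\mathbb{P}_{\mathrm{x},\mathrm{s}}+\mathbb{Q}_{\mathrm{x},\mathrm{s}})/2=\mathbb{P}_{\mathrm{m}}\mathbb{P}_{\mathrm{s}}$, invoke the mutual-information identity \eqref{mi_js} for each of the two KL terms, average to obtain \eqref{eqjoint}, and then recurse by substituting $(\mathbb{P}_{\mathrm{x},\mathbf{s}_{l-1}},\mathbb{Q}_{\mathrm{x},\mathbf{s}_{l-1}})$ for $(\mathbb{P}_{\mathrm{d}},\mathbb{P}_{\mathrm{g}})$. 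The only cosmetic difference is ordering---the paper first writes the two mutual informations $I$ and $\tilde I$, averages them, and only then identifies the product form of the midpoint, whereas you compute the midpoint first---but the logical content is identical.
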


\begin{proof}
Starting from the single bit $s$, the two joint distributions $\mathbb{P}_{\mathrm{x},\mathrm{s}}$ and $\mathbb{Q}_{\mathrm{x},\mathrm{s}}$ differ from each other by their opposite way of associating the bit $s \in \left\{0,1\right\}$ with the data and synthetic samples. %
Their marginals with respect to $\bvec{\vx}$ are identical and equal the mixture distribution $\mathbb{P}_{\mathrm{m}}$, being neither the data nor the model distribution, in contrast to the framework of \cite{ali2017}.
	
The joint distribution $\mathbb{P}_{\mathrm{x},\mathrm{s}}$ has yielded the mutual information $I(\bvec{\vx}; s)$ with the equality in (\ref{mi_js}). By analogy, we compute the mutual information $\tilde{I}(\bvec{\vx}; s)$ between $\bvec{\vx}$ and $s$ which follow $\mathbb{Q}_{\mathrm{x},\mathrm{s}}$ with the equality:
\begin{align}
\tilde{I}(\bvec{\vx}; s)=D_{\mathrm{JS}}\left(\mathbb{P}_{\mathrm{d}}\Vert\mathbb{P}_{\mathrm{g}} \right).\label{app:Iq}
\end{align}

The combination of (\ref{mi_js}) and (\ref{app:Iq}) leads to
\begin{align}
D_{\mathrm{JS}}\left(\mathbb{P}_{\mathrm{d}}\Vert\mathbb{P}_{\mathrm{g}} \right) = \frac{I(\bvec{\vx}; s)+\tilde{I}(\bvec{\vx}; s)}{2}.
\end{align}
Rewriting mutual information as KL divergence yields:
\begin{align}
D_{\mathrm{JS}}\left(\mathbb{P}_{\mathrm{d}}\Vert\mathbb{P}_{\mathrm{g}} \right) = \frac{D_{\mathrm{KL}}\left(\mathbb{P}_{\mathrm{x},\mathrm{s}}\Vert\mathbb{P}_{\mathrm{m}}\mathbb{P}_{\mathrm{s}} \right)+D_{\mathrm{KL}}\left(\mathbb{Q}_{\mathrm{x},\mathrm{s}}\Vert\mathbb{P}_{\mathrm{m}}\mathbb{P}_{\mathrm{s}}\right)}{2},\label{js_kl}
\end{align}
where $\mathbb{P}_{\mathrm{m}}$ and $\mathbb{P}_{\mathrm{s}}$ are the common marginals of $\mathbb{P}_{\mathrm{x},\mathrm{s}}$ and $\mathbb{Q}_{\mathrm{x},\mathrm{s}}$ with respect to $\bvec{\vx}$ and $s$. By further identifying
\begin{align}
\mathbb{P}_{\mathrm{m}}(\bvec{\vx}) \mathbb{P}_{\mathrm{s}}(s) = \frac{\mathbb{P}_{\mathrm{x},\mathrm{s}}(\bvec{\vx},s)+\mathbb{Q}_{\mathrm{x},\mathrm{s}}(\bvec{\vx},s)}{2}
\end{align}
and plugging it into (\ref{js_kl}), 
we finally reach to
\begin{align}
D_{\mathrm{JS}}\left(\mathbb{P}_{\mathrm{d}}\Vert\mathbb{P}_{\mathrm{g}} \right)&=D_{\mathrm{JS}}\left(\mathbb{P}_{\mathrm{x},\mathrm{s}}\Vert\mathbb{Q}_{\mathrm{x},\mathrm{s}} \right)\label{eqjoint2}
\end{align}
by the definition of JS divergence. 

It is worth noting that the equivalence holds even if the feasible solution set of $\mathbb{P}_{\mathrm{g}}$ determined by $G$ does not include the data distribution $\mathbb{P}_{\mathrm{d}}$. This is of practical interest as it is often difficult to guarantee the fulfillment of such premise when modeling $G$ by means of neural networks. 

Replacing the data and model distributions $\mathbb{P}_{\mathrm{d}}$ and $\mathbb{P}_{\mathrm{g}}$ respectively with $\mathbb{P}_{\mathrm{x},\mathrm{s}}$ and $\mathbb{Q}_{\mathrm{x},\mathrm{s}}$, we can systematically add a new bit with the same derivation as above. Repeating this procedure $L$ times eventually yields the recursively constructed $\{\mathbb{P}_{\mathrm{x},\mathbf{s}_{l}},\mathbb{Q}_{\mathrm{x},\mathbf{s}_{l}}\}_{l=1,\dots,L}$ followed by a sequence of JS divergence equalities
\begin{align}
D_{\mathrm{JS}}\left(\mathbb{P}_{\mathrm{d}}\Vert\mathbb{P}_{\mathrm{g}} \right)=\cdots=D_{\mathrm{JS}}\left(\mathbb{P}_{\mathrm{x},\mathbf{s}_{l-1}}\Vert \mathbb{Q}_{\mathrm{x},\mathbf{s}_{l-1}} \right)=\cdots=D_{\mathrm{JS}}\left(\mathbb{P}_{\mathrm{x},\mathbf{s}_L}\Vert \mathbb{Q}_{\mathrm{x},\mathbf{s}_L} \right).\label{eqjoint3}
\end{align}
\end{proof}

\subsection{Proof for Theorem 1}
\begin{thm}\label{thm}
	The min-max optimization problem of GANs~\cite{goodfellow2014generative} is equivalent to
	\begin{align}
	\min_G \max_{D} 
	\mathbb{E}_{\mathbb{P}_{\mathrm{x},\mathbf{s}_{l}}}\left\{\log \left[D(\bvec{\vx},\bvec{\vs}_l)\right]\right\}   + \mathbb{E}_{\mathbb{Q}_{\mathrm{x},\mathbf{s}_{l}}}\left\{\log\left[1-D(\bvec{\vx},\bvec{\vs}_l)\right]\right\} \quad\forall l\in\{1,2,\dots,L\},\label{min-max2}
	\end{align}
	where the two joint distributions, i.e., $\mathbb{P}_{\mathrm{x},\mathbf{s}_{l}}$ and $\mathbb{Q}_{\mathrm{x},\mathbf{s}_{l}}$, are defined in (\ref{PQL}) and the function $D$ maps $(\bvec{\vx},\bvec{\vs}_l)\in\mathcal{X}\times \{0,1\}^l$ onto $[0,1]$. For a fixed $G$, the optimal $D$ is
	\begin{align}
	D^*(\bvec{\vx},\bvec{\vs}_l)=\frac{\mathbb{P}_{\mathrm{x},\mathbf{s}_{l}}(\bvec{\vx},\bvec{\vs}_l)}{\mathbb{P}_{\mathrm{x},\mathbf{s}_{l}}(\bvec{\vx},\bvec{\vs}_l)+\mathbb{Q}_{\mathrm{x},\mathbf{s}_{l}}(\bvec{\vx},\bvec{\vs}_l)}=\frac{\mathbb{P}_{\mathrm{d}}(\bvec{\vx})}{\mathbb{P}_{\mathrm{d}}(\bvec{\vx})+\mathbb{Q}_{\mathrm{d}}(\bvec{\vx})},
	\end{align}
	whereas the attained inner maximum equals $D_{\mathrm{JS}}\left(\mathbb{P}_{\mathrm{x},\mathbf{s}_l}\Vert \mathbb{Q}_{\mathrm{x},\mathbf{s}_l}\right)= D_{\mathrm{JS}}\left(\mathbb{P}_{\mathrm{d}}\Vert\mathbb{P}_{\mathrm{g}} \right)$ for $l=1,2,\dots,L$.
\end{thm}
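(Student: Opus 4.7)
The plan is a direct adaptation of the classical argument from Goodfellow et al.~(2014) to the augmented joint distributions, combined with Lemma~1 at the end to collapse everything back to $D_{\mathrm{JS}}(\mathbb{P}_{\mathrm{d}} \Vert \mathbb{P}_{\mathrm{g}})$. Fix $l \in \{1,\dots,L\}$ and fix $G$; I would first solve the inner maximization pointwise. Writing the objective as
\begin{equation*}
\sum_{\bvec{\vs}_l \in \{0,1\}^l} \int_{\mathcal{X}} \bigl[ a(\bvec{\vx},\bvec{\vs}_l) \log D(\bvec{\vx},\bvec{\vs}_l) + b(\bvec{\vx},\bvec{\vs}_l) \log(1 - D(\bvec{\vx},\bvec{\vs}_l)) \bigr] \, d\bvec{\vx},
\end{equation*}
with $a = \mathbb{P}_{\mathrm{x},\mathbf{s}_l}$ and $b = \mathbb{Q}_{\mathrm{x},\mathbf{s}_l}$, the integrand is concave in $D(\bvec{\vx},\bvec{\vs}_l) \in [0,1]$ and its interior critical point gives the pointwise maximizer $D^*(\bvec{\vx},\bvec{\vs}_l) = a/(a+b)$ whenever $a+b > 0$. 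Since $D$ can be chosen independently at each $(\bvec{\vx},\bvec{\vs}_l)$, this pointwise maximum is attained by a measurable $D^*$, so it equals the supremum over the function class $D:\mathcal{X}\times\{0,1\}^l \to [0,1]$.

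Next, I would simplify the form of $D^*$ by induction on $l$, unrolling the recursion in~(\ref{PQL}). The base case $l=1$ follows immediately from~(\ref{PQ0}). For the inductive step, the recursion decomposes $\mathbb{P}_{\mathrm{x},\mathbf{s}_l}$ into a mixture of $\mathbb{P}_{\mathrm{x},\mathbf{s}_{l-1}}$ (when $s_l=0$) and $\mathbb{Q}_{\mathrm{x},\mathbf{s}_{l-1}}$ (when $s_l=1$), with the opposite convention for $\mathbb{Q}_{\mathrm{x},\mathbf{s}_l}$; tracking this through shows that $(\mathbb{P}_{\mathrm{x},\mathbf{s}_l}(\bvec{\vx},\bvec{\vs}_l), \mathbb{Q}_{\mathrm{x},\mathbf{s}_l}(\bvec{\vx},\bvec{\vs}_l))$ equals $(\mathbb{P}_{\mathrm{d}}(\bvec{\vx})/2^l, \mathbb{P}_{\mathrm{g}}(\bvec{\vx})/2^l)$ when the checksum $c(\bvec{\vs}_l) = s_1 \oplus \cdots \oplus s_l$ is $0$, and the pair is swapped when $c(\bvec{\vs}_l)=1$. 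Taking the ratio yields the claimed simplified form of $D^*$.

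Plugging $D^*$ back into the objective, the usual algebraic rearrangement (adding and subtracting $\log 2$ inside each logarithm and grouping terms into the two KL divergences against the mean distribution $(\mathbb{P}_{\mathrm{x},\mathbf{s}_l} + \mathbb{Q}_{\mathrm{x},\mathbf{s}_l})/2$) gives
\begin{equation*}
\max_D \, \mathbb{E}_{\mathbb{P}_{\mathrm{x},\mathbf{s}_{l}}}\!\bigl[\log D\bigr] + \mathbb{E}_{\mathbb{Q}_{\mathrm{x},\mathbf{s}_{l}}}\!\bigl[\log (1-D)\bigr] = 2\,D_{\mathrm{JS}}\!\left(\mathbb{P}_{\mathrm{x},\mathbf{s}_l}\Vert \mathbb{Q}_{\mathrm{x},\mathbf{s}_l}\right) - 2\log 2.
\end{equation*}
By Lemma~\ref{lma1}, $D_{\mathrm{JS}}(\mathbb{P}_{\mathrm{x},\mathbf{s}_l} \Vert \mathbb{Q}_{\mathrm{x},\mathbf{s}_l}) = D_{\mathrm{JS}}(\mathbb{P}_{\mathrm{d}} \Vert \mathbb{P}_{\mathrm{g}})$ for every $l$, which is exactly the inner maximum of the original GAN objective. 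Hence the outer minimization in $G$ is equivalent to that of~(\ref{Mmin-max}).

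The only non-routine step is the induction identifying how the recursion~(\ref{PQL}) assigns $\mathbb{P}_{\mathrm{d}}$ versus $\mathbb{P}_{\mathrm{g}}$ mass according to the checksum of $\bvec{\vs}_l$; this is the heart of why $D^*$ collapses to a $\bvec{\vs}_l$-independent ratio (up to bit-flipping) and is also what justifies the ``checksum principle'' used in Sec.~\ref{subsec:implement}. Once this bookkeeping is done, the pointwise optimization and the JS rewrite are standard, and Lemma~1 closes the proof.
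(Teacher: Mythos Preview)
Your proposal is correct and follows essentially the same route as the paper: pointwise maximization \`a la Goodfellow to obtain $D^*=\mathbb{P}_{\mathrm{x},\mathbf{s}_l}/(\mathbb{P}_{\mathrm{x},\mathbf{s}_l}+\mathbb{Q}_{\mathrm{x},\mathbf{s}_l})$, identification of the inner maximum with the JS divergence between the augmented distributions, and then an appeal to Lemma~\ref{lma1} to equate it with $D_{\mathrm{JS}}(\mathbb{P}_{\mathrm{d}}\Vert\mathbb{P}_{\mathrm{g}})$. Your explicit checksum induction (showing $(\mathbb{P}_{\mathrm{x},\mathbf{s}_l},\mathbb{Q}_{\mathrm{x},\mathbf{s}_l})$ equals $(\mathbb{P}_{\mathrm{d}}/2^l,\mathbb{P}_{\mathrm{g}}/2^l)$ or its swap according to $c(\bvec{\vs}_l)$) is more careful than the paper, which simply asserts the simplification ``is based on the recursive construction''; in particular your ``up to bit-flipping'' caveat is the honest statement, since $D^*$ equals $\mathbb{P}_{\mathrm{d}}/(\mathbb{P}_{\mathrm{d}}+\mathbb{P}_{\mathrm{g}})$ only when the checksum is zero and $\mathbb{P}_{\mathrm{g}}/(\mathbb{P}_{\mathrm{d}}+\mathbb{P}_{\mathrm{g}})$ otherwise.
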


\begin{proof}
Analogous to the proofs for GANs~\cite[Sec.4]{goodfellow2014generative}, we can construct a binary classification task for computing JS divergences, i.e.,
\begin{align}
D_{\mathrm{JS}}\left(\mathbb{P}_{\mathrm{x},\mathbf{s}_{l}}\Vert \mathbb{Q}_{\mathrm{x},\mathbf{s}_{l}} \right)=\max_D \mathbb{E}_{\mathbb{P}_{\mathrm{x},\mathbf{s}_l}}\left\{\log \left[D(\bvec{\vx},\bvec{\vs}_l)\right]\right\}+\mathbb{E}_{ \mathbb{Q}_{\mathrm{x},\mathbf{s}_l}}\left\{\log \left[1-D(\bvec{\vx},\bvec{\vs}_l)\right]\right\}\quad \forall l,
\end{align}
where the optimal $D^*$ equals
\begin{align}
D^*(\bvec{\vx},\bvec{\vs}_l)=\frac{\mathbb{P}_{\mathrm{x},\mathbf{s}_{l}}(\bvec{\vx},\bvec{\vs}_l)}{\mathbb{P}_{\mathrm{x},\mathbf{s}_{l}}(\bvec{\vx},\bvec{\vs}_l)+\mathbb{Q}_{\mathrm{x},\mathbf{s}_{l}}(\bvec{\vx},\bvec{\vs}_l)}\stackrel{(a)}{=}\frac{\mathbb{P}_{\mathrm{d}}(\bvec{\vx})}{\mathbb{P}_{\mathrm{d}}(\bvec{\vx})+\mathbb{Q}_{\mathrm{d}}(\bvec{\vx})}.
\end{align}
The equality $(a)$ in above is based on the recursive construction of $\mathbb{P}_{\mathrm{x},\mathbf{s}_{l}}$ and $\mathbb{Q}_{\mathrm{x},\mathbf{s}_{l}}$ from $\mathbb{P}_{\mathrm{d}}$ and $\mathbb{P}_{\mathrm{g}}$.

The equalities in (\ref{eqjoint}) imply that for any given pair $(\mathbb{P}_{\mathrm{d}},\mathbb{P}_{\mathrm{g}})$ the correspondingly constructed joint distribution pair $(\mathbb{P}_{\mathrm{x},\mathbf{s}_{l}},\mathbb{Q}_{\mathrm{x},\mathbf{s}_{l}})$ yields the same JS divergence. For this reason, we can use the two JS divergences interchangeably as the objective function while optimizing $\mathbb{P}_{\mathrm{g}}$, yielding
\begin{align}
&\min_G \max_D  \mathbb{E}_{\mathbb{P}_{\mathrm{d}}}\left\{\log \left[D(\bvec{\vx})\right]\right\}+\mathbb{E}_{ \mathbb{P}_{\mathrm{g}}}\left\{\log \left[1-D(\bvec{\vx})\right]\right\}\notag\\
&\quad\quad\quad\quad\equiv \min_G \max_D \mathbb{E}_{\mathbb{P}_{\mathrm{x},\mathbf{s}_l}}\left\{\log \left[D(\bvec{\vx},\bvec{\vs}_l)\right]\right\}+\mathbb{E}_{ \mathbb{Q}_{\mathrm{x},\mathbf{s}_l}}\left\{\log \left[1-D(\bvec{\vx},\bvec{\vs}_l)\right]\right\}\quad \forall l.
\end{align}
\end{proof}

\subsection{Generalization of Lemma 1}\label{Ssub:lemma1_gen}
In this work, we base the development of PA on Lemma 1 and Theorem 1. From a broader perspective, the random bits $\bvec{\vs}$ can be any generic random variables applicable for generative modelling.
\begin{pro}
Let $\bvec{\vs}$ denote a random variable with two unequal distributions $\mathbb{P}_{\mathrm{s},a}$ and $\mathbb{P}_{\mathrm{s},b}$. Together with the two distributions $\mathbb{P}_{\mathrm{d}}$ and $\mathbb{P}_{\mathrm{g}}$ of $\bvec{\vx}$, two joint distributions are constructed as follows:
\begin{align}
\begin{array}{ll}
\mathbb{P}_{\mathrm{x},\mathrm{s}}(\bvec{\vx},\bvec{\vs}) =\frac{\mathbb{P}_{\mathrm{d}}(\bvec{\vx}) \mathbb{P}_{\mathrm{s},a}(\bvec{\vs})+\mathbb{P}_{\mathrm{g}}(\bvec{\vx}) \mathbb{P}_{\mathrm{s},b}(\bvec{\vs})}{2} \\
\mathbb{Q}_{\mathrm{x},\mathrm{s}}(\bvec{\vx},\bvec{\vs}) =\frac{\mathbb{P}_{\mathrm{d}}(\bvec{\vx}) \mathbb{P}_{\mathrm{s},b}(\bvec{\vs})+\mathbb{P}_{\mathrm{g}}(\bvec{\vx}) \mathbb{P}_{\mathrm{s},a}(\bvec{\vs})}{2}
\end{array}.\label{propPQ}
\end{align}
The mutual information $I(\bvec{\vx}; \bvec{\vs})$ and $\tilde{I}(\bvec{\vx}; \bvec{\vs})$, with respect to $\mathbb{P}_{\mathrm{x},\mathrm{s}}$ and $\mathbb{Q}_{\mathrm{x},\mathrm{s}}$, are minimized to zero if $\mathbb{P}_{\mathrm{d}}=\mathbb{P}_{\mathrm{g}}$. When $\mathbb{P}_{\mathrm{s},a}$ and $\mathbb{P}_{\mathrm{s},b}$ have non-overlapped supports, the JS divergence between $\mathbb{P}_{\mathrm{x},\mathrm{s}}$ and $\mathbb{Q}_{\mathrm{x},\mathrm{s}}$ equals the JS divergence between $\mathbb{P}_{\mathrm{d}}$ and $\mathbb{P}_{\mathrm{g}}$, i.e., $D_{\mathrm{JS}}\left(\mathbb{P}_{\mathrm{d}}\Vert\mathbb{P}_{\mathrm{g}} \right)=D_{\mathrm{JS}}\left(\mathbb{P}_{\mathrm{x},\mathrm{s}}\Vert\mathbb{Q}_{\mathrm{x},\mathrm{s}} \right)$.
\end{pro}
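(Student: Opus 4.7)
The plan is to decouple the proposition into its two independent claims and handle each with a different technique. The vanishing mutual information under $\mathbb{P}_{\mathrm{d}}=\mathbb{P}_{\mathrm{g}}$ is essentially a factorization argument, while the JS divergence equality is a direct generalization of the single-bit case in Lemma 1, with the non-overlapping support hypothesis playing the role of the Kronecker-delta disjointness.

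For the first claim, I would simply substitute $\mathbb{P}_{\mathrm{d}}=\mathbb{P}_{\mathrm{g}}$ into (\ref{propPQ}). Both joints then collapse to
\[
\mathbb{P}_{\mathrm{x},\mathrm{s}}(\bvec{\vx},\bvec{\vs}) \;=\; \mathbb{Q}_{\mathrm{x},\mathrm{s}}(\bvec{\vx},\bvec{\vs}) \;=\; \mathbb{P}_{\mathrm{d}}(\bvec{\vx})\cdot\frac{\mathbb{P}_{\mathrm{s},a}(\bvec{\vs})+\mathbb{P}_{\mathrm{s},b}(\bvec{\vs})}{2},
\]
which factorizes as the product of the $\bvec{\vx}$-marginal $\mathbb{P}_{\mathrm{d}}$ and the $\bvec{\vs}$-marginal $(\mathbb{P}_{\mathrm{s},a}+\mathbb{P}_{\mathrm{s},b})/2$. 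Independence of $\bvec{\vx}$ and $\bvec{\vs}$ under both joints immediately yields $I(\bvec{\vx};\bvec{\vs})=\tilde{I}(\bvec{\vx};\bvec{\vs})=0$.

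For the JS equality, the plan is to mimic the proof structure used for (\ref{eqjoint}) in Lemma 1, but replace the Kronecker-delta case analysis by a piecewise argument driven by the supports of $\mathbb{P}_{\mathrm{s},a}$ and $\mathbb{P}_{\mathrm{s},b}$. Let $\mathbb{M}=(\mathbb{P}_{\mathrm{x},\mathrm{s}}+\mathbb{Q}_{\mathrm{x},\mathrm{s}})/2$. On the support of $\mathbb{P}_{\mathrm{s},a}$ the $\mathbb{P}_{\mathrm{s},b}$ contribution vanishes, so the Radon-Nikodym derivative $d\mathbb{P}_{\mathrm{x},\mathrm{s}}/d\mathbb{M}$ reduces to $2\mathbb{P}_{\mathrm{d}}(\bvec{\vx})/[\mathbb{P}_{\mathrm{d}}(\bvec{\vx})+\mathbb{P}_{\mathrm{g}}(\bvec{\vx})]$, and symmetrically it reduces to $2\mathbb{P}_{\mathrm{g}}(\bvec{\vx})/[\mathbb{P}_{\mathrm{d}}(\bvec{\vx})+\mathbb{P}_{\mathrm{g}}(\bvec{\vx})]$ on the support of $\mathbb{P}_{\mathrm{s},b}$. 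Splitting the KL integral along these two disjoint regions and marginalizing out $\bvec{\vs}$ (using that each $\mathbb{P}_{\mathrm{s},*}$ integrates to one) collapses $D_{\mathrm{KL}}(\mathbb{P}_{\mathrm{x},\mathrm{s}}\Vert\mathbb{M})$ to $D_{\mathrm{JS}}(\mathbb{P}_{\mathrm{d}}\Vert\mathbb{P}_{\mathrm{g}})$. An identical computation for $\mathbb{Q}_{\mathrm{x},\mathrm{s}}$ gives the same value, and averaging yields the claim.

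The main obstacle, and the reason the non-overlap assumption is indispensable, lies precisely in this piecewise reduction. If the supports of $\mathbb{P}_{\mathrm{s},a}$ and $\mathbb{P}_{\mathrm{s},b}$ intersected, then on the overlap each Radon-Nikodym derivative would become a nontrivial convex combination coupling $\mathbb{P}_{\mathrm{d}}(\bvec{\vx})$ and $\mathbb{P}_{\mathrm{g}}(\bvec{\vx})$ through the local ratio of $\mathbb{P}_{\mathrm{s},a}$ to $\mathbb{P}_{\mathrm{s},b}$, and the clean cancellation that produces $D_{\mathrm{JS}}(\mathbb{P}_{\mathrm{d}}\Vert\mathbb{P}_{\mathrm{g}})$ would fail, with the joint JS divergence strictly shrinking by a data-processing-inequality type argument. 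The disjoint-support hypothesis is thus the continuous analogue of the deterministic checksum from Lemma 1, where $\delta[s]$ and $\delta[s-1]$ automatically have disjoint supports. Once this correspondence is recognized, the remaining bookkeeping is a direct generalization of the bit-case algebra in (\ref{PQ0})--(\ref{eqjoint2}) and requires no new ideas.
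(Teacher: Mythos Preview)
Your proposal is correct and follows essentially the same route as the paper. The only cosmetic difference is that the paper explicitly passes through the mutual-information identity $D_{\mathrm{JS}}(\mathbb{P}_{\mathrm{x},\mathrm{s}}\Vert\mathbb{Q}_{\mathrm{x},\mathrm{s}})=\tfrac{1}{2}\bigl(I(\bvec{\vx};\bvec{\vs})+\tilde I(\bvec{\vx};\bvec{\vs})\bigr)$ (which holds because $\mathbb{M}$ coincides with the product of the common marginals, even without the support assumption) and then evaluates each mutual information under the disjoint-support hypothesis, whereas you compute $D_{\mathrm{KL}}(\mathbb{P}_{\mathrm{x},\mathrm{s}}\Vert\mathbb{M})$ and $D_{\mathrm{KL}}(\mathbb{Q}_{\mathrm{x},\mathrm{s}}\Vert\mathbb{M})$ directly; since these KLs \emph{are} the two mutual informations, the underlying calculation is identical.
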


\begin{proof}
The mutual information between $\bvec{\vx}$ and $\bvec{\vs}$ is minimized and equal zero if they are independent. Under the condition $\mathbb{P}_{\mathrm{s},a}\not=\mathbb{P}_{\mathrm{s},b}$, the two joint distributions $\mathbb{P}_{\mathrm{x},\mathrm{s}}$ and $\mathbb{Q}_{\mathrm{x},\mathrm{s}}$ become factorizable if $\mathbb{P}_{\mathrm{d}}=\mathbb{P}_{\mathrm{g}}$. Analogous to the proof of Lemma 1, the JS divergence between $\mathbb{P}_{\mathrm{x},\mathrm{s}}$ and $\mathbb{Q}_{\mathrm{x},\mathrm{s}}$ equals the mean of $I(\bvec{\vx}; \bvec{\vs})$ and $\tilde{I}(\bvec{\vx}; \bvec{\vs})$, i.e., 
\begin{align}
D_{\mathrm{JS}}\left(\mathbb{P}_{\mathrm{x},\mathrm{s}}\Vert\mathbb{Q}_{\mathrm{x},\mathrm{s}} \right) = \frac{I(\bvec{\vx}; \bvec{\vs})+\tilde{I}(\bvec{\vx}; \bvec{\vs})}{2}.
\end{align}
Expressing mutual information as KL divergence plus the condition that $\mathbb{P}_{\mathrm{s},a}$ and $\mathbb{P}_{\mathrm{s},b}$ have non-overlapped supports, we reach to (\ref{mi_js}) for both $I(\bvec{\vx}; \bvec{\vs})$ and $\tilde{I}(\bvec{\vx}; \bvec{\vs})$ and thereby conclude the proof.
\end{proof}
Lemma 1 is a special case of Proposition 1, namely, $\mathbb{P}_{\mathrm{s},a}(s)=\delta[s]$ and $\mathbb{P}_{\mathrm{s},b}(s)=\delta[s-1]$.
\section{Implementation Details of PA-GAN}\label{sec:sup_implement}

\subsection{Input and Feature Space Augmentation}

As being presented in Sec. 3.2, we spatially replicate each augmentation bit and perform depth concatenation with 
the input $\bvec{\vx}$ or its learned feature maps at the intermediate hidden layers. %
After concatenation along the channel axis, the input layer or the hidden layer then process such augmented input. For instance, in the case of a convolutional layer, it processes the augmented input as
\begin{align}
\mathrm{conv}(\phi(\bvec{\vx}),s_1,\dots,s_l)=\mathrm{conv}(\phi(\bvec{\vx}))+\sum_l\mathrm{conv}(s_l)\label{aug_conv}
\end{align}
where the kernel width/height, stride and padding type used for filtering the augmentation bits are the same as that of $\phi(\bvec{\vx})$.\footnote{\url{https://github.com/boschresearch/PA-GAN/blob/master/pagan_ops.py}} Depending on the augmentation space, here $\phi(\bvec{\vx})$ collectively denotes either the input $\bvec{\vx}$ or its feature maps. When spectral normalization is in use, the power method is applied to estimate the largest singular value of the filter matrix that processes the augmented input. In case of augmenting the input to a residual block, the augmentation bits are passed along with $\bvec{\vx}$ or its feature maps into the first convolutional layer in the main branch as well as into the shortcut connection. We bypass the shortcut connection if it is an identity mapping.

When progression scheduling increases the augmentation level, a new set of filter coefficients are instantiated to process the new augmentation bit according to (\ref{aug_conv}). They are initialized by random Gaussian variables with the mean and variance computed from the existing filter coefficients for $\phi(\bvec{\vx})$. Before filtering, each augmentation bit can be additionally modulated by two trainable parameters $\{\lambda_l,\beta_l\}$. The scaling parameter $\lambda_l$ is initialized with the mean value of the previous ones, where the first one, i.e., $\lambda_1$, is initialized as one. The offset parameters $\{\beta_l\}$ are always initialized as zeros.

\subsection{Mini-batch Discrimination} \label{subsec:minibatch}
Each mini-batch is constructed with the same number of real data samples, synthetic samples and bit sequences. %
Each bit sequence is randomly sampled and associated with one real and one synthetic sample. %
Based on the checksums of the formed pairs, we can decide their correct class and feed it into the discriminator to compute the cross-entropy loss. This way of generating $(\bvec{\vx},\bvec{\vs})$ guarantees a balanced number of TRUE/FAKE samples, forming the two mini-batches $\mathcal{B}_{\mathrm{tr}}$ and $\mathcal{B}_{\mathrm{fk}}$.

\subsection{Warm-up Phase of Progression}\label{sec:warm-up}

At the beginning of the new augmentation level the discriminator is ignorant about this disruptive change and as the bit $s=1$ flips the reference label it will lead to about $50\%$ discriminator errors in one mini-batch. Aiming at a smooth transition from the current augmentation level to the new one, here we introduce two warm-up mechanisms that are usable when the discriminator exhibits deficiency in timely coping with the new augmentation level.

The first mechanism instantiates an Adam optimizer, independent of the ones for $D$ and $G$, to solely train the newly introduced weights right after progressing to the new level. It takes the $D$ loss and can use the same learning hyperparameters as those of the $D$ optimizer. After multiple iterations (e.g., $1$\unit{k}), we continue with the original alternation between the $D$ and $G$ optimizer, where the new weights together with the existing ones of the discriminator network are handled by the $D$ optimizer.

According to Lemma 1, the augmentation bits shall follow a uniform distribution, i.e., $\mathbb{P}(s=1)=p$ and $\mathbb{P}(s=0)=1-p$ with $p=0.5$. As the new augmentation bit taking on the value one causes discriminator errors, the second mechanism temporally adopts a non-uniform distribution when kicking off a new augmentation level. %
Namely, we can on purpose create more $0$s than $1$s by linearly increasing $p$ from $0$ and $0.5$ within a given number of iterations, e.g., $5$\unit{k}. 

\subsection{Loss Functions}
In this work, we experimented of using PA with the following loss functions of GANs.
\paragraph{Non-saturating (NS) loss.} 
The cross-entropy loss for $D$ is given as
\begin{align}
\min_D -\mathbb{E}_{\mathbb{P}_{\mathrm{x},\mathbf{s}_{l}}}\left\{\log D(\bvec{\vx},\bvec{\vs}_l)\right\}-\mathbb{E}_{\mathbb{Q}_{\mathrm{x},\mathbf{s}_{l}}}\left\{\log \left[1-D(\bvec{\vx},\bvec{\vs}_l)\right]\right\}.
\end{align}
Since both distribution $\mathbb{P}_{\mathrm{x},\mathbf{s}_{l}}$ and $\mathbb{Q}_{\mathrm{x},\mathbf{s}_{l}}$ involve synthetic samples, the non-saturating (NS) loss for $G$~\cite{goodfellow2014generative} is reformulated as
\begin{align}
\min_G -\mathbb{E}_{\mathbb{Q}_{\mathrm{x},\mathbf{s}_{l}}}\left\{\log D(\bvec{\vx},\bvec{\vs}_l)\right\}-\mathbb{E}_{\mathbb{P}_{\mathrm{x},\mathbf{s}_{l}}}\left\{\log \left[1-D(\bvec{\vx},\bvec{\vs}_l)\right]\right\}.\label{ns-loss}
\end{align}
During training, the two expectations are approximated by averaging over the samples in the TRUE/FAKE mini-batches $\mathcal{B}_{\mathrm{tr}}$ and $\mathcal{B}_{\mathrm{fk}}$, which construction is discussed in Sec.~\ref{subsec:minibatch}.

\paragraph{Hinge loss.} 
Instead of cross-entropy loss, $D$ can also be trained using the hinge loss
\begin{align}
	\min_D \mathbb{E}_{\mathbb{P}_{\mathrm{x},\mathbf{s}_{l}}}\left\{\max \left[0, 1-D(\bvec{\vx},\bvec{\vs}_l)\right]\right\}+\mathbb{E}_{\mathbb{Q}_{\mathrm{x},\mathbf{s}_{l}}}\left\{\max\left[0, 1+ D(\bvec{\vx},\bvec{\vs}_l)\right]\right\}.
\end{align}
Accordingly, the $G$ loss is adapted to
\begin{align}
\min_G \mathbb{E}_{\mathbb{P}_{\mathrm{x},\mathbf{s}_{l}}}\left\{D(\bvec{\vx},\bvec{\vs}_l)\right\}-\mathbb{E}_{\mathbb{Q}_{\mathrm{x},\mathbf{s}_{l}}}\left\{ D(\bvec{\vx},\bvec{\vs}_l)\right\}.
\end{align}

\paragraph{WGAN-GP.}\label{subsec:wgan}
In the main paper, we have focused on generative modeling with JS divergence. It is also possible to interchange the JS divergence with the Wasserstein distance and then cast GAN training into WGAN-GP training~\cite{Arjovsky2017WGAN}. Wasserstein distance is weaker than JS divergence and $D$ termed critic in WGAN no longer solves the classification task. So, we alternatively exploit the stochastic model averaging role of the augmentation bits rather than their regularization role.

Briefly, with the Kantorovich-Rubinstein duality, minimizing the Wasserstein distance between $\mathbb{P}_{\mathrm{d}}$ and $\mathbb{P}_{\mathrm{g}}$ is transformed into the following two-player game
\begin{align}
\min_G \max_D \mathbb{E}_{\mathbb{P}_{\mathrm{x},\mathbf{s}_{l}}}\left\{D(\bvec{\vx},\bvec{\vs}_l)\right\}-\mathbb{E}_{\mathbb{Q}_{\mathrm{x},\mathbf{s}_{l}}}\left\{ D(\bvec{\vx},\bvec{\vs}_l)\right\}.
\end{align}
Ideally, $D$ in the context of WGAN should be $1$-Lipschitz continuous. As a pragmatic relaxation on this constraint, a gradient penalty (GP)~\cite{gulrajani_NIPS2017} is commonly added to the objective function when optimizing $D$.

Within the same mini-batch of $\bvec{\vx}\sim \mathbb{P}_{\mathrm{d}}$ and $\bvec{\vx}\sim \mathbb{P}_{\mathrm{g}}$, we draw $M$ mini-batches $\bvec{\vs}\sim \mathbb{P}_{\mathrm{s}}$ of the same size. Combining each of them with the data and synthetic samples, we create $M$ mini-batches for approximating the expectations in the objective function
\begin{align}
&\mathbb{E}_{\mathbb{P}_{\mathrm{x},\mathbf{s}_{l}}}\left\{D(\bvec{\vx},\bvec{\vs}_l)\right\}-\mathbb{E}_{\mathbb{Q}_{\mathrm{x},\mathbf{s}_{l}}}\left\{ D(\bvec{\vx},\bvec{\vs}_l)\right\}\notag\\
&\quad  \approx L_{m}\stackrel{\Delta}{=}\frac{1}{\vert \mathcal{B}_{\mathrm{tr},m}\vert }\sum_{(\bvec{\vx},\bvec{\vs}_l)\in\mathcal{B}_{\mathrm{tr},m}} \hspace{-0.3cm} D(\bvec{\vx},\bvec{\vs}_l) -\frac{1}{\vert \mathcal{B}_{\mathrm{fk},m}\vert }\sum_{(\bvec{\vx},\bvec{\vs}_l)\in\mathcal{B}_{\mathrm{fk},m}} \hspace{-0.3cm} D(\bvec{\vx},\bvec{\vs}_l),\quad m = 1,\dots,M.
\end{align}
The critic $D$ of WGAN-GP is trained to maximize the averaged loss $L_{m}$ across the $M$ mini-batches, making use of stochastic model averaging. The generator $G$ is then trained to minimize the maximum of $\{L_m\}$, $m = 1,\dots,M$, i.e. picking the best performing case of the critic, as a good quality of the critic $D$ is important to the optimization process of $G$ in the context of WGAN. With single bit augmentation of $\mathtt{PA\text{ }(feat)}$ and two draws per minibatch, we can improve WGAN-GP of $\mathtt{SN\text{ }DCGAN}$ on CIFAR10 from $25.0$ to $23.9$ FID. 
Here, we boost the diversity of the two draws by choosing them with opposite checksums.

\section{Additional Ablation Studies}\label{Ssec:experiments}
In this section, we provide additional ablation studies of PA. Complementary to Table~1 in Sec. 4.1., an ablation study on the choice of augmentation space is conducted in Sec.~\ref{Ssubsec:abl aug level}, evaluating PA across input, low- and high-level feature space augmentation. One important finding in Sec.~4.2. of the main paper is that dropout and PA are complementary and mutually beneficial. In Sec.~\ref{Ssubsec:dropout}, we report our detailed investigation on the dropout regularization followed by evaluation of its combination with PA across the datasets and architectures. %
The two time-scale update rule (TTUR)~\cite{heuselttur2017}, updating the discriminator and generator with different learning rates, is notoriously helpful to stabilize GAN training. In Sec.~\ref{Ssubsec:ttur}, we examine the performance of PA under different TTURs and then compare it with the adaptive learning rate. %

\subsection{Ablation Study on Augmentation Space}\label{Ssubsec:abl aug level}

\begin{table*}[t!]
	\setlength{\tabcolsep}{0.5em} 
	\renewcommand{\arraystretch}{1.1}
	\centering
	\caption{Median FIDs of input and feature space augmentation across five random runs. We experiment with augmenting input and features at different intermediate layers, e.g. $\mathtt{feat_{N/4}}$ denotes layer with the spatial dimension $N/4$, where $N$ is the input image dimension.} \label{table_pa_auglevel}	
	\vspace{0.5em}
	\begin{tabular}{ll|ccccc} 
		\rowcolor{verylightgray}
		& & \multicolumn{5}{c}{\footnotesize{}{\text{$\mathtt{PA}$}}} \tabularnewline 
		
		\rowcolor{verylightgray}
		
		\multirow{-2}{*}{\footnotesize{}{\text{$\mathtt{Method}$}}} &	\multirow{-2}{*}{\footnotesize{}{\text{$\mathtt{Dataset}$}}}	& \footnotesize{}{\text{\xmark}} &\footnotesize{}{\text{$\mathtt{input\text{ }(N)}$}}  & \footnotesize{}{\text{$\mathtt{feat_{N/2}}$}} & \footnotesize{}{\text{$\mathtt{feat_{N/4}}$}} & \footnotesize{}{\text{$\mathtt{feat_{N/8}}$}}\tabularnewline 
		
	 &  \multirow{1}{*}{\text{$\mathtt{CIFAR10}$}}	& \text{\footnotesize$26.0$} 
		&\text{\footnotesize$\mathbf{22.2}$} & \text{\footnotesize$22.8$} &\textbf{\footnotesize$22.7$} & \textbf{\footnotesize$22.6$} \tabularnewline 
		
		\multirow{-2}{*}{\text{$\mathtt{SN\text{ }DCGAN\text{ - }NS\text{ }Loss}$}} &  \multirow{1}{*}{\text{$\mathtt{CELEBA\text{-}HQ}$}}&\text{\footnotesize$24.3$} 
		&\text{\footnotesize$20.8$} &\text{\footnotesize$19.6$} & \footnotesize{\text{$\mathbf{18.8}$}} & \textbf{\footnotesize$\mathbf{18.8}$} \tabularnewline
				\arrayrulecolor{verylightgray}	\hline
				
	 &  	\multirow{1}{*}{\text{$\mathtt{CIFAR10}$}}	& \text{\footnotesize$18.8$} &  \text{\footnotesize$\mathbf{16.1}$} &  \text{\footnotesize$16.3$}& \textbf{\footnotesize${16.3}$} & \text{-}\tabularnewline

\multirow{-2}{*}{\text{$\mathtt{SA\text{ }GAN\text{ }(sBN)\text{ - }Hinge\text{ }Loss}$}}	 &  		\multirow{1}{*}{\text{$\mathtt{CELEBA\text{-}HQ}$}}	& \text{\footnotesize$17.8$} &  \text{\footnotesize$\mathbf{15.4}$} &  \text{\footnotesize$\mathbf{15.4}$}& \textbf{\footnotesize${16.4}$} & \textbf{\footnotesize${15.8}$}  \tabularnewline 

\arrayrulecolor{verylightgray}	\hline

	\end{tabular}	
\end{table*}

In the main paper, in Table 1 of Sec. 4.1. we reported the FID scores achieved by PA,
by augmenting either the input - $\mathtt{PA}\text{ }(\mathtt{input})$, or its features with spatial dimension $N/8$ - $\mathtt{PA}\text{ }(\mathtt{feat_{N/8}})$, where $N$ is the input image dimension (see Sec.~\ref{sec:networks} for the detailed configuration). 
Here, we further perform the ablation study on the choice of the augmentation space across two datasets (CIFAR10 and CELEBA-HQ) and two architectures (SN DCGAN and SA GAN). From Table~\ref{table_pa_auglevel}, we observe the stable performance improvement across all configurations, inline with Table 1 of the main paper. The performance difference across different feature space augmentations is generally small (less than one FID point).

\begin{table}[t!]
	\vspace{-1em}
	\setlength{\tabcolsep}{0.22em} 
	\renewcommand{\arraystretch}{1.1}
	\centering
	\caption{Median FIDs (across five random runs) of $\mathtt{Dropout}$ and $\mathtt{SpatialDropout}$ applied on the input layer or intermediate layers with different keep rates on CIFAR10 using SN DCGAN.} \label{table:dropout}\vspace{0.5em}
\begin{tabular}{c|cccc|cccc} 
	\rowcolor{verylightgray}
	&    \multicolumn{4}{c}{\footnotesize{}{\text{$\mathtt{Dropout}$}}} &  \multicolumn{4}{c}{\footnotesize{}{\text{$\mathtt{SpatialDropout}$}}}  \tabularnewline 
		\rowcolor{verylightgray}
	\multirow{-2}{*}{\footnotesize{}{\text{$\mathtt{Keep\text{ }rate}$}} } &\footnotesize{}{\text{$\mathtt{input\text{ }(N)}$}} & \footnotesize{}{\text{$\mathtt{feat_{N/2}}$}} & \footnotesize{}{\text{$\mathtt{feat_{N/4}}$}} & \footnotesize{}{\text{$\mathtt{feat_{N/8}}$}}&\footnotesize{}{\text{$\mathtt{input\text{ }(N)}$}} & \footnotesize{}{\text{$\mathtt{feat_{N/2}}$}} & \footnotesize{}{\text{$\mathtt{feat_{N/4}}$}} & \footnotesize{}{\text{$\mathtt{feat_{N/8}}$}} \tabularnewline 
	\footnotesize{}{\text{$1.0$}} & \multicolumn{8}{c}{\text{\footnotesize$26.0$}} \tabularnewline
	\arrayrulecolor{verylightgray}	\hline 
	\footnotesize{}{\text{$0.95$}} & \text{\footnotesize$25.5$} &  \text{\footnotesize$25.6$} &  \text{\footnotesize$24.1$} &  \text{\footnotesize$25.3$} & 
									 \text{\footnotesize$26.0$} &  \text{\footnotesize$25.3$} &  \text{\footnotesize$24.9$} &  \text{\footnotesize$26.0$} \tabularnewline 
	\footnotesize{}{\text{$0.9$}}  & \text{\footnotesize$26.4$} &  \text{\footnotesize$25.1$} &  \text{\footnotesize$23.4$} &  \text{\footnotesize$24.6$} & 
	                                 \text{\footnotesize$26.2$} &  \text{\footnotesize$25.3$} &  \text{\footnotesize$24.0$} &  \text{\footnotesize$25.8$} \tabularnewline 
	\footnotesize{}{\text{$0.7$}} & \text{\footnotesize$28.0$} &  \text{\footnotesize$25.6$} &  \text{\footnotesize$\mathbf{22.1}$} &  \text{\footnotesize$24.4$} &
	                                \text{\footnotesize$27.6$} &  \text{\footnotesize$26.1$} &  \text{\footnotesize$\underline{23.4}$} &  \text{\footnotesize$25.3$} \tabularnewline 
	\footnotesize{}{\text{$0.5$}} & \text{\footnotesize$27.1$} &  \text{\footnotesize$25.9$} &  \text{\footnotesize$23.1$} &  \text{\footnotesize$24.0$} & 
									\text{\footnotesize$29.7$} &  \text{\footnotesize$26.9$} &  \text{\footnotesize$24.1$} &  \text{\footnotesize$25.4$}\tabularnewline 
	\footnotesize{}{\text{$0.3$}} & \text{\footnotesize$27.7$} &  \text{\footnotesize$25.6$} &  \text{\footnotesize$22.4$} &  \text{\footnotesize$24.6$} & 
	                                \text{\footnotesize$31.3$} &  \text{\footnotesize$28.8$} &  \text{\footnotesize$24.6$} &  \text{\footnotesize$25.8$} \tabularnewline 
	\footnotesize{}{\text{$0.1$}} & \text{\footnotesize$32.3$} &  \text{\footnotesize$28.6$} &  \text{\footnotesize$24.3$} &  \text{\footnotesize$23.9$} & 
	                                 \text{\footnotesize$45.7$} &  \text{\footnotesize$37.7$} &  \text{\footnotesize$28.8$} &  \text{\footnotesize$25.8$}  \tabularnewline 
\end{tabular}
\end{table}

\subsection{Ablation Study on Dropout and its Combination with PA}\label{Ssubsec:dropout}
In Sec.~4.2. of the main paper, we have shown the effectiveness of using dropout, particularly, in combination with the proposed PA. In this part we report further ablations for both techniques.

We start from applying dropout at the input layer and different intermediate layers. Note that, in contrast to dropout, we apply PA directly on the input and not on the input layer. %
In addition, we experiment with different keep rates of the dropout, i.e. $\{0.1,0.3,0,5,0.7,0.9,0.95\}$. Table~\ref{table:dropout} reports the FID scores achieved with different dropout configurations. In contrast to PA (see Table~\ref{table_pa_auglevel} or Table 1 in the main paper), the performance of dropout is very dependent on the applied layer and the selected keep rate. The feature space with the spatial dimension $N/4$ together with the keep rate $0.7$ is the best performing setting on CIFAR10 with SN DCGAN.

We further note that the binary dropout mask is independently drawn for each entry of the input or intermediate layer outputs (each convolution feature map activation is
"dropped-out" independently). In addition, we also experiment with the \emph{spatial} dropout ($\mathtt{SpatialDropout}$) \cite{Tompson2015EfficientOL}, which randomly drops the entire feature maps instead of individual elements. The results in Tables~\ref{table:dropout} show that the entry-wise dropout outperforms the spatial dropout in the context of GAN training, i.e., FID $22.1$ vs. $23.4$. Therefore we only consider the entry-wise dropout for comparison with PA in the main paper.

In Table 3 of the main paper, we have successfully combined dropout at its best setting with PA on CIFAR10 with SN DCGAN and SA GAN. Table~\ref{table:pa_dropout} and~\ref{tab:dropoutpa} additionally report the FID improvements where dropout is applied at different intermediate layers and keep rates. In all configurations, PA provides complementary gains. %
Note that, for CELEBA-HQ $\mathtt{Dropout}$ alone in Table~\ref{table:pa_dropout} only has a marginal performance improvement over the baseline, whereas its combination with $\mathtt{PA}$ leads to larger performance boost. Overall, Table~\ref{table:pa_dropout},~\ref{tab:dropoutpa} plus Table~3 in the main paper confirms the effectiveness of exploiting both techniques. Adding PA is beneficial independent of the dropout settings (keep rate and applied layer), it helps to reduce the FID sensitivity to the dropout hyperparameter choice.  
\begin{table}[t!]
	\vspace{-0.5em}
	\setlength{\tabcolsep}{0.2em} 
	\renewcommand{\arraystretch}{1.1}
	\centering
	\caption{Median FIDs (across five random runs) of PA together with dropout applied on different intermediate layers with the keep rate $0.7$ and on CIFAR10 and CELEBA-HQ.} \label{table:pa_dropout}
	\begin{tabular}{l|c|c|c|ccc|c} 
		\rowcolor{verylightgray}
		& & &  & \multicolumn{3}{c}{\footnotesize{}{\text{-$\mathtt{Dropout}$}~\cite{JMLR:v15:srivastava14a}}} & \tabularnewline 
		\rowcolor{verylightgray}
		\multirow{-2}{*}{	\footnotesize{}{\text{$\mathtt{Method}$}}} & 		\multirow{-2}{*}{	\footnotesize{}{\text{$\mathtt{Dataset}$}}}& \multirow{-2}{*}{\footnotesize{}{\text{$\mathtt{PA}$}}} & \multirow{-2}{*}{\footnotesize{}{\text{$\mathtt{GAN}$}}} & 	
		\footnotesize{}{\text{$\mathtt{feat_{N/8}}$}} &
		\footnotesize{}{\text{$\mathtt{feat_{N/4}}$}} &
		\footnotesize{}{\text{$\mathtt{feat_{N/2}}$}}  &  \multirow{-2}{*}{\footnotesize{}{\text{$\overline{\Delta\mathtt{PA}}$}}} \tabularnewline  
 & & 	   	\footnotesize{}{\text{\xmark }} & \text{\footnotesize${26.0}$}  & \text{\footnotesize${24.4}$} & \text{\footnotesize${22.1}$} & \text{\footnotesize${25.6}$}  	& \text{\footnotesize${2.0}$}\tabularnewline  
	\multirow{-2}{*}{	\footnotesize{}{\text{$\mathtt{SN\text{ }DCGAN\text{ - }NS\text{ }Loss}$}}} 	& & \footnotesize{}{\text{$\mathtt{feat_{N/8}}$}} & \text{\footnotesize${22.6}$}  & \text{\footnotesize${21.3}$} & \text{\footnotesize${\mathbf{20.6}}$} & \text{\footnotesize${22.5}$}& \text{\footnotesize${1.1}$} \tabularnewline 	 \arrayrulecolor{verylightgray}	 	 \cline{3-8}  \arrayrulecolor{verylightgray}
&	 & 	   	\footnotesize{}{\text{\xmark }} & \text{\footnotesize${18.8}$}  &  \text{\footnotesize${-}$} & \text{\footnotesize${16.2}$} & \text{\footnotesize${17.1}$} 	& \text{\footnotesize${2.2}$} \tabularnewline  
		\multirow{-2}{*}{	\footnotesize{}{\text{$\mathtt{SA\text{ }GAN\text{ }(sBN)\text{ - }Hinge\text{ }Loss}$}}} &\multirow{-4}{*}{	\footnotesize{}{\text{$\mathtt{CIFAR10}$}}}   & \footnotesize{}{\text{$\mathtt{feat_{N/4}}$}} & \text{\footnotesize${16.3}$}  &  \text{\footnotesize${-}$} & \text{\footnotesize$\mathbf{15.6}$} & \text{\footnotesize${15.7}$}& \text{\footnotesize${0.7}$} \tabularnewline 
		\arrayrulecolor{verylightgray}\hline 
		& & 	\footnotesize{}{\text{\xmark }} & \text{\footnotesize${24.3}$}  &  \text{\footnotesize${-}$} & \text{\footnotesize${24.0}$} & \text{\footnotesize$-$}& \text{\footnotesize${0.3}$} \tabularnewline 
			\multirow{-2}{*}{	\footnotesize{}{\text{$\mathtt{SN\text{ }DCGAN\text{ }\text{ - }NS\text{ }Loss}$}}} &\multirow{-2}{*}{	\footnotesize{}{\text{$\mathtt{CELEBA\text{-}HQ}$}}}   & \footnotesize{}{\text{$\mathtt{feat_{N/8}}$}} & \text{\footnotesize${18.8}$}  &  \text{\footnotesize${-}$} & \text{\footnotesize$\mathbf{18.1}$} & \text{\footnotesize$-$}& \text{\footnotesize${0.7}$} \tabularnewline 
			\arrayrulecolor{verylightgray}\hline 
	&	& \footnotesize{}{\text{$\overline{\Delta\mathtt{PA}}$}} & \text{\footnotesize${3.8}$} &\text{\footnotesize${3.1}$} & \text{\footnotesize${2.7}$} & \text{\footnotesize${2.3}$}  &	\tabularnewline 
	\end{tabular}	
\end{table}

\begin{table}[t!]
	\centering
	\caption{Median FIDs (across five random runs) of PA together with dropout applied on different intermediate layers and keep rates on CIFAR10 with SN DCGAN.}\label{tab:dropoutpa}
	\begin{tabular}{c|ccccccccc} 
		\rowcolor{verylightgray}
		&\footnotesize{}{\text{\footnotesize$\mathtt{Dropout}$}} &\multicolumn{2}{c}{\footnotesize{}{\text{\footnotesize$\mathtt{input(N)}$}}}  & \multicolumn{2}{c}{\footnotesize{}{\text{\footnotesize$\mathtt{feat_{N/2}}$}}} & \multicolumn{2}{c}{\footnotesize{}{\text{\footnotesize$\mathtt{feat_{N/4}}$}}} & \multicolumn{2}{c}{\footnotesize{}{\text{\footnotesize$\mathtt{feat_{N/8}}$}}} \tabularnewline 
		\rowcolor{verylightgray}
		\multirow{-2}{*}{}	&\footnotesize{}{\text{\footnotesize$\mathtt{PA} (\mathtt{feat_{N/8}})$}} & \footnotesize{}{\text{\xmark }}	& \footnotesize{}{\text{\cmark }} & \footnotesize{}{\text{\xmark }}	& \footnotesize{}{\text{\cmark }} & \footnotesize{}{\text{\xmark }}	& \footnotesize{}{\text{\cmark }} & \footnotesize{}{\text{\xmark }}	& \footnotesize{}{\text{\cmark}} \tabularnewline 		
		\multirow{3}{*}{\text{\footnotesize$\mathtt{Keep\text{ }Rate}$}}	& \footnotesize{}{\text{$0.9$}} & \text{\footnotesize$26.4$} & \text{\footnotesize$22.6$}  &  \text{\footnotesize$25.1$} &\text{\footnotesize$21.9$} &  \text{\footnotesize$23.4$} & \text{\footnotesize$21.2$} & \text{\footnotesize$24.6$} &  \text{\footnotesize$21.6$} \tabularnewline 
		& \footnotesize{}{\text{$0.7$}} & \text{\footnotesize$28.0$} & \text{\footnotesize$22.9$} &  \text{\footnotesize$25.6$} & \text{\footnotesize$21.3$}& \text{\footnotesize$\underline{22.1}$} & \text{\footnotesize$\mathbf{20.6}$} & \text{\footnotesize$24.4$} &\text{\footnotesize$22.5$} \tabularnewline 
		& \footnotesize{}{\text{$0.5$}} & \text{\footnotesize$27.1$} & \text{\footnotesize$23.1$} &  \text{\footnotesize$25.9$} & \text{\footnotesize$22.3$} & \text{\footnotesize$23.1$} & \text{\footnotesize$21.2$} & \text{\footnotesize$24.0$} & \text{\footnotesize$22.1$} \tabularnewline 
		\rowcolor{verylightgray}
		\multicolumn{2}{c}{\footnotesize{}{\text{$\Delta\mathtt{PA}$}}} &\multicolumn{2}{c}{\footnotesize{}{\text{$4.5$}}}  & \multicolumn{2}{c}{\footnotesize{}{\text{$3.7$}}} & \multicolumn{2}{c}{\footnotesize{}{\text{$1.9$}}} & \multicolumn{2}{c}{\footnotesize{}{\text{$2.3$}}}
	\end{tabular}
\end{table}

\begin{table}[t!]
	\setlength{\tabcolsep}{0.22em} 
	\renewcommand{\arraystretch}{1.1}
	\centering
	\caption{Median FIDs (across five random runs) of different learning rates ($\mathtt{TTURs}$) on CIFAR10 with $\mathtt{SN\text{ }DCGAN}$.
		Italic and bold denotes the best FIDs w/o and with PA respectively, underline denotes the default learning rate setting of $\mathtt{SN\text{ }DCGAN}$.} \label{table:ttur} \vspace{0.5em}
	\begin{tabular}{c|c|cccc|c} 
		\rowcolor{verylightgray}
		\text{\backslashbox{$\mathtt{lr_g}$}{$\mathtt{lr_d}$}} & \footnotesize{}{\text{$\mathtt{PA\text{ }(feat_{N/8})}$}} &  \footnotesize{}{ \text{\footnotesize$ 10^{-4}$}} & \footnotesize{}{ \text{\footnotesize$2\times 10^{-4}$}} & \footnotesize{}{ \text{\footnotesize$4\times 10^{-4}$}} & \footnotesize{}{ \text{\footnotesize$ 10^{-3}$}}  & \footnotesize{}{\text{$\overline{\Delta\mathtt{PA}}$}}\tabularnewline 
		
		\multirow{2}{*}{\text{\footnotesize$10^{-4}$}} &\footnotesize{}{\xmark} &\text{\footnotesize$27.0$} & \text{\footnotesize$25.8$}  & \text{\footnotesize$\mathit{25.3}$} & \text{\footnotesize$27.0$} & \multirow{2}{*}{\text{\footnotesize$3.5$}}\tabularnewline 	
		&\footnotesize{}{\cmark} & \text{\footnotesize$23.3$} & \text{\footnotesize$\mathbf{22.2}$} & \text{\footnotesize$22.6$}& \text{\footnotesize$22.9$} & \tabularnewline 
		\arrayrulecolor{gray}	\hline \arrayrulecolor{verylightgray}
		
		\multirow{2}{*}{\text{\footnotesize$2\times 10^{-4}$}} &\footnotesize{}{\xmark} &\text{\footnotesize$26.7$} & \text{\footnotesize\underline{$26.0$}}  & \text{\footnotesize$26.2$} & \text{\footnotesize$27.2$} & \multirow{2}{*}{\text{\footnotesize$3.1$}} \tabularnewline 	
		&\footnotesize{}{\cmark} & \text{\footnotesize$24.8$} & \text{\footnotesize\underline{$22.6$}} & \text{\footnotesize$22.3$}& \text{\footnotesize$24.0$}& \tabularnewline 
		\arrayrulecolor{gray}	\hline \arrayrulecolor{verylightgray}
		
		\multirow{2}{*}{\text{\footnotesize$4\times 10^{-4}$}} &\footnotesize{}{\xmark} &\text{\footnotesize$28.7$} & \text{\footnotesize$26.1$}  & \text{\footnotesize$26.3$} & \text{\footnotesize$28.2$}& \multirow{2}{*}{\text{\footnotesize$3.6$}}\tabularnewline 	
		&\footnotesize{}{\cmark} & \text{\footnotesize$24.7$} & \text{\footnotesize$23.3$} & \text{\footnotesize$22.9$}& \text{\footnotesize$24.2$}& \tabularnewline 
		\arrayrulecolor{gray}	\hline \arrayrulecolor{verylightgray}
		
		\multirow{2}{*}{\text{\footnotesize$10^{-3}$}} &\footnotesize{}{\xmark} &\text{\footnotesize$28.5$} & \text{\footnotesize$27.0$}  & \text{\footnotesize$26.4$} & \text{\footnotesize$27.4$} &  \multirow{2}{*}{\text{\footnotesize$2.9$}} \tabularnewline 	
		&\footnotesize{}{\cmark} & \text{\footnotesize$25.7$} & \text{\footnotesize$23.6$} & \text{\footnotesize$23.4$}& \text{\footnotesize$25.0$}& 
	\end{tabular}	
	\vspace{-0.7em}
\end{table}

\subsection{Ablation Study on Learning Rates}\label{Ssubsec:ttur}
Table~\ref{table:ttur} compares the performance achieved by using different learning rate configurations. The improvement achieved by $\mathtt{PA}$ is consistent across different settings ($\sim3$ FID points), showing its robustness to different update rules. Compared to the best performing TTUR, $\mathtt{PA}$ reduces the FID faster over iterations (see Figure~\ref{fig:scheduling}) without requiring extra hyperparameter search for the best update rule. 

Table~\ref{table:ttur} has also shown a stable FID performance of $\mathtt{SN\:DCGAN}$ with the generator learning rate $\mathtt{lr_g}=2\times 10^{-4}$ and the discriminator learning rate $\mathtt{lr_d}\in\{10^{-4},2\times 10^{-4},4\times10^{-4}\}$. With this identification, we fix $\mathtt{lr_g}=2\times 10^{-4}$ and reuse the progression scheduling to adaptively reduce $\mathtt{lr_d}$ from $4\times10^{-4}$ to $10^{-4}$ with the learning rate decay of $0.8$ (in our experiments the best performing learning rate decay among $\{0.99,0.95,0.9,0.8,0.7\}$). 
Figure~\ref{fig:scheduling} shows the effectiveness of progression scheduling in assisting both the learning rate adaptation and progressive augmentation for an improved performance. $\mathtt{PA}$ outperforms learning rate adaptation as well as the tuned $\mathtt{TTUR}$~\cite{heuselttur2017} , i.e. FID $22.6$ vs. $24.0$ vs. $25.3$. Its combination with $\mathtt{Dropout}$ delivers the best performance in this experiment, i.e., $20.6$.

\begin{figure*}[t!]
	\begin{center}
		\includegraphics[width=0.8\textwidth]{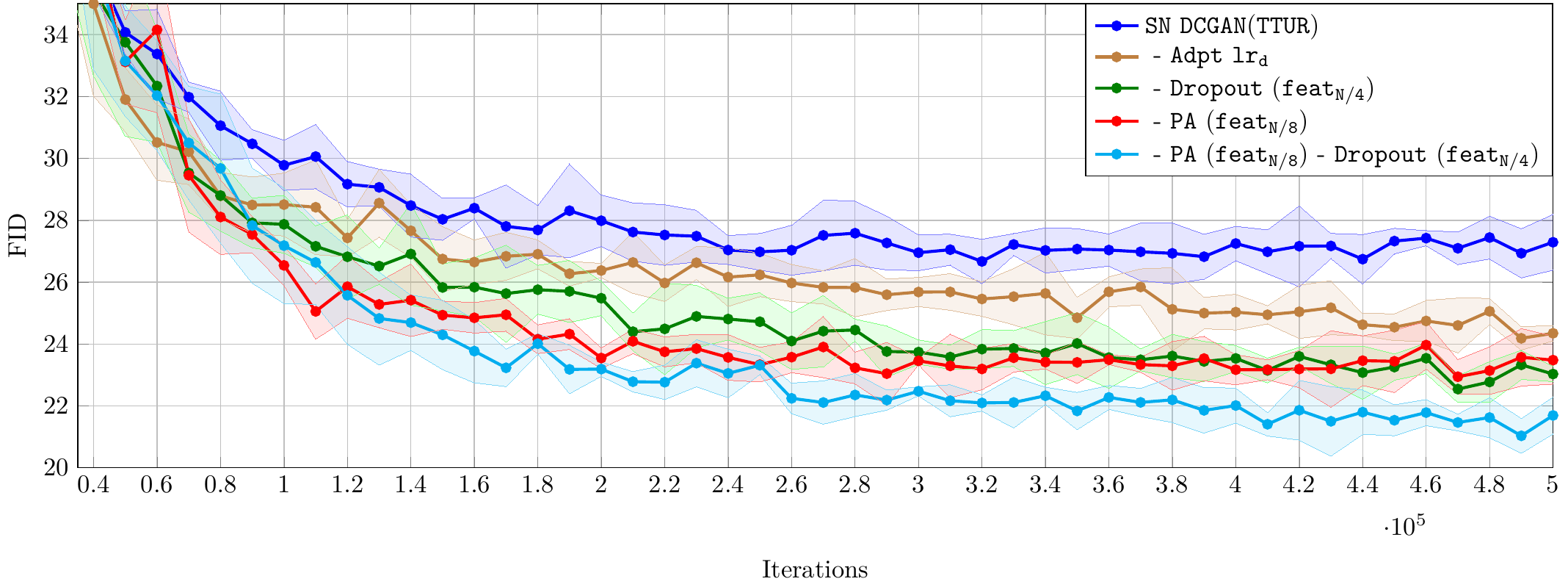}
	\end{center}
	\caption{\label{fig:scheduling}FID learning curves (mean FIDs with one standard deviation across five random runs) of $\mathtt{PA}$, $\mathtt{TTUR}$, adaptive learning rate and $\mathtt{Dropout}$ on CIFAR10 with $\mathtt{SN\text{ }DCGAN}$.}
\end{figure*}

\section{Effectiveness of PA as a Regularizer}\label{sup_sec_toy_example}

Here we exploit progressive augmentation on a toy classification task to empirically illustrate its regularization benefits discussed in Sec.~3 of the main paper. Specifically, we focus on binary classification task taking the alike Cat and Dog images from CIFAR10~\cite{Cifar10_Krizhevsky09learningmultiple}, which represent the TRUE (real) and FAKE (synthetic) data samples, and train the discriminator network of SN DCGAN with the cross-entropy loss to tell them apart. Figure~\ref{fig:toy_d_loss} depicts the discriminator loss ($D$ loss) behaviour over iterations on the training and test sets. It shows that the discriminator very quickly becomes over-confident on the training set and that overfitting takes place after $1$\unit{k} iterations.

In order to regularize the discriminator we exploit the proposed progressive augmentation ($\mathtt{PA}$), augmenting either the input - $\mathtt{PA}\text{ }(\mathtt{input})$, or its features with spatial dimension $N/8$ - $\mathtt{PA}\text{ }(\mathtt{feat_{N/8}})$, where $N$ is the input image dimension. For a comparison purpose, we also experiment with the $\mathtt{Dropout}$~\cite{JMLR:v15:srivastava14a} regularization applied on $\mathtt{feat_{N/4}}$ layer with the keep rate $0.7$ (the best performing rate in our experiments). Both techniques resort to random variables for regularization. The former randomly removes features, while the latter augments them with additional random bits and adjusts accordingly the class label. 
In contrast to $\mathtt{Dropout}$, $\mathtt{PA}$ exhibits a long lasting regularization effect by means of progression. Each rise of $D$ loss coinciding with an iteration at which the augmentation level increases (every $2$\unit{k} iterations) and then gradually reduces after the discriminator timely adapts to the new bit. %
At the level one augmentation, both $\mathtt{PA}\text{ }(\mathtt{input})$ and $\mathtt{PA}\text{ }(\mathtt{feat_{N/8}})$ start from the similar overfitting stage. Combining the bit $s$ directly with high-level features eases checksum computation. As a result, the $D$ loss of $\mathtt{PA}\text{ }(\mathtt{feat_{N/8}})$ reduces faster, but making its future task more difficult due to overfitting to the previous augmentation level. On the other hand,  $\mathtt{PA}\text{ }(\mathtt{input})$ let the bits pass through all layers, and thus its adaptation to augmentation progression improves over iterations. In the end, both $\mathtt{PA}\text{ }(\mathtt{input})$ and $\mathtt{PA}\text{ }(\mathtt{feat_{N/8}})$ lead to similar regularization effect. In addition, we compare $\mathtt{PA}$ with the $\mathtt{Reinit.}$ baseline, where every $2$\unit{k} iterations all weights are reinitialized with Xavier initialization \cite{GlorotAISTATS2010}. Compared to $\mathtt{PA}$, using $\mathtt{Reinit.}$ strategy leads to longer adaptation time (the $D$ loss decay is much slower), potentially providing non-informative signal to the generator and thus slowing down the training.

In Figure~\ref{fig:toy_hist} we explore the stochastic nature of $\mathtt{Dropout}$ and $\mathtt{PA}$. Each realization of the dropout mask or the augmentation bit sequence $\bvec{\vs}$ changes the loss function landscape, varying its gradient with respect to the synthetic sample (i.e. the Dog class in this case). With the same experimental setup, we now assess the correlation of the gradients based on the first four eigenvalues of their correlation matrix - $\lambda_i$, $i=0,\dots,3$, i.e. computing the averaged square roots of their ratios $\bar{\gamma}\stackrel{\Delta}{=}\frac{1}{3}\sum_{i=1}^{3}\sqrt{\lambda_0/\lambda_i}$. Figure~\ref{fig:toy_hist} depicts the histograms of $\bar{\gamma}$ among $10^3$ instances. %
$\mathtt{PA}$ has more instances with smaller $\bar{\gamma}$ in comparison to $\mathtt{Dropout}$, indicating a more diverse set of gradients, exploitable by the generator to approach the data distribution. In contrast to $\mathtt{Dropout}$, in $\mathtt{PA}$ the augmentation random bits determine the target class in binary classification and the discriminator is trained to comprehend $\bvec{\vs}$ together with $\bvec{\vx}$, leading to the richer loss function landscape. %
Between input and feature space augmentation, the former yields more diverse gradients than the latter as $\bvec{\vs}$ is passed through all layers. 

\begin{figure}[t!]
	\begin{center}
		\includegraphics[width=\textwidth]{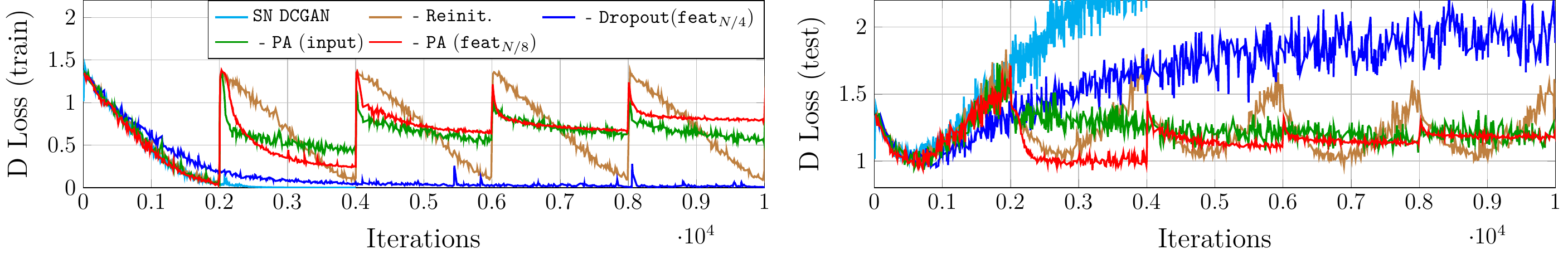}
	\end{center}
	\vspace{-1em}
	\caption{\label{fig:toy_d_loss} Behaviour of the discriminator loss ($D$ loss) with and w/o $\mathtt{PA}$ and in comparison to $\mathtt{Dropout}$, using the $D$ architecture of SN DCGAN. See Sec.~\ref{sup_sec_toy_example} for details.} 
	\vspace{-1em}
\end{figure}
\begin{figure}[t!]%
	\centering
	\includegraphics[width=.8\textwidth]{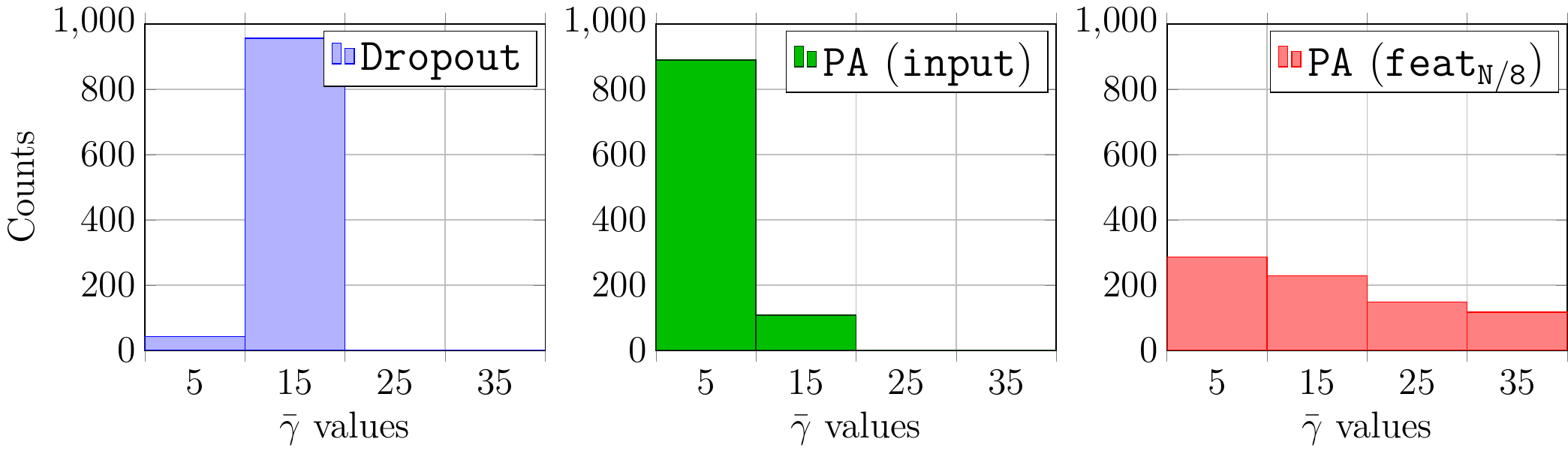}
	\caption{Histograms of averaged square roots of eigenvalue ratios computed from gradient correlation matrices for $\mathtt{PA}$ and $\mathtt{Dropout}$. Smaller correlation values indicate a more diverse set of gradients exploitable by the generator to approach the data distribution. See Sec.~\ref{sup_sec_toy_example} for details.}
	\label{fig:toy_hist}
\end{figure}

\section{Exemplar Synthetic Samples}\label{sec:syn samples}
Figure~\ref{Sfig:images} shows a set of synthetic samples that are outcomes of GAN training with and without $\mathtt{PA}$. $\mathtt{PA}$ not only improves sample quality and variation, but also sensibly navigates the image manifold through latent space interpolation.

\begin{figure*}	
	\centering
	\begin{subfigure}{\textwidth}
	\includegraphics[width = \linewidth]{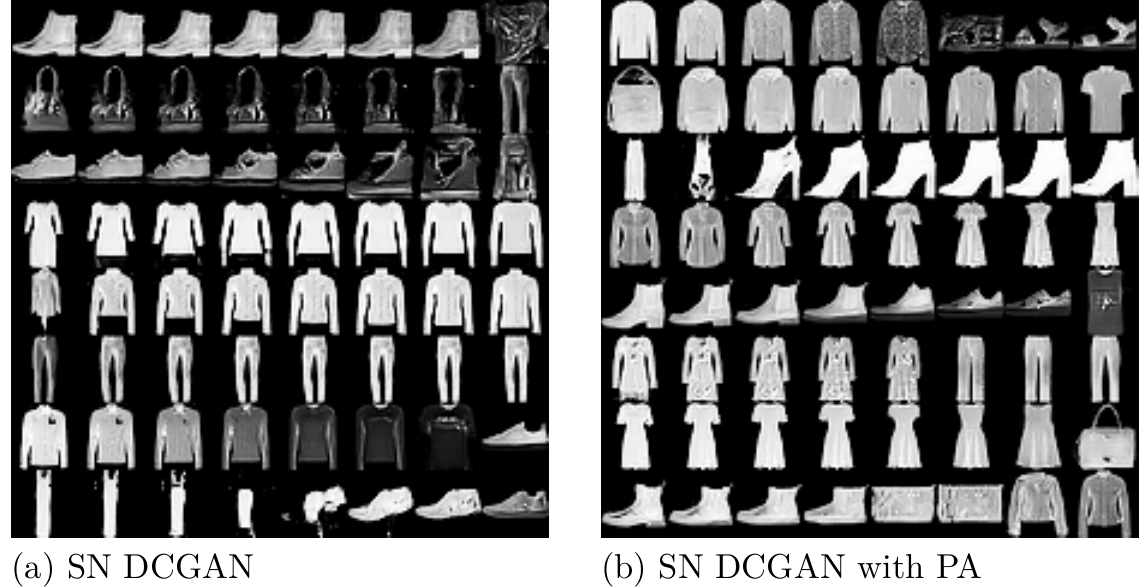}
	\end{subfigure}	
	\begin{subfigure}{\textwidth}
	\includegraphics[width = \linewidth]{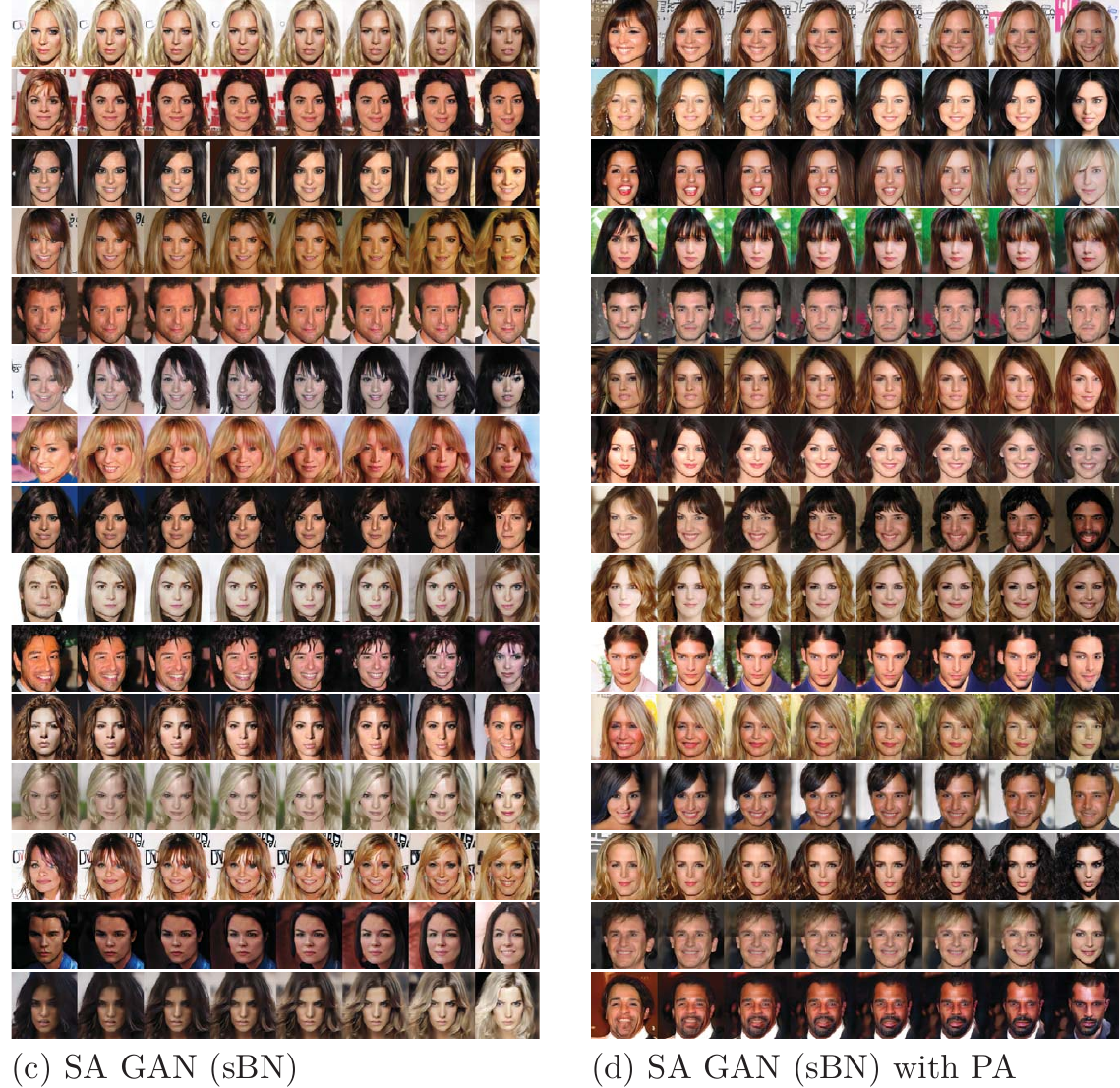}
	\end{subfigure}	
	\caption{Synthetic samples from training SN GAN on Fashion-MNIST ($28\times 28$) and SA GAN (sBN) on CELEBA-HQ ($128\times 128$) with and without using PA. In all cases, i.e., (a), (b), (c) and (d), the eight images per row are generated through polar-interpolation between two randomly sampled $\bvec{\vz}_1$ and $\bvec{\vz}_2$.}\label{Sfig:images}
\end{figure*}

\section{Evaluation with Other Performance Measures} \label{kid-is}
In addition to FID, here we measure the quality of synthetic samples by means of kernel inception distance (KID)~\cite{Binkowski2016MMDGAN} and inception score (IS)~\cite{Theis2016a}, see Tables~\ref{table:kid_is_one} and~\ref{table:kid_is_pa_gp_dropout} which correspond to Tables~1 and~3 in the main paper. The evaluation framework setup is the same as that with FID and follows~\cite{LucicEqualGANs,Kurach2018GANlandscape}. For Fashion-MNIST and CELEBA-HQ, IS computed from the pre-trained Inception network is not meaningful and thus omitted. Overall, the obtained results show consistent observations with those that are made in Sec.~4 of the main paper based on the FID measure.
\begin{table}[t!]
	\setlength{\tabcolsep}{0.15em} 
	\renewcommand{\arraystretch}{1}
	\centering
	\caption{KID/IS improvements with $\mathtt{PA}$ across different datasets and network architectures, in accordance with Table~1 in the main paper.} \label{table:kid_is_one} 
	\hspace{-1em}
	\begin{minipage}{0.6\textwidth}
		\caption*{KID}
		\begin{tabular}{l|c|cccc|c} 
		\rowcolor{verylightgray}
		\footnotesize{}{\text{$\mathtt{Method}$}}	& \footnotesize{}{\text{$\mathtt{PA}$}}& 
		\footnotesize{}{\text{$\mathtt{F\text{-}MNIST}$}} & \footnotesize{}{\text{$\mathtt{CIFAR10}$}} & \footnotesize{}{\text{$\mathtt{CELEBA\text{-}HQ}$}} & \footnotesize{}{\text{$\mathtt{T\text{-}ImageNet}$}}&\footnotesize{}{\text{$\overline{\Delta\mathtt{PA}}$}} \tabularnewline 	
		
		\text{$\mathtt{SN\text{ }DCGAN}$} & \footnotesize{}{\xmark}& \text{\footnotesize$0.004$} & \text{\footnotesize$0.016$} & \text{\footnotesize$0.011$} & \text{\footnotesize-} & \multirow{3}{*}{\text{\footnotesize$0.003$}} \tabularnewline 	
		\text{$\mathtt{NS \text{ } Loss}$} & \text{$\mathtt{input}$} & \text{\footnotesize$\mathbf{0.002}$} &  \text{\footnotesize$\mathbf{0.013}$} & \text{\footnotesize${0.007}$} & \text{\footnotesize-} \tabularnewline 
		\footnotesize{}\text{\cite{miyato2018spectral}}  & \text{$\mathtt{feat}$} & \text{\footnotesize$\mathbf{0.002}$} &  \text{\footnotesize$\mathbf{0.013}$} & \text{\footnotesize$\mathbf{0.005}$} & \text{\footnotesize-} \tabularnewline 
		
		\arrayrulecolor{gray}	\hline \arrayrulecolor{verylightgray}
		
		\multirow{1}{*}{\text{$\mathtt{SA\text{ }GAN\text{ }(sBN)}$}}	& \footnotesize{}{\xmark}& \text{\footnotesize-} & \text{\footnotesize$0.011$} & \footnotesize{}{$0.006$} &\footnotesize{}{$0.035$} &\multirow{3}{*}{\text{\footnotesize$0.002$}}   \tabularnewline 
		\text{$\mathtt{Hinge \text{ } Loss}$}  & \text{$\mathtt{input}$} & \text{\footnotesize-}  & \text{\footnotesize$\mathbf{0.008}$} & \footnotesize{}{$\mathbf{0.004}$} & \text{\footnotesize$\mathbf{0.033}$}  \tabularnewline 
		\footnotesize{}\text{\cite{Zhang_SAGAN18}} & \text{$\mathtt{feat}$} & \text{\footnotesize -}  & \text{\footnotesize$0.009$} & \footnotesize{}{$\mathbf{0.004}$} & \text{\footnotesize$\mathbf{0.033}$} 
	\end{tabular}	
	\end{minipage}
\hspace{3.5em}
\begin{minipage}{0.3\textwidth}
	\caption*{IS}
	\begin{tabular}{||cc|c} 
	\rowcolor{verylightgray}
	 \footnotesize{}{\text{$\mathtt{CIFAR10}$}} &  \footnotesize{}{\text{$\mathtt{T\text{-}ImageNet}$}}&\footnotesize{}{\text{$\overline{\Delta\mathtt{PA}}$}} \tabularnewline 	
	
 \text{\footnotesize$7.6$} & \text{\footnotesize-} & \multirow{3}{*}{\text{\footnotesize$0.2$}} \tabularnewline 	
 \text{\footnotesize$\mathbf{7.8}$} &  \text{\footnotesize-} \tabularnewline 
 \text{\footnotesize$\mathbf{7.8}$}  & \text{\footnotesize-} \tabularnewline 
	
	\arrayrulecolor{gray}	\hline \arrayrulecolor{verylightgray}
	
\text{\footnotesize$8.4$} &\footnotesize{}{$8.8$} &\multirow{3}{*}{\text{\footnotesize$0.3$}}   \tabularnewline 
 \text{\footnotesize$\mathbf{8.7}$}  & \text{\footnotesize$9.1$}  \tabularnewline 
 \text{\footnotesize$8.6$}  & \text{\footnotesize$\mathbf{9.2}$} 
\end{tabular}	
\end{minipage}
\end{table}

\begin{table}[t!]
	\vspace{-1em}
	\setlength{\tabcolsep}{0.2em} 
	\renewcommand{\arraystretch}{1.1}
	\centering
	\caption{KIDs/ISs of PA, different regularization techniques and their combinations on CIFAR10, in according with Table 3 in the main paper.} \label{table:kid_is_pa_gp_dropout}
	\begin{minipage}{0.9\textwidth}
		\centering
		\caption*{KID}
	\begin{tabular}{l|c|c|ccccc|c} 		
	\rowcolor{verylightgray}
	& & & \footnotesize{}{\text{-$\mathtt{Label\text{ }smooth.}$}}  &\footnotesize{}{\text{-$\mathtt{GP}$}} & \footnotesize{}{\text{-$\mathtt{GP_{zero\text{-}cent}}$}} & \footnotesize{}{\text{-$\mathtt{Dropout}$}} &\footnotesize{}{\text{-$\mathtt{SS}$}}  & \tabularnewline 
	\rowcolor{verylightgray}
	\multirow{-2}{*}{	\footnotesize{}{\text{$\mathtt{Method}$}}} & \multirow{-2}{*}{\footnotesize{}{\text{$\mathtt{PA}$}}} & \multirow{-2}{*}{\footnotesize{}{\text{$\mathtt{GAN}$}}} & 	\footnotesize{}{\text{\cite{SalimansNIPS2016}}} &  \footnotesize{}{\cite{gulrajani_NIPS2017}} & \footnotesize{}{ \cite{Roth_NIPS2017}} &
	\footnotesize{}{\cite{JMLR:v15:srivastava14a}} & \footnotesize{}{\cite{ChenSS2019}} &  \multirow{-2}{*}{\footnotesize{}{\text{$\overline{\Delta\mathtt{PA}}$}}} \tabularnewline  
	\footnotesize{}{\text{$\mathtt{SN\text{ }DCGAN}$}} & 	   	\footnotesize{}{\text{\xmark }} & \text{\footnotesize${0.016}$}  & \text{\footnotesize${0.016}$} & \text{\footnotesize${0.018}$} &\text{\footnotesize${0.017}$}  & \text{\footnotesize${0.013}$} &  \text{\footnotesize${-}$}  &	\tabularnewline  
	\footnotesize{}{\text{$\mathtt{NS\text{ }Loss}$}} & \footnotesize{}{\text{$\mathtt{feat}$}} & \text{\footnotesize${0.013}$}  & \text{\footnotesize${0.014}$}  & \text{\footnotesize${0.014}$} & \text{\footnotesize${0.014}$} & \text{\footnotesize${\mathbf{0.012}}$} &\text{\footnotesize${-}$}  &
	\multirow{-2}{*}{\footnotesize${0.003}$}\tabularnewline 
	\arrayrulecolor{verylightgray}	\hline 
	\footnotesize{}{\text{$\mathtt{SA\text{ }GAN\text{ }(sBN)}$}}	 & 	   	\footnotesize{}{\text{\xmark }} & \text{\footnotesize${0.011}$}  & \text{\footnotesize$-$} & \text{\footnotesize${0.010}$} &\text{\footnotesize${0.010}$} & \text{\footnotesize${0.008}$} &\text{\footnotesize${0.008}$} & \tabularnewline  
	\footnotesize{}{\text{$\mathtt{Hinge\text{ }Loss}$}} & \footnotesize{}{\text{$\mathtt{feat}$}} & \text{\footnotesize${0.009}$}  & \text{\footnotesize${-}$}  & \text{\footnotesize${0.008}$} & \text{\footnotesize${0.008}$} &  \text{\footnotesize${0.008}$} &\text{\footnotesize$\mathbf{0.007}$} &
	\multirow{-2}{*}{\footnotesize$0.001$}\tabularnewline 
	\arrayrulecolor{verylightgray}\hline 
	& \footnotesize{}{\text{$\overline{\Delta\mathtt{PA}}$}} & \text{\footnotesize${0.003}$} & \text{\footnotesize${0.002}$}& \text{\footnotesize${0.003}$} & \text{\footnotesize${0.003}$} & \text{\footnotesize${0.001}$} &  \text{\footnotesize${0.001}$} &\tabularnewline 
\end{tabular}	
	\end{minipage}
\\
	\begin{minipage}{0.9\textwidth}
		\centering
				\caption*{IS}
	\begin{tabular}{l|c|c|ccccc|c} 
		\rowcolor{verylightgray}
		& & & \footnotesize{}{\text{-$\mathtt{Label\text{ }smooth.}$}}  &\footnotesize{}{\text{-$\mathtt{GP}$}} & \footnotesize{}{\text{-$\mathtt{GP_{zero\text{-}cent}}$}} & \footnotesize{}{\text{-$\mathtt{Dropout}$}} &\footnotesize{}{\text{-$\mathtt{SS}$}}  & \tabularnewline 
		\rowcolor{verylightgray}
		\multirow{-2}{*}{	\footnotesize{}{\text{$\mathtt{Method}$}}} & \multirow{-2}{*}{\footnotesize{}{\text{$\mathtt{PA}$}}} & \multirow{-2}{*}{\footnotesize{}{\text{$\mathtt{GAN}$}}} & 	\footnotesize{}{\text{\cite{SalimansNIPS2016}}} &  \footnotesize{}{\cite{gulrajani_NIPS2017}} & \footnotesize{}{ \cite{Roth_NIPS2017}} &
		\footnotesize{}{\cite{JMLR:v15:srivastava14a}} & \footnotesize{}{\cite{ChenSS2019}} &  \multirow{-2}{*}{\footnotesize{}{\text{$\overline{\Delta\mathtt{PA}}$}}} \tabularnewline  
		\footnotesize{}{\text{$\mathtt{SN\text{ }DCGAN}$}} & 	   	\footnotesize{}{\text{\xmark }} & \text{\footnotesize${7.6}$}  & \text{\footnotesize${7.5}$} & \text{\footnotesize${7.5}$} &\text{\footnotesize${7.5}$}  & \text{\footnotesize${7.9}$} &  \text{\footnotesize${-}$}  &	\tabularnewline  
		\footnotesize{}{\text{$\mathtt{NS\text{ }Loss}$}} & \footnotesize{}{\text{$\mathtt{feat}$}} & \text{\footnotesize${7.8}$}  & \text{\footnotesize${7.7}$}  & \text{\footnotesize${7.7}$} & \text{\footnotesize${7.7}$} & \text{\footnotesize${\mathbf{7.9}}$} &\text{\footnotesize${-}$}  &
		\multirow{-2}{*}{\footnotesize${{0.2}}$}\tabularnewline 
		\arrayrulecolor{verylightgray}	\hline 
		\footnotesize{}{\text{$\mathtt{SA\text{ }GAN\text{ }(sBN)}$}}	 & 	   	\footnotesize{}{\text{\xmark }} & \text{\footnotesize${8.4}$}  & \text{\footnotesize$-$} & \text{\footnotesize${8.5}$} &\text{\footnotesize${8.5}$} & \text{\footnotesize${8.7}$} &\text{\footnotesize${8.6}$} & \tabularnewline  
		\footnotesize{}{\text{$\mathtt{Hinge\text{ }Loss}$}} & \footnotesize{}{\text{$\mathtt{feat}$}} & \text{\footnotesize${8.6}$}  & \text{\footnotesize${-}$}  & \text{\footnotesize${8.6}$} & \text{\footnotesize${8.7}$} &  \text{\footnotesize${8.7}$} &\text{\footnotesize$\mathbf{8.8}$} &
		\multirow{-2}{*}{\footnotesize$0.1$}\tabularnewline 
		\arrayrulecolor{verylightgray}\hline 
		& \footnotesize{}{\text{$\overline{\Delta\mathtt{PA}}$}} & \text{\footnotesize${0.2}$} & \text{\footnotesize${0.2}$}& \text{\footnotesize${0.2}$} & \text{\footnotesize${0.2}$} & \text{\footnotesize${0.0}$} &  \text{\footnotesize${0.2}$} &\tabularnewline 
	\end{tabular}	
\end{minipage}
\end{table}

\section{Network Architectures and Hyperparameter Settings}\label{sec:networks}
In this work we exploit the implementation provided by~\cite{LucicEqualGANs, Kurach2018GANlandscape}\footnote{\url{https://github.com/google/compare_gan}} and~\cite{Zhang_SAGAN18}\footnote{\url{https://github.com/brain-research/self-attention-gan}}. For the experiments, we run on single GPU (Nvidia Titan X).
\subsection{Network Architectures}\label{subsec:netarch}

\paragraph{SN DCGAN.}\label{subsec:sndcgan}
Following~\cite{miyato2018spectral} for spectral normalization (SN), we adopt the same architecture as in~\cite{Kurach2018GANlandscape} and present its configuration in Table~\ref{tab_sndcgan}. The input and feature (i.e., $\mathtt{feat_{N/2}}$, $\mathtt{feat_{N/4}}$ and $\mathtt{feat_{N/8}}$) space augmentations respectively take place at the input of the layers with the index $0$, $2$, $4$ and $6$. In case of dropout, it is applied to the same intermediate layers plus the output of the layer $0$. For Table~1 in the main paper, we pick the  $\mathtt{feat_{N/8}}$ for all evaluated datasets, whereas Sec.~\ref{Ssubsec:abl aug level} presents an ablation study on the augmentation space.

\paragraph{SA GAN (sBN).}\label{subsec:sagan}
The ResNet-based discriminator and generator architectures tailored for CIFAR10, CELEBA-HQ and T-ImageNet are presented in Table~\ref{tab_sagan} and~\ref{tab_sagantiny}, respectively. Taking the ResNet architecture in~\cite{gulrajani_NIPS2017} for CIFAR10, in~\cite{Kurach2018GANlandscape} for CELEBA-HQ and~\cite{Brock2019} for IMAGENET as the baseline, we adapt them by adding the SN and self-attention as proposed in~\cite{Zhang_SAGAN18}. For the residual and non-local blocks we use the implementation provided by~\cite{Zhang_SAGAN18}. As we target unsupervised GAN, the conditional batch normalization (BN) used by the generator's residual blocks only takes the input noise vector $\bvec{\vz}$ as the conditioning, namely, self-modulation BN (sBN)~\cite{chen2018on}.

For CIFAR10, we have considered the input and feature (i.e., $\mathtt{feat_{N/2}}$ and $\mathtt{feat_{N/4}}$) space augmentations which respectively take place at the input of the residual blocks with the index $0$, $2$ and $4$, see Table~\ref{tab_sagan}-(a). Note that both residual blocks with the index $3$ and $4$ have their feature maps of dimension ${N/4}$. We experiment with the feature space augmentation on both of them. They differ little in performance, thereby we only report the result of the feature space augmentation at the residual block $4$ in Table 1 of the main paper.

For CELEBA-HQ, we empirically observe that it is beneficial to start from a convolutional layer rather than a residual block at the discriminator. Apart from input and $\mathtt{feat_{N/8}}$ space augmentation reported in Table~1 of the main paper, we have also experimented the other feature space augmentations that take place at the input of each residual block, see Table~\ref{tab_sagancelebahq}. At the spatial dimension $N$, we only report the result of input space augmentation, whereas the feature space augmentation at the first residual block delivers a similar performance. Augmenting the input of the last residual block benefits from the first warm-up mechanism presented in Sec.~\ref{sec:warm-up}, otherwise the discriminator can fail after augmentation progression.

For T-ImageNet, we have experimented with the augmentation space at both the input and $\mathtt{feat_{16}}$ (at the input of the $3$rd residual block) and reported their performance in Table~1 of the main paper. It is beneficial to use the second warm-up mechanism introduced in Sec.~\ref{sec:warm-up}. Comparing with the other datasets, the synthesis quality on T-ImageNet is still poor. Single GPU simulation with $64$ samples per batch is not enough in this case. Large-scale simulation as in~\cite{Brock2019}, though demanding a large amount of resources, would be of interest.

\begin{table*}[t!]
	\centering
	\caption{SN DCGAN.\label{tab_sndcgan}}\hspace{-0.5cm}
	\begin{minipage}[t]{.45\textwidth}
	\centering	
	\subcaption{Discriminator}
\begin{tabular}{ll} 	
			\#	& \text{Configuration per Layer} \\ \hline
			0 &	\text{$3\times3$ stride $1$ SN Conv, $\mathtt{ch=64}$, lReLu} \\\hline
			1 &	\text{$4\times4$ stride $2$ SN Conv, $\mathtt{ch=128}$, lReLu} \\\hline
			2 &	\text{$3\times3$ stride $1$ SN Conv, $\mathtt{ch=128}$, lReLu} \\\hline
			3 &	\text{$4\times4$ stride $2$ SN Conv, $\mathtt{ch=256}$, lReLu} \\\hline
			4 &	\text{$3\times3$ stride $1$ SN Conv, $\mathtt{ch=256}$, lReLu} \\\hline
			5 &	\text{$4\times4$ stride $2$ SN Conv, $\mathtt{ch=512}$, lReLu} \\\hline
			6 &	\text{$3\times3$ stride $1$ SN Conv, $\mathtt{ch=512}$, lReLu} \\\hline
			7 &	\text{SN Linear $1$ output}   \\
			\bottomrule
		\end{tabular}
\end{minipage}
\hspace{0.5cm}
		\begin{minipage}[t]{.45\textwidth}	
			\centering
			\subcaption{Generator}
		\begin{tabular}{l} 	
			\text{Configuration per Layer} \\ \hline
			\text{Linear $h/8\times w/8\times 512$ output, BN, ReLU} \\\hline
			\text{$4\times4$ stride $2$ DeConv, $\mathtt{ch=256}$, BN, ReLU}  \\\hline
			\text{$4\times4$ stride $2$ DeConv, $\mathtt{ch=128}$, BN, ReLU}  \\\hline
			\text{$4\times4$ stride $2$ DeConv, $\mathtt{ch=64}$, BN, ReLU}  \\\hline
			\text{$3\times3$ stride $1$ Deconv, $\mathtt{ch=3}$, Tanh}  \\ 
			\bottomrule
		\end{tabular}
		\end{minipage}	
\end{table*}

\begin{table*}[t!]
	\centering
	\caption{SA GAN for CIFAR10.\label{tab_sagan}}\hspace{-0.5cm}
	\begin{minipage}[t]{.45\textwidth}
		\centering	
		\subcaption{Discriminator}
		\begin{tabular}{ll} 	
			\#	&	\text{Configuration per Layer} \\ \hline
			0	&	\text{ResBlock, down, $\mathtt{ch=128}$ } \\ \hline
			1	&	\text{Non-Local Block ($16\times 16$)} \\ \hline
			2	&	\text{ResBlock, down, $\mathtt{ch=128}$ }  \\\hline
			3	&	\text{ResBlock, $\mathtt{ch=128}$ } \\ \hline
			4	&	\text{ResBlock, $\mathtt{ch=128}$ }\\ \hline
			5	&	\text{ReLU, Global sum pooling} \\ \hline
			6	&	\text{SN Linear $1$ output} \\
			\bottomrule
		\end{tabular}
	\end{minipage}
	\hspace{0.5cm}
	\begin{minipage}[t]{.45\textwidth}	
		\centering
		\subcaption{Generator}
		\begin{tabular}{l} 	
			\text{Configuration per Layer} \\ \hline
			\text{SN Linear $4\times 4\times 128$ output} \\\hline
			\text{ResBlock, up, $\mathtt{ch=128}$}   \\\hline
			\text{ResBlock, up, $\mathtt{ch=128}$}   \\\hline
			\text{Non-local Block ($16\times 16$)}  \\\hline
			\text{ResBlock, up, $\mathtt{ch=128}$}   \\\hline
			\text{BN, RELU}\\\hline
			\text{$3\times 3$ stride $1$ SN Conv. $\mathtt{ch=3}$, Tanh}   \\
			\bottomrule
		\end{tabular}
	\end{minipage}	
\end{table*}

\begin{table*}[t!]
	\centering
	\caption{SA GAN for CELEBA-HQ.\label{tab_sagancelebahq}}\hspace{-0.5cm}
	\begin{minipage}[t]{.45\textwidth}
		\centering	
		\subcaption{Discriminator}
		\begin{tabular}{ll} 	
			\#	&	\text{Configuration per Layer} \\ \hline
			0	&	\text{$3\times3$ stride $1$ SN Conv, $\mathtt{ch=64}$ } \\ \hline
			1	&	\text{ResBlock, down, $\mathtt{ch=128}$ } \\ \hline
			2	&	\text{ResBlock, down, $\mathtt{ch=128}$ } \\ \hline
			3	&	\text{Non-Local Block ($32\times 32$)} \\ \hline
			4	&	\text{ResBlock, down, $\mathtt{ch=256}$ }  \\\hline
			5	&	\text{ResBlock, down, $\mathtt{ch=256}$ }  \\\hline
			6	&	\text{ResBlock, down, $\mathtt{ch=512}$ } \\ \hline
			8	&	\text{ReLU, Global sum pooling} \\ \hline
			9	&	\text{SN Linear 1 output} \\
			\bottomrule
		\end{tabular}
	\end{minipage}
	\hspace{0.5cm}
	\begin{minipage}[t]{.45\textwidth}	
		\centering
		\subcaption{Generator}
	\begin{tabular}{l} 	
		\text{Configuration per Layer} \\ \hline
		\text{SN Linear $4\times 4\times 512$ output} \\\hline
		\text{ResBlock, up, $\mathtt{ch=512}$}   \\\hline
		\text{ResBlock, up, $\mathtt{ch=256}$}   \\\hline
		\text{ResBlock, up, $\mathtt{ch=256}$}   \\\hline
		\text{Non-local Block ($32\times 32$)}  \\\hline
		\text{ResBlock, up, $\mathtt{ch=128}$}   \\\hline				
		\text{ResBlock, up, $\mathtt{ch=64}$}   \\\hline
		\text{BN, RELU}   \\\hline
		\text{$3\times 3$ stride $1$ SN Conv. $\mathtt{ch=3}$, Tanh}   \\
		\bottomrule
	\end{tabular}
	\end{minipage}	
\end{table*}

\begin{table*}[t!]
	\centering
\caption{SA GAN for Tiny-IMAGENET.\label{tab_sagantiny}}\hspace{-0.5cm}
	\begin{minipage}[t]{.45\textwidth}
		\centering	
		\subcaption{Discriminator}
	\begin{tabular}{ll} 	
		\#	&	\text{Configuration per Layer} \\ \hline
		1	&	\text{ResBlock, down, $\mathtt{ch=64}$ } \\ \hline
		2	&	\text{Non-Local Block ($32\times 32$)} \\ \hline
		3	&	\text{ResBlock, down, $\mathtt{ch=128}$ } \\ \hline		
		4	&	\text{ResBlock, down, $\mathtt{ch=256}$ }  \\\hline
		5	&	\text{ResBlock, down, $\mathtt{ch=512}$ }  \\\hline
		6	&	\text{ResBlock, $\mathtt{ch=512}$ } \\ \hline
		8	&	\text{ReLU, Global sum pooling} \\ \hline
		9	&	\text{SN Linear 1 output} \\
		\bottomrule
	\end{tabular}
	\end{minipage}
	\hspace{0.5cm}
	\begin{minipage}[t]{.45\textwidth}	
		\centering
		\subcaption{Generator}
		\begin{tabular}{l} 	
			\text{Configuration per Layer} \\ \hline
			\text{SN Linear $4\times 4\times 512$ output} \\\hline
			\text{ResBlock, up, $\mathtt{ch=512}$}   \\\hline
			\text{ResBlock, up, $\mathtt{ch=256}$}   \\\hline
			\text{ResBlock, up, $\mathtt{ch=128}$}   \\\hline
			\text{Non-local Block ($32\times 32$)}  \\\hline
			\text{ResBlock, up, $\mathtt{ch=64}$}   \\\hline		
			\text{BN, RELU}   \\\hline
			\text{$3\times 3$ stride $1$ SN Conv. $\mathtt{ch=3}$, Tanh}   \\
			\bottomrule
		\end{tabular}
	\end{minipage}	
\end{table*}

\subsection{Network Training Details}\label{subsec:algsettings}
The training details across the datasets (i.e., F-MNIST, CIFAR10, CELEBA-HQ and T-ImageNet) and architectures (i.e., SN DCGAN, and SA GAN) are summarized in Table~\ref{table:hypersetting4all}. %
For both architectures, the decay rate of the (s)BNs at the generator is set to $0.9$. During the evaluation phase, the generator uses the moving averaged mean and variance to produce synthetic samples, thereby being independent of batch size.

\subsection{Other Hyperparameter Settings}\label{subsec:regsettings}

\paragraph{Comparison with SotA on Human Face Synthesis.}
For CELEBA $(64\times64)$, we used the same network architecture as T-ImageNet. This network is not as tailored as PG-GAN~\cite{karras2018progressive} and COCO-GAN~\cite{lin2019cocogan} for human face synthesis. Unlike the other experiments, we followed the FID evaluation of COCO-GAN~\cite{lin2019cocogan} for the sake of fair comparison. The augmentation space is at $\mathtt{feat_{8}}$ (the input of the $4$th residual block). The hyperparameter setting for the $D$ and $G$ optimizers is: $\mathtt{lr_d}=0.0004$, $\mathtt{lr_g}=0.0001$, $\beta_1=0$, $\beta_2=0.999$, $\mathtt{iter_d/iter_g}=1$ and $1$\unit{m} training iterations.
\paragraph{Regularization Techniques in Table~3}
In Sec.~4 of the main paper, we have experimented with a diverse set of regularization techniques and reported the FIDs in Table~3. Their settings are as follows:

For $\mathtt{Label\text{ }smooth.}$, we followed the one-side label smoothing presented in~\cite{SalimansNIPS2016} smoothing the positive labels from $1$ to $0.9$ and leaving the negative ones to $0$ in the binary classification task of the discriminator.

The $\mathtt{GP}$ from~\cite{gulrajani_NIPS2017} and the zero-centered alternative $\mathtt{GP_{zero\text{-}cent}}$ from~\cite{Roth_NIPS2017} are implemented by exploiting the publicly available code in \url{https://github.com/igul222/improved_wgan_training} and \url{https://github.com/rothk/Stabilizing_GANs}. The weighting parameter for $\mathtt{GP}$ and $\mathtt{GP_{zero\text{-}cent}}$ is respectively set to $1$ and $0.1$ as suggested by \cite{Kurach2018GANlandscape,Roth_NIPS2017}. 

When combining $\mathtt{GP}$ with $\mathtt{PA}$, we adjust its weighting factor whenever kicking off a new augmentation level, namely, gradually increasing the weighting factor from zero to its original value within $5$\unit{k} iterations. This is mainly because the new bit can flip the reference label. Such relaxation on the 1-Lipschitz constraint allows the discriminator to timely cope with the new augmentation bit. Using $\beta_2=0.99$ instead of $\beta_2=0.9$ stabilizes the training on SA GAN.

For $\mathtt{Dropout}$, we experimented with different keep rates and applied layers. From Table~\ref{table:dropout}, we selected the best performing setting of the $\mathtt{Dropout}$ with the keep rate $0.7$ applied on the feature space with the spatial dimension $N/4$. 

For $\mathtt{SS}$, we used the same mini-batch construction as in~\cite{ChenSS2019} for computing the auxiliary rotation loss. The rotation loss is respectively added to the $D$ and $G$ loss with the weighting factors equal to $1.0$ and $0.2$ as suggested by~\cite{ChenSS2019}. The augmentation bits does not affect the reference label when constructing the rotation loss.

\paragraph{WGAN-GP}
In Sec.~\ref{subsec:wgan}, we additionally trained CIFAR10 on SN DCGAN with WGAN-GP. The learning rates $\mathtt{lr_d}$ and $\mathtt{lr_g}$ remain the same as that of NS loss, i.e., $2\times 10^{-4}$, but with two discriminator steps per generator step. The two momentum parameters for the Adam optimizer change to $\beta_1 =0$ and $\beta_2=0.9$. The GP is weighted by one.

\begin{table*}[t!]
	\setlength{\tabcolsep}{0.4em} 
	\renewcommand{\arraystretch}{1.1}
	\centering
	\caption{Training details for the experiments in this work.}		\vspace{0.5em}
	\label{table:hypersetting4all}
	\begin{tabular}{c|ccc|ccc} 
		\rowcolor{verylightgray}

		 &  \multicolumn{3}{c|}{\footnotesize{}{\text{$\mathtt{SN\text{ }DCGAN\text{ }NS\text{ }Loss}$}}}&  \multicolumn{3}{c}{\footnotesize{}{\text{$\mathtt{SA\text{ }GAN\text{ }(sBN)\text{ }Hinge\text{ }Loss}$}}}  \tabularnewline 	
		
		\rowcolor{verylightgray}
		
		\multirow{-2}{*}{\footnotesize{}{\text{$\mathtt{Hyper\text{-}parameters}$}}} & \footnotesize{}{$\mathtt{F\text{-}MNIST}$} & \footnotesize{}{$\mathtt{CIFAR10}$} & \footnotesize{}{$\mathtt{CELEBA\text{-}HQ}$}& \footnotesize{}{$\mathtt{CIFAR10}$} & \footnotesize{}{$\mathtt{CELEBA\text{-}HQ}$} & \footnotesize{}{$\mathtt{T\text{-}IMAGENET}$}\tabularnewline 	
		\text{$\beta_1$}  & \text{$0.5$} & \text{$0.5$} & \text{$0.5$}  & \text{$0.0$} & \text{$0.0$} & \text{$0.0$} \tabularnewline 
		\text{$\beta_2$}  &  \text{$0.999$} & \text{$0.999$}  &  \text{$0.999$} & \text{$0.9$}  &  \text{$0.9$} &  \text{$0.9$} \tabularnewline                                                   
		\text{$\mathtt{lr_d}$}  &  \text{$ 10^{-4}$} & \text{$2\times 10^{-4}$} & \text{$2\times 10^{-4}$}  & \text{$3\times10^{-4}$} & \text{$3\times 10^{-4}$}& \text{$3\times 10^{-4}$}\tabularnewline          
		\text{$\mathtt{lr_g}$}  &  \text{$4\times 10^{-4}$} & \text{$2\times 10^{-4}$} & \text{$2\times 10^{-4}$}& \text{$10^{-4}$} & \text{$10^{-4}$}& \text{$10^{-4}$}\tabularnewline    
		\text{$\mathtt{iter_d/iter_g}$} &  \text{$1$} & \text{$1$} & \text{$1$}  & \text{$1$} & \text{$1$}	& \text{$1$}
	\end{tabular}	
	\vspace{-1em}
\end{table*}

\end{document}